\def\eqref#1{equation~\ref{#1}}
\def\1{\bm{1}}
\DeclareMathAlphabet{\mathsfit}{\encodingdefault}{\sfdefault}{m}{sl}
\SetMathAlphabet{\mathsfit}{bold}{\encodingdefault}{\sfdefault}{bx}{n}
\newcommand{\E}{\mathbb{E}}
\newcommand{\R}{\mathbb{R}}
\newcommand{\calL}{\mathcal{L}}
\newcommand{\calN}{\mathcal{N}}
\newcommand{\calO}{\mathcal{O}}
\newcommand{\calP}{\mathcal{P}}
\newcommand{\calU}{\mathcal{U}}
\newcommand{\calX}{\mathcal{X}}
\newcommand{\nset}{\mathbb{N}}
\newcommand{\nsets}{\mathbb{N}^*}
\newcommand{\setX}{\mathsf{X}}
\newcommand{\pinf}{+\infty}
\newcommand{\iid}{i.i.d.\xspace}
\newcommand{\vMF}{\text{vMF}}
\newcommand{\SW}{\text{SW}}
\newcommand{\swdp}[3]{\text{SW}_p^p({#1},{#2};{#3})}
\newcommand{\swd}[3]{\text{SW}_p({#1},{#2};{#3})}
\newcommand{\wdp}[2]{\text{W}_p^p({#1},{#2})}
\newcommand{\wdone}[2]{\text{W}_1({#1},{#2})}
\newcommand{\swdone}[3]{\text{SW}_1({#1},{#2};{#3})}
\newcommand{\kld}[2]{\text{KL}({#1}||{#2})}
\newcommand{\Sphere}{\mathbb{S}^{d-1}}
\newcommand{\dd}{\mathrm{d}}
\newcommand{\eqsp}{\;}
\newcommand{\ie}{\textit{i.e.}}
\newcommand{\thss}{\theta^*_\sharp}
\newcommand{\ps}[2]{\left\langle#1,#2 \right\rangle}
\newcommand\argsup{\mathrm{arg\,sup}}
\newcommand{\Deltasg}{\Delta_{\text{SG}}}
\newtheorem{theorem}{Theorem}
\crefname{theorem}{theorem}{Theorems}
\Crefname{Theorem}{Theorem}{Theorems}
\newtheorem{proposition}{Proposition}
\crefname{proposition}{proposition}{propositions}
\Crefname{Proposition}{Proposition}{Propositions}
\newtheorem{lemma}{Lemma}
\crefname{lemma}{lemma}{lemmas}
\Crefname{Lemma}{Lemma}{Lemmas}
\newtheorem{corollary}{Corollary}
\crefname{corollary}{corollary}{corollaries}
\Crefname{Corollary}{Corollary}{Corollaries}
\newtheorem{definition}{Definition}
\crefname{definition}{definition}{definitions}
\Crefname{Definition}{Definition}{Definitions}
\newtheorem{remark}{Remark}
\crefname{remark}{remark}{remarks}
\Crefname{Remark}{Remark}{Remarks}
\newenvironment{equationsize*}[1]{%
  \skip@=\baselineskip 
  #1%
  \baselineskip=\skip@ 
  \equation
}{\nonumber\endequation \ignorespacesafterend} 
\newenvironment{alignsize*}[1]{%
  \skip@=\baselineskip 
  #1%
  \baselineskip=\skip@ 
  \start@align\@ne\st@rredtrue\m@ne
}{\endalign\ignorespacesafterend} 
\renewcommand{\eqref}[1]{(\ref{#1})}
\icmltitlerunning{Shedding a PAC-Bayesian Light on Adaptive Sliced-Wasserstein Distances}
\begin{document}

\twocolumn[
\icmltitle{Shedding a PAC-Bayesian Light on Adaptive Sliced-Wasserstein Distances}



\icmlsetsymbol{equal}{*}

\begin{icmlauthorlist}
\icmlauthor{Ruben Ohana}{equal,yyy}
\icmlauthor{Kimia Nadjahi}{equal,comp}
\icmlauthor{Alain Rakotomamonjy}{sch}
\icmlauthor{Liva Ralaivola}{sch}
\end{icmlauthorlist}

\icmlaffiliation{yyy}{Flatiron Institute, USA}
\icmlaffiliation{comp}{MIT, USA}
\icmlaffiliation{sch}{Criteo AI Lab, France}

\icmlcorrespondingauthor{RO}{rohana@flatironinstitute.org}
\icmlcorrespondingauthor{KN}{knadjahi@mit.edu}

\icmlkeywords{Sliced Optimal Transport, PAC-Bayesian learning, optimization}

\vskip 0.3in
]
\printAffiliationsAndNotice{\icmlEqualContribution}



%


\begin{abstract}
The Sliced-Wasserstein distance (SW) is a computationally efficient and theoretically grounded alternative to the Wasserstein distance. Yet, the literature on its statistical properties -- or, more accurately, its {\em generalization} properties -- with respect to the distribution of slices, beyond the uniform measure, is scarce. To bring new contributions to this line of research, we leverage the PAC-Bayesian theory and a central observation that SW may be interpreted as an average risk, the quantity PAC-Bayesian bounds have been designed to characterize. We provide three types of results: i) PAC-Bayesian generalization bounds that hold on what we refer as {\em adaptive} Sliced-Wasserstein distances, i.e. SW defined with respect to arbitrary distributions of slices (among which data-dependent distributions), ii) a principled procedure to learn the distribution of slices that yields maximally discriminative SW, by optimizing our theoretical bounds, and iii) empirical illustrations of our theoretical findings.
\end{abstract}

\section{Introduction} \label{sec:intro}
The Wasserstein distance is a metric between probability distributions and a key notion of the optimal transport framework \citep{villani2009optimal,peyre2019computational}. Over the past years, it has received a lot of attention from the machine learning community because of its theoretical grounding and the increasing number of problems relying on the computation of distances between measures \citep{solomon2014wasserstein, frogner2015learning, montavon2016wasserstein, kolouri2017optimal, courty2016optimal, schmitz2018wasserstein}, such as the learning of deep generative models \citep{arjovsky2017wasserstein, bousquet2017optimal,tolstikhin2017wasserstein}. As the measures $\mu$ and $\nu$ to be compared are usually unknown, the Wasserstein distance $W(\mu,\nu)$ is estimated through an ``empirical'' version $W(\mu_n,\nu_n)$, where $\mu_n\doteq\{x_1,\ldots, x_n\}$ and $\nu_n\doteq\{y_1,\ldots,y_n\}$ are \iid samples from $\mu$ and $\nu$, respectively (without loss of generality, samples will be assumed to have the same size $n$). Due to its unfavorable $\calO(n^3\log n)$ computational complexity, the Wasserstein distance scales badly on large datasets \citep{peyre2019computational} and alternatives have been devised to overcome this limitation, such as the Sinkhorn algorithm \citep{cuturi2013sinkhorn,cuturi2016smoothed}, multi-scale \citep{oberman2015efficient} or sparse approximations approaches \citep{schmitzer2016sparse}. 

The Sliced-Wasserstein distance (SW) \citep{rabin2012wasserstein} is another computationally efficient alternative, which takes advantage of the closed-form and fast computation of the one-dimensional Wasserstein distance. For $d$-dimensional ($d>1$) samples $\{x_1,\ldots, x_n\}$ and $\{y_1,\ldots, y_n\}$,  the computation of $\SW(\mu_n,\nu_n)$ is done by uniformly sampling $m$ \emph{projection directions} $\{\theta_1,\ldots,\theta_m\}$ and by averaging the $m$ one-dimensional Wasserstein distances $W(\{\langle \theta_j, x_1\rangle,\ldots,\langle \theta_j, x_n\rangle\},\{\langle \theta_j, y_1\rangle,\ldots,\langle \theta_j, y_n\rangle\})$ for $j=1,\ldots,m$. SW has been analyzed theoretically \citep{Bonnotte2013,nadjahi2019asymptotic,bayraktar2019strong,nadjahi2020statistical}, refined to gain additional efficiency \citep{nadjahi2021fast} and to handle ``nonlinear'' projections \citep{kolouri2019generalized, kolouri2020generalized}, and it has been successfully used in a variety of machine learning tasks \citep{bonneel2015sliced, kolouri2016sliced, carriere2017sliced, liutkus2019sliced, deshpande2018generative, kolouri2018sliced, kolouri2019sliced, nadjahi2020approximate, bonet2021sliced, rakotomamonjy2021differentially}.  A direction to yet improve SW consists in adapting $\rho$, the distribution of $\{\theta_i\}_{i=1}^m$ in a data-dependent manner, as done by \emph{maximum SW} (max-SW, \cite{deshpande2019max}), which aims at finding a unique slice $\theta_{\star}$ (or equivalently, the Dirac measure $\delta_{\theta_{\star}}$) that maximizes the one-dimensional Wasserstein distance, or \emph{distributional SW} (DSW) \cite{nguyen2021distributional}, which seeks for a maximally discriminative distribution on the unit sphere. These works fall into the class of what we refer as {\em adaptive Sliced-Wasserstein distances} and denote $\SW(\cdot,\cdot;\rho)$,  overloading the $\SW(\cdot,\cdot)$ notation to make explicit the dependence on $\rho$. 

A question of interest in adaptive SW, which has not been explicitly addressed in previous work, is whether one can learn a distribution $\rho^\star(\mu_n, \nu_n)$ from training data, such that $\swdp{\mu}{\nu}{\rho^\star(\mu_n, \nu_n)}$ is guaranteed to be highly discriminative. In our work, we address this problem by measuring the ``generalization'' gap between $\swdp{\mu_n}{\nu_n}{\rho}$ and $\swdp{\mu}{\nu}{\rho}$. Bounds on this gap can be derived from existing results for max-SW \citep{lin2021,weed2022}.
However, it is unclear how these bounds are able to accommodate distributions $\rho$ that are not reduced to Dirac measures.
To go that direction, we propose the first connection between adaptive SW and \emph{PAC-Bayesian theory} and we derive a novel set of flexible PAC-Bayesian generalization bounds. Our bounds state that with probability $1-\delta$, the following holds for all measures (with non-discrete support) $\rho$ on the $d$-dimensional unit sphere: $\SW(\mu,\nu;\rho)\geq \SW(\mu_n,\nu_n,\rho) - \varepsilon(n,\rho,\delta)$,
where $\varepsilon$ can be written explicitly and captures the properties of $\mu, \nu$, and allows us to control the tightness of the bound via $\rho$.

Three key reasons make the PAC-Bayesian theory \citep{mcallester1999some,catoni07pacbayesian,alquier2021userfriendly} particularly suited to characterize the generalization properties of adaptive SW. First, from a general perspective, the literature shows this framework allows the derivation of tight bounds that can be converted into effective learning procedures \citep{Ambroladze06Tighter,laviolette06pacbayes,Germain09pacbayesian,zantedeschi21stochastic}. Second, PAC-Bayesian bounds deal with the generalization ability of learned distributions; while those distributions usually lie on spaces of predictors, the distributions $\rho$ of interest in our case are the distributions of slices. Lastly, a key quantity of PAC-Bayesian bounds is the \emph{average empirical risk} which, as we will show, can naturally be interpreted as $\swdp{\mu_n}{\nu_n}{\rho}$, our main focus.

The paper is organized as follows. In \Cref{sec:background}, we recall essential notions of Sliced-Wasserstein distances and PAC-Bayesian theory. We then delve into our contributions: \textit{i)} a generic PAC-Bayesian bound for adaptive Sliced-Wasserstein distances and refinements to specific settings (\Cref{sec:pacbound}), \textit{ii)} a theoretically-grounded procedure to train a maximally discriminative Sliced-Wasserstein distances (\Cref{sec:applications})  and \textit{iii)} illustrations of the soundness of our theoretical results through numerical experiments, conducted on both toy and real-world datasets (\Cref{sec:expes}). 

\textbf{Notations.}\; Let $d \in \nsets$ with $\nsets \doteq \nset \backslash \{0\}$. For $x, y \in \R^d$, $\ps{x}{y}$ denotes the dot product between $x$ and $y$, and $\| x \|$ is the Euclidean norm of $x$. For $\mathsf{X}\subset\R^d$, $\calP(\mathsf{X})$ is the set of probability measures supported on $\mathsf{X}$, and $\calP_q(\mathsf{X})$ is the set of probability measures supported on $\mathsf{X}$ with finite moment of order $q$. $\calU(\mathsf{X})$ is the uniform distribution on $\mathsf{X}$, and $\updelta_x$ is the Dirac measure with mass on $x \in \mathsf{X}$.  For $\mu \in \calP(\mathsf{X})$ and $n \in \nsets$, $\mu_n \doteq \frac1n \sum_{i=1}^n \updelta_{x_i}$ is the empirical measure supported on $n$ samples $\{x_1,\ldots,x_n\}$ \iid~from $\mu$. For $\mu \in \calP(\R)$, $F_{\mu}$ is the cumulative distribution function (c.d.f.) of $\mu$ and $F^{-1}_{\mu}$ is its quantile function.

\section{Background}
\label{sec:background}

\subsection{Sliced-Wasserstein Distances} \label{subsec:sw}

Sliced-Wasserstein distances refer to a family of distances between probability measures, which was first introduced by \cite{rabin2012wasserstein} to overcome the computational issues of the Wasserstein distance. We formally define the Wasserstein distance and SW, and explain why the latter can provide significant computational advantages over the former. In what follows, we fix $\mathsf{X} \subset \R^d$.


\begin{definition}[Wasserstein distance]
    Let $p \in [1, \pinf)$. The Wasserstein distance of order $p$ between $\mu, \nu \in \calP(\mathsf{X})$ is
    \begin{equation} \label{eq:wasserstein distance}
        \mathrm{W}_p^p(\mu,\nu) \doteq \underset{\pi\in\Pi(\mu,\nu)}{\inf}\int_{\mathsf{X} \times \mathsf{X}}\|x-y\|^p\dd\pi(x,y) \,,
    \end{equation} 
    where $\Pi(\mu, \nu)\subset\calP(\mathsf{X}\times\mathsf{X})$ denotes the set of probability measures on $\mathsf{X} \times \mathsf{X}$, whose marginals with respect to the first and second variables are $\mu$ and $\nu$ respectively.
\end{definition}

While $\textrm{W}_p$ has been shown to possess appealing theoretical properties, e.g. it is a metric on $\calP_p(\mathsf{X})$ which \emph{metrizes} the weak convergence \citep[Chapter 6]{villani2009optimal}, it is computationally too demanding in general. Indeed, consider two discrete distributions $\mu_n, \nu_n$, each supported on $n$ samples. Computing $\text{W}_p(\mu_n, \nu_n)$ means solving a linear program \citep[Section 3.1]{peyre2019computational}, whose solution is not analytically available in general, but can be approximated with standard solvers from linear programming and combinatorial optimization. However, such methods have a super-cubic cost in practice, and their worst-case computational complexity scales in $\calO(n^3 \log n )$. 

Nevertheless, if $\mu, \nu \in \calP(\R)$, $\text{W}_p(\mu, \nu)$ admits an analytical expression which can be efficiently approximated \citep[Section 2.6]{peyre2019computational}: for any $\mu, \nu \in \calP(\R)$,
\begin{equation} \label{eq:wasserstein_1D_continuous}
    \wdp{\mu}{\nu} = \int_0^1|F^{-1}_\mu (t)-F^{-1}_\nu (t)|^p \dd t \,.
\end{equation}
In particular, for $\mu_n = (1/n)\sum_{i=1}^n \updelta_{x_{i}}$ and $\nu_n = (1/n)\sum_{i=1}^n \updelta_{y_{i}}$ such that, $\forall i \in \{1, \dots, n\}$, $x_i, y_i \in \R$, 
\begin{equation} \label{eq:wasserstein_1D_discrete}
    \wdp{\mu_n}{\nu_n} = \frac1n \sum_{i=1}^n | x_{(i)} - y_{(i)} |^p \,,
\end{equation}
where $x_{(1)} \leq x_{(2)} \leq \dots \leq x_{(n)}$, $y_{(1)} \leq y_{(2)} \leq \dots \leq y_{(n)}$. Computing \eqref{eq:wasserstein_1D_discrete} thus consists in sorting the support points of $\mu_n$ and $\nu_n$, which induces $\calO(n\log n)$ operations.

Sliced-Wasserstein distances leverage the fast computation of $\text{W}_p(\mu, \nu)$ for any $\mu, \nu \in \calP(\R)$ to efficiently compare distributions supported on medium to high-dimensional spaces. Their formal characterization is given below.

\begin{definition}[Sliced-Wasserstein distance] \label{def:sw}
Let $\Sphere \doteq \{\theta \in \R^d~:~\|\theta\| = 1\}$ be the unit sphere in $\R^d$. For $\theta \in \Sphere$, denote by $\theta^* : \R^d \to \R$ the linear map such that for $x \in \R^d$, $\theta^*(x) \doteq \ps{\theta}{x}$. Let $p \in [1, \pinf)$ and $\rho \in \calP(\Sphere)$. The \emph{Sliced-Wasserstein distance of order $p$ based on $\rho$} is defined for $\mu, \nu \in \calP(\mathsf{X})$ as

\begin{equation} \label{eq:def_sw}
    \mathrm{SW}_p^p(\mu,\nu;\rho)
    \doteq \int_{\Sphere} \wdp{\thss\mu}{\thss\nu} \dd\rho(\theta) \,,
\end{equation}
where for any measurable function $f$ and $\xi \in \calP(\R^d)$, $f_\sharp \xi$ is the {\em push-forward measure} of $\xi$ by $f$: for any measurable set $\textsc{A} \subset \R$, $f_\sharp\xi(\textsc{A}) \doteq \xi(f^{-1}(\textsc{A}))$, $f^{-1}(\textsc{A}) \doteq \{x\in\R^d : f(x)\in\textsc{A}\}$.
\end{definition}

\textbf{Computational complexity of SW.}\; By \eqref{eq:def_sw}, $\swdp{\mu}{\nu}{\rho}$ is obtained by computing $\E[\wdp{\thss \mu}{\thss \nu}]$ with $\E$ taken over $\theta \sim \rho$. This expectation is intractable in general, and commonly approximated with the Monte Carlo estimate  \vspace{-4mm}
\begin{equation} \label{eq:swmc_estimate}
    \widehat{\text{SW}}^p_p(\mu, \nu ; \rho) = \frac1m \sum_{j=1}^m \wdp{(\theta_j)^*_\sharp \mu}{(\theta_j)^*_\sharp \nu} \,,
\end{equation}
where $\{\theta_j\}_{j=1}^m$ are \iid~samples from $\rho$. Note that for $\theta \in \Sphere$, $\thss \mu$ and $\thss \nu$ are one-dimensional probability measures, which can be interpreted as projections of $\mu$ and $\nu$ along $\theta$. To illustrate this, consider $\mu_n = (1/n) \sum_{i=1}^n \updelta_{x_i}$ with $x_i \in \R^d$ for $i \in \{1, \dots, n\}$. By definition, $\thss\mu_n = (1/n) \sum_{i=1}^n \updelta_{\ps{\theta}{x_i}}$. Therefore, computing \eqref{eq:swmc_estimate} between $\mu_n$ and $\nu_n$ amounts to projecting $\{x_i\}_{i=1}^n$ and $\{y_i\}_{i=1}^n$ along $\theta_j \sim \rho$, then computing the one-dimensional Wasserstein distance using \eqref{eq:wasserstein_1D_discrete}, for $j \in \{1, \dots, m\}$. This scheme requires $\calO\big(m(dn + n \log n)\big)$ operations which is, in general, faster than computing $\wdp{\mu_n}{\nu_n}$, especially for large $n$.

\textbf{Theoretical properties of SW.}\; Previous works have investigated theoretical properties of $\swdp{\cdot}{\cdot}{\rho}$, to explain its empirical success \citep{Bonnotte2013,bayraktar2019strong,nadjahi2019asymptotic,lin2021,nguyen2021distributional}. However, most results apply to $\rho = \calU(\Sphere)$ only (which corresponds to the original definition of SW, \cite{rabin2012wasserstein}). In particular, whether \eqref{eq:def_sw} is a metric for any $\rho$ has not been established: we show in \Cref{appendix:metric_sw} that $\swdp{\cdot}{\cdot}{\rho}$ is always a pseudo-metric, and we discuss for which choices of $\rho$ it satisfies all metric axioms.

\textbf{Adaptive SW.}\; 
Recent works have argued that the uniform distribution may not be the most relevant choice, depending on the task at hand. Instead, they proposed to learn $\rho$ from the observed data. This strategy provides $\swdp{\cdot}{\cdot}{\rho}$ with an actual degree of freedom $\rho$, and motivates the term {\em adaptive} Sliced-Wasserstein distance. Specifically, \cite{deshpande2019max} and \cite{nguyen2021distributional} solve a tailored optimization problem in $\rho$ targetting a high discriminative power of $\rho$, in the sense that $\rho$ puts more mass on the $\theta \in \Sphere$ that maximize the separation of $\thss\mu$ and $\thss\nu$. The \emph{maximum Sliced-Wasserstein distance} (max-SW, \cite{deshpande2019max}) is defined as  \vspace{-1mm}
\begin{align}
   \text{maxSW}(\mu,\nu) &\doteq \swdp{\mu}{\nu}{\rho^\star_{\text{maxSW}}(\mu,\nu)} \label{eq:maxsw} \\  
   \text{with \quad} \rho^\star_{\text{maxSW}}(\mu,\nu) &\doteq \underset{\updelta_\theta:\,\theta \in \Sphere}{\argsup} \swdp{\mu}{\nu}{\updelta_\theta}\,, \label{eq:argsup_maxsw}
\end{align}
while the \emph{distributional Sliced-Wasserstein distance} (DSW, \cite{nguyen2021distributional}) is given by
\begin{align}
   \text{DSW}(\mu,\nu) &\doteq \swdp{\mu}{\nu}{\rho^\star_{\text{DSW}}(\mu,\nu)} \label{eq:dsw} \\
   \text{with \quad} \rho^\star_{\text{DSW}}(\mu,\nu)&\doteq\underset{\underset{\E_{\theta,\theta'\sim\rho}|\theta^\top \theta'| \leq C}{\rho \in \calP(\Sphere),} }{\argsup} \swdp{\mu}{\nu}{\rho} \label{eq:argsup_dsw}
\end{align}
\vspace{-1.5em}

where in~\eqref{eq:argsup_dsw}, $\theta$ and $\theta'$ are independent and $C > 0$ is a hyperparameter. We have decoupled the search for the maximizing distances~\eqref{eq:maxsw},\eqref{eq:dsw} and the maximum arguments~\eqref{eq:argsup_maxsw},\eqref{eq:argsup_dsw} for reasons we clarify below.

While there exist statistical guarantees on the gap between $\text{maxSW}(\mu,\nu)$ and $\text{maxSW}(\mu_n,\nu_n)$ \citep{lin2021,weed2022} (or between $\text{DSW}(\mu,\nu)$ and $\text{DSW}(\mu_n,\nu_n)$ \cite{nguyen2021distributional}), there is no explicit theoretical argument on the error entailed by the learned distribution $\rho^\star_{\text{maxSW}}(\mu_n,\nu_n)$ (or $\rho^\star_{\text{DSW}}(\mu_n,\nu_n)$) considered on its own, outside the optimization procedure of max-SW (or DSW).
Given new samples $\{x_1',\ldots,x_n'\}$ and $\{y_1',\ldots,y_n'\}$ from $\mu$ and $\nu$, with empirical distributions $\mu'_n$ and $\nu'_n$, there is no guarantee for $\swdp{\mu'_n}{\nu'_n}{\rho^\star_{\text{maxSW}}(\mu_n,\nu_n)}$ to be high, or in other words, there is no argument ensuring the discriminative power of $\rho^\star_{\text{maxSW}}(\mu_n,\nu_n)$. 
One way to palliate this lack of theory and to go one step further than the max-SW and DSW cases, is to derive general results relating $\swdp{\mu_n}{\nu_n}{\rho}$ and $\swdp{\mu}{\nu}{\rho}$, for families of distributions $\rho\in\calP(\Sphere)$. This is what we bring in the present work in the form of a generalization bound rooted in the PAC-Bayesian theory.

\subsection{PAC-Bayesian Theory} \label{subsec:background_pacbayes}

PAC-Bayesian theory aims at assessing the ability of learning algorithms to generalize to unseen data, by deriving \emph{generalization bounds}. 
Let $\calX \subset \R^q$, $q \in \nsets$, and $S_n \doteq \{z_i\}_{i=1}^n$ a dataset of \iid~samples from an unknown probability measure $\xi \in \calP(\calX)$. Consider a learning algorithm whose outputs depend on the training data $S_n$ and a vector of parameters $\omega \in \Omega$. The performance of such algorithm can be assessed via a {\em loss function} $\ell : \Omega \times \calX \to \R_+$. Fix $\omega \in \Omega$. The \emph{empirical $\ell$-risk} $\hat{r}_\ell(\omega, S_n)$ and \emph{true $\ell$-risk} $r_\ell(\omega)$ are defined as, \vspace{-3mm}
\begin{align}
    \hat{r}_\ell(\omega, S_n) &\doteq \frac1n \sum_{i=1}^n \ell(\omega, z_i) \label{eq:empiricalrisk} \\
    r_\ell(\omega) &\doteq \E_{z \sim \xi}[\ell(\omega, z)] \label{eq:theoreticalrisk}
\end{align}
A key objective of a learning procedure is to optimize (e.g. minimize) the true risk \eqref{eq:theoreticalrisk}, which in practice cannot be achieved, because $\xi$ is unknown. Instead, one focuses on optimizing \eqref{eq:empiricalrisk} over $\omega \in \Omega$, a sound strategy provided the minimizer of \eqref{eq:empiricalrisk} accurately estimates the minimizer of \eqref{eq:theoreticalrisk}: this can be assessed via PAC-Bayesian bounds. 

Let $\rho \in \calP(\Omega)$. PAC-Bayesian theory analyzes the generalization ability of $\rho$ by measuring the gap between the \emph{average empirical $\ell$-risk} $\E_{\omega\sim\rho}[\hat{r}_\ell(\omega, S_n)]$ and the \emph{average true $\ell$-risk} $\E_{\omega\sim\rho}[r_\ell(\omega)]$. A classical PAC-Bayesian bound was derived by \cite{catoni03pacbayesian} and is recalled below.

\begin{theorem}[\cite{catoni03pacbayesian}]
\label{th:catoni}
Let $\rho_0 \in\calP(\Omega)$ be a prior distribution. Assume that $0 \leq \ell \leq C$. For all $\lambda>0$, for any $\delta\in(0,1)$, the following holds with probability at least $1 - \delta$ (over the draw of the dataset $S_n$): $\forall \rho\in\mathcal{P}(\Omega)$,
\begin{align}
    &\E_{\omega \sim \rho}[r_{\ell}(\omega)] \label{eq:catoni bound}\\
    &\leq \E_{\omega \sim \rho}[\hat{r}_{\ell}(\omega, S_n)] + \frac{\lambda C^2}{8n} + \frac1{\lambda} \left\{ \kld{\rho}{\rho_0} +\log\frac{1}{\delta} \right\} \,, 
\end{align}
where $\kld{\rho}{\rho_0}$ is the \emph{Kullback-Leibler divergence} between $\rho$ and $\rho_0$\,: if $\rho$ is absolutely continuous with respect to $\rho_0$, $\kld{\rho}{\rho_0} \doteq \int\log\left(\rho(\dd \theta) / \rho_0(\dd\theta) \right)\rho(\dd\theta)$.
\end{theorem}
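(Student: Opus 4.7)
The plan is to follow the now-standard three-ingredient recipe for PAC-Bayesian bounds of Catoni type: (i) pointwise control of the exponential moment of the empirical deviation via Hoeffding's lemma, (ii) an application of Markov's inequality to move to a high-probability statement, and (iii) a change-of-measure argument (Donsker--Varadhan) to convert the bound from the prior $\rho_0$ to an arbitrary posterior $\rho$, thereby paying only a Kullback--Leibler penalty.

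More concretely, I would first fix $\omega \in \Omega$ and observe that the random variables $\ell(\omega,z_i)$ are i.i.d.\ with values in $[0,C]$, with common mean $r_\ell(\omega)$ and empirical mean $\hat r_\ell(\omega,S_n)$. Hoeffding's lemma applied to each of the $n$ centered, bounded summands $[\ell(\omega,z_i)-r_\ell(\omega)]/n$ yields
\begin{equation*}
\E_{S_n}\bigl[\exp\bigl(\lambda(r_\ell(\omega)-\hat r_\ell(\omega,S_n))\bigr)\bigr] \leq \exp\!\left(\frac{\lambda^2 C^2}{8n}\right).
\end{equation*}
I would then integrate this inequality against $\rho_0$ and swap the two expectations using Fubini's theorem (legitimate since the integrand is nonnegative), obtaining
\begin{equation*}
\E_{S_n}\E_{\omega\sim\rho_0}\bigl[\exp\bigl(\lambda(r_\ell(\omega)-\hat r_\ell(\omega,S_n))\bigr)\bigr] \leq \exp\!\left(\frac{\lambda^2 C^2}{8n}\right).
\end{equation*}
A direct application of Markov's inequality to the nonnegative random variable $\E_{\omega\sim\rho_0}[\exp(\lambda(r_\ell(\omega)-\hat r_\ell(\omega,S_n)))]$ then gives, with probability at least $1-\delta$ over $S_n$,
\begin{equation*}
\E_{\omega\sim\rho_0}\bigl[\exp\bigl(\lambda(r_\ell(\omega)-\hat r_\ell(\omega,S_n))\bigr)\bigr] \leq \frac{1}{\delta}\exp\!\left(\frac{\lambda^2 C^2}{8n}\right).
\end{equation*}

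The final step is to promote this prior-side statement into one that holds uniformly over all posteriors $\rho\in\calP(\Omega)$. For this I would invoke the Donsker--Varadhan variational representation of the KL divergence: for any measurable function $h\colon\Omega\to\R$ whose exponential is $\rho_0$-integrable and any $\rho$ absolutely continuous with respect to $\rho_0$,
\begin{equation*}
\E_{\omega\sim\rho}[h(\omega)] \leq \kld{\rho}{\rho_0} + \log \E_{\omega\sim\rho_0}[\exp(h(\omega))].
\end{equation*}
Choosing $h(\omega)=\lambda(r_\ell(\omega)-\hat r_\ell(\omega,S_n))$ and substituting the Markov bound on the right-hand side yields, on the same $1-\delta$ event and simultaneously for all such $\rho$,
\begin{equation*}
\lambda\,\E_{\omega\sim\rho}[r_\ell(\omega)-\hat r_\ell(\omega,S_n)] \leq \kld{\rho}{\rho_0} + \log\tfrac{1}{\delta} + \frac{\lambda^2 C^2}{8n}.
\end{equation*}
Dividing through by $\lambda>0$ and rearranging produces exactly the stated bound. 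The case in which $\rho$ is not absolutely continuous with respect to $\rho_0$ is vacuous since then $\kld{\rho}{\rho_0}=+\infty$.

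The only genuinely delicate ingredient is the change-of-measure step: it is what makes the bound hold \emph{uniformly} over the unknown, possibly data-dependent posterior $\rho$ without a union bound, and it is precisely this feature that will make the theorem applicable to data-adaptive sliced-Wasserstein distributions later in the paper. Hoeffding's lemma and Markov's inequality are routine, and the Fubini swap is justified by nonnegativity; the only nuance worth flagging in the write-up is the choice of $\lambda>0$ prior to seeing the data, which is what permits tuning $\lambda$ afterwards only via a subsequent union bound over a grid of values if desired.
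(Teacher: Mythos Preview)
The paper does not give its own proof of this classical result---it is stated as background and attributed to Catoni---so there is no in-paper proof to compare against directly. Your argument is correct and follows the standard three-ingredient recipe (Hoeffding's lemma, Fubini, Donsker--Varadhan plus Markov), which is precisely the machinery the paper adapts in proving its own \Cref{thm:generic_bound}; the only cosmetic difference is that the paper applies Donsker--Varadhan \emph{before} the Chernoff/Markov step (pulling the $\sup_\rho$ inside the exponential and then bounding the tail), whereas you apply Markov first and invoke Donsker--Varadhan on the resulting high-probability event---both orderings yield the same bound.
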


The literature on PAC-Bayes is rich of many other bounds, and we refer to \citep{alquier2021userfriendly} for an extensive survey. In our work, we focus on Catoni's bound because it is generic (appropriate settings of $\lambda$ give rise to other well-known bounds) as are the proof techniques used to derive it \citep[Section 2]{alquier2021userfriendly}. 

\textbf{Applications.}\; PAC-Bayesian bounds allow to control the true risk via a function depending on the empirical risk. For example, minimizing the left-hand side term of Catoni's bound \eqref{eq:catoni bound} over $\rho \in \calP(\Omega)$ yields a data-dependent distribution which guarantees the highest generalization ability \citep[Section 2.1.2]{alquier2021userfriendly}. PAC-Bayesian theory was also applied for specific tasks, e.g. classification \citep{mcallester1999some}, ranking \citep{ralaivola2010}, density estimation \citep{higgs2010}, deep learning \citep{dziugaite2017,cheriefabdellatif2022}.

\section{Generalization Bounds for Adaptive Sliced-Wasserstein Distances} \label{sec:pacbound}

In this section, we leverage the PAC-Bayesian framework to derive generalization bounds for adaptive Sliced-Wasserstein distances. Proofs are deferred to \Cref{appendix:postponed_proofs}.

Before presenting our main results, we clarify the notion of generalization for adaptive SW. In practice, since one generally has access to data generated from unknown probability measures $\mu, \nu$, empirical estimates $\swdp{\mu_n}{\nu_n}{\rho}$ are computed instead of $\swdp{\mu}{\nu}{\rho}$. Besides, adaptive SW means that an algorithm is deployed to learn $\rho$ from $\mu_n, \nu_n$, so that $\swdp{\mu_n}{\nu_n}{\rho}$ is sufficiently discriminative (\Cref{subsec:sw}). In this context, the learning algorithm is said to generalize well if the distribution learned from $\mu_n, \nu_n$ (denoted by $\rho(\mu_n, \nu_n)$) is such that $\swdp{\cdot}{\cdot}{\rho(\mu_n, \nu_n)}$ is discriminative, even on unseen data. More formally, given new samples $\{x_1',\ldots,x_n'\}$ and $\{y_1',\ldots,y_n'\}$ from $\mu$ and $\nu$, with associated empirical distributions $\mu'_n$ and $\nu'_n$, $\swdp{\mu'_n}{\nu'_n}{\rho(\mu_n,\nu_n)}$ should be large.

Therefore, we measure generalization as the gap between $\swdp{\mu}{\nu}{\rho}$ and $\swdp{\mu_n}{\nu_n}{\rho}$ for any $\rho \in \calP(\Sphere)$. We first derive a general bound on this gap, using PAC-Bayesian theory, then refine it to specific settings directed by conditions on the supports and the moments of $\mu$ and $\nu$.

\begin{table}[t!]
    \centering
    \renewcommand{\arraystretch}{1.2}
    \begin{tabular}{cc}
        \toprule
        \textbf{PAC-Bayes framework} & \textbf{Our framework} \\
        \hline \\[-4mm]
        $\{z_i\}_{i=1}^n$ & $\{(x_i, y_i)\}_{i=1}^n$ \\
        $\xi \in \calP(\calX)$ & $\mu \times \nu \in \calP(\R^d) \times \calP(\R^d)$ \\
        $\omega \in \Omega$ & $\theta \in \Sphere$ \\
        $\hat{r}_\ell(\omega, \{z_i\}_{i=1}^n)$ & $\wdp{\thss \mu_n}{\thss \nu_n}$ \\
        $\E_{\omega \sim \rho}[\hat{r}_\ell(\omega, \{z_i\}_{i=1}^n)]$ & $\swdp{\mu_n}{\nu_n}{\rho}$ \\
        $r_\ell(\omega)$ & $\E_{(x_i, y_i)_{i=1}^n}[\wdp{\thss \mu_n}{\thss \nu_n}]$ \\
        $\E_{\omega \sim \rho}[r_\ell(\omega)]$ & $\E_{(x_i, y_i)_{i=1}^n}[\swdp{\mu_n}{\nu_n}{\rho}]$ \\[1mm]
        \toprule
    \end{tabular}
    \vspace{-5mm}
    \caption{Analogy between PAC-Bayes theory and our work.}
    \label{table:analogy}
\end{table}
\subsection{A Generic Generalization Bound} \label{subsec:general_bound}

We establish a first generalization bound for adaptive SW, by combining statistical properties of adaptive SW and techniques from PAC-Bayesian theory.

\begin{theorem} \label{thm:generic_bound}
    Let $p \in [1, \pinf)$ and $\mu, \nu \in \calP_p(\R^d)$. Assume there exists a constant $\varphi_{\mu, \nu, p}$, possibly depending on $\mu, \nu$ and $p$ such that: $\forall \lambda > 0$, $\forall \theta \in \Sphere$,
    \begin{align}
        &\E \left[ \exp \left( \lambda \big\{ \mathrm{W}_p^p(\thss \mu_n, \thss \nu_n) - \E[\mathrm{W}_p^p(\thss \mu_n, \thss \nu_n)]
        \big\} \right) \right] \\
        &\leq \exp(\lambda^2 \varphi_{\mu, \nu, p}\,n^{-1}) \eqsp,  \label{eq:momentgenfn}
    \end{align}
    where $\E$ is taken with respect to the support points of $\mu_n$ and $\nu_n$. Additionally, assume there exists $\psi_{\mu, \nu, p} : \nsets \to \R_+$, possibly depending on $\mu, \nu$ and $p$, such that, $\forall \rho \in \calP(\Sphere)$,
    \begin{equation} \label{eq:sample_cpx_sw}
        \E \big| \mathrm{SW}_p^p(\mu_n,\nu_n;\rho)- \mathrm{SW}_p^p(\mu,\nu;\rho) 
        \big| \leq \psi_{\mu, \nu, p}(n) \,.
    \end{equation}
    Let $\rho_0 \in \calP(\Sphere)$. Then, for any $\delta \in (0, 1)$, the following holds with probability at least $1 - \delta$: $\forall \rho \in \calP(\Sphere)$,
    \begin{align}
    \mathrm{SW}_p^p(\mu,\nu;\rho) &\geq
    \mathrm{SW}_p^p(\mu_n,\nu_n;\rho) - \frac{\lambda}{n} \varphi_{\mu, \nu, p}\\
    - \frac1{\lambda} \Big\{ &\kld{\rho}{\rho_0} + \log\Big(\frac1{\delta}\Big)\Big\} - \psi_{\mu, \nu, p}(n) \,. \label{eq:generic_bound}
    \end{align}
\end{theorem}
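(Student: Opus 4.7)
The plan is to follow a Catoni-style PAC-Bayesian derivation adapted to the sliced-Wasserstein setting via the analogy in \Cref{table:analogy}: the slice $\theta$ plays the role of the predictor $\omega$, the per-slice one-dimensional loss $\wdp{\thss\mu_n}{\thss\nu_n}$ plays the role of the empirical risk, and the sub-Gaussian-type assumption \eqref{eq:momentgenfn} replaces the boundedness assumption used in \Cref{th:catoni}. The new wrinkle is that the quantity we really want on the right-hand side, $\swdp{\mu}{\nu}{\rho}$, differs from the expectation of the empirical risk $\E[\swdp{\mu_n}{\nu_n}{\rho}]$ that a direct PAC-Bayes argument produces; bridging those two quantities is exactly the job of assumption~\eqref{eq:sample_cpx_sw}.

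Concretely, I would define the centered deviation $\Delta(\theta) \doteq \wdp{\thss\mu_n}{\thss\nu_n} - \E[\wdp{\thss\mu_n}{\thss\nu_n}]$, so that \eqref{eq:momentgenfn} reads $\E[\exp(\lambda \Delta(\theta))] \leq \exp(\lambda^2 \varphi_{\mu,\nu,p}/n)$ pointwise in $\theta$. Integrating against the prior $\rho_0$ and applying Fubini gives $\E\bigl[\E_{\theta\sim\rho_0}[\exp(\lambda \Delta(\theta))]\bigr] \leq \exp(\lambda^2 \varphi_{\mu,\nu,p}/n)$, after which Markov's inequality yields, with probability at least $1-\delta$ over the draw of $\{(x_i,y_i)\}_{i=1}^n$,
\[
\E_{\theta\sim\rho_0}\bigl[\exp(\lambda \Delta(\theta))\bigr] \leq \delta^{-1}\exp\bigl(\lambda^2 \varphi_{\mu,\nu,p}/n\bigr).
\]
The Donsker--Varadhan variational characterization of the KL divergence -- the workhorse behind \Cref{th:catoni} -- then transfers this single-random-variable bound to \emph{every} posterior $\rho\in\calP(\Sphere)$ simultaneously: applied to the function $\lambda\Delta$, it gives $\lambda \E_{\theta\sim\rho}[\Delta(\theta)] \leq \kld{\rho}{\rho_0} + \log \E_{\theta\sim\rho_0}[\exp(\lambda \Delta(\theta))]$, and substituting the high-probability bound above yields, with probability $\geq 1-\delta$, for all $\rho\in\calP(\Sphere)$,
\[
\E_{\theta\sim\rho}[\Delta(\theta)] \leq \frac{\lambda \varphi_{\mu,\nu,p}}{n} + \frac{1}{\lambda}\bigl\{\kld{\rho}{\rho_0} + \log(1/\delta)\bigr\}.
\]

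To finish, I would use Fubini once more to identify $\E_{\theta\sim\rho}\bigl[\E[\wdp{\thss\mu_n}{\thss\nu_n}]\bigr] = \E[\swdp{\mu_n}{\nu_n}{\rho}]$, turning the previous display into a PAC-Bayes bound for $\swdp{\mu_n}{\nu_n}{\rho} - \E[\swdp{\mu_n}{\nu_n}{\rho}]$. Jensen's inequality applied to \eqref{eq:sample_cpx_sw} yields $|\E[\swdp{\mu_n}{\nu_n}{\rho}] - \swdp{\mu}{\nu}{\rho}| \leq \psi_{\mu,\nu,p}(n)$; plugging the one-sided form of this estimate into the PAC-Bayes inequality and rearranging produces exactly \eqref{eq:generic_bound}.

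The main obstacle I anticipate is bookkeeping rather than a deep idea: verifying that the two Fubini--Tonelli swaps are legal (which requires joint measurability of $\wdp{\thss\mu_n}{\thss\nu_n}$ in $\theta$ and the samples, plus finiteness of the pointwise exponential moments, both guaranteed by the hypotheses and by $\mu,\nu\in\calP_p(\R^d)$), and being careful that the high-probability event is carved out \emph{before} Donsker--Varadhan is invoked so that the resulting inequality holds uniformly over all posteriors $\rho$. The remaining steps -- Markov, KL duality, and algebraic rearrangement -- are standard. Note that for $\rho$ not absolutely continuous with respect to $\rho_0$ the bound is formally vacuous since $\kld{\rho}{\rho_0}=+\infty$, so there is no loss of generality in applying Donsker--Varadhan only to $\rho \ll \rho_0$.
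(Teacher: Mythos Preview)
Your proposal is correct and follows essentially the same Catoni-style route as the paper: Fubini to integrate the moment-generating-function bound against the prior, Donsker--Varadhan to pass to arbitrary posteriors, a Chernoff/Markov step to obtain the high-probability event, and finally the sample-complexity assumption~\eqref{eq:sample_cpx_sw} to replace $\E[\swdp{\mu_n}{\nu_n}{\rho}]$ by $\swdp{\mu}{\nu}{\rho}$. The only cosmetic difference is that you apply Markov \emph{before} Donsker--Varadhan, whereas the paper applies Donsker--Varadhan inside the sample expectation first and then invokes the Chernoff bound; the two orderings are well known to be equivalent and yield the identical inequality.
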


\begin{table*}[t]
    \centering
    \begin{tabular}{cccc}
        \toprule
        & \multirow{2}{*}{Bounded supports} & \multicolumn{2}{c}{Unbounded supports} \\
        \cmidrule(lr){3-4}
        & & Sub-Gaussianity (Def.~\ref{def:subg}) & Bernstein moments (Def.~\ref{def:bernstein_cond}) \\
        \hline
        $\varphi_{\mu, \nu, p}$& \Cref{prop:momgenfn_compact} & \Cref{prop:momgenfn_subg} & \Cref{prop:momgenfn_bernstein} \\
        $\psi_{\mu, \nu, p} $& \Cref{prop:splcpx_compact} & \multicolumn{2}{c}{\cite{manole2020minimax}} \\
        \toprule
    \end{tabular}
    \caption{Overview of the explicit forms of $\varphi_{\mu, \nu, p}$ and $\psi_{\mu, \nu, p}$ under different assumptions.}
    \label{table: assumptions values}
\end{table*}

\textbf{Link with PAC-Bayesian theory.}\; \Cref{thm:generic_bound} can be interpreted as a novel PAC-Bayesian bound tailored to adaptive SW: the formal analogy between classical PAC-Bayesian framework and our work is summarized in \Cref{table:analogy}. The key element is that $\mathrm{W}_p^p(\thss \mu_n, \thss \nu_n)$ for some $\theta\in\Sphere$ can be seen as an empirical risk \eqref{eq:empiricalrisk}, and consequently, the average empirical risk is exactly $\swdp{\mu_n}{\nu_n}{\rho}$ \eqref{eq:def_sw}. Nevertheless, we emphasize that \Cref{thm:generic_bound} is not obtained by simply replacing the risks in \Cref{th:catoni} according to \Cref{table:analogy}. Indeed, this would return an \emph{upper} bound (in terms of $\swdp{\mu_n}{\nu_n}{\rho}$) for $\E[\swdp{\mu_n}{\nu_n}{\rho}]$, while we propose a \emph{lower} bound for $\swdp{\mu}{\nu}{\rho}$ \eqref{eq:generic_bound}. Instead, we propose the following slight paradigm shift: while classical PAC-Bayesian theory aims at \emph{minimizing} the average true risk (hence, the \emph{upper} bounds), our goal is to \emph{maximize} $\swdp{\mu}{\nu}{\rho}$ over $\rho$ (hence, the need of \emph{lower} bounds). Therefore, \Cref{thm:generic_bound} is established by first, adapting the elements of proof of \Cref{th:catoni} to establish a lower-bound for $\E[\swdp{\mu_n}{\nu_n}{\rho}]$, then bounding from above $\E[\swdp{\mu_n}{\nu_n}{\rho}]$ by $\swdp{\mu}{\nu}{\rho}$.

\textbf{Discussion.}\; Since our bound holds for all $\rho \in \calP(\Sphere)$ \eqref{eq:generic_bound}, it is therefore valid for $\rho_{\text{maxSW}}^{\star}$~\eqref{eq:argsup_maxsw} and $\rho_{\text{DSW}}^{\star}$~\eqref{eq:argsup_dsw} computed by max-SW and DSW. However, our bound is vacuous for max-SW, because the KL penalty term is evaluated on a Dirac distribution. In that singular case, more informative generalization bounds can be deduced, using \citep{lin2021,weed2022} instead of PAC-Bayes: we elaborate on this in \Cref{subsec:proof_genbound}.

\looseness =-1
We now clarify the role of each term involved in \eqref{eq:generic_bound}. The KL divergence and $\lambda$ arise from adapting the proof techniques of Catoni's bound, so their influence on the generalization gap can be further illustrated with the examples in \citep[Section 2.1.3]{alquier2021userfriendly}. 
More precisely, the KL divergence results from a change of measure inequality known as Donsker-Varadhan's lemma \citep{donsker1975variational}. Previous work have applied other change of measure inequalities to derive PAC-Bayesian bounds in terms of other divergences than KL \citep{alquier2018simpler}. Nevertheless, standard PAC-Bayesian bounds rely on the use of Donsker-Varadhan's lemma, hence the KL divergence. As we introduce the first connection between PAC-Bayesian theory and SW, we decided to use the most common technique.

Then, the quantities $\varphi_{\mu, \nu, p}$ and $\psi_{\mu, \nu, p}(n)$ capture the properties of SW, $\mu$ and $\nu$. More precisely, $\varphi_{\mu, \nu, p}$ bounds the moment-generating function of a centered version of $\wdp{\thss\mu_n}{\thss\nu_n}$ for any $\theta \in \Sphere$, while $\psi_{\mu, \nu, p}$ reflects the \emph{sample complexity} of $\swdp{\cdot}{\cdot}{\rho}$ for any $\rho \in \calP(\Sphere)$. To further illustrate this, we specialize our generic bound \eqref{eq:generic_bound} under different settings: $\mu, \nu$ have bounded supports, are sub-Gaussian, or satisfy a Bernstein-type moment condition. We present our results in the next sections and summarize them in \Cref{table: assumptions values}. 

\subsection{Application to Measures with Bounded Support} \label{sec:a1}

We first consider distributions supported on a bounded domain. We derive $\varphi_{\mu, \nu, p}$ by applying similar arguments as in the proof of McDiarmid's inequality \citep{mcdiarmid1989method}, similarly to \citep[Proposition 20]{weedbach2019}.

\begin{proposition} \label{prop:momgenfn_compact}
Let $\setX \subset \R^d$ such that $\setX$ has a finite  diameter $\Delta$, \ie~$\Delta \doteq \sup_{(x,x') \in \setX^2} \|x-x'\| < \pinf$. Let $p \in [1, \pinf)$, $\mu, \nu \in \calP(\setX)$. Then, $\mu, \nu \in \calP_p(\setX)$ and $\varphi_{\mu, \nu, p} = \Delta^{2p} / 2$. 
\end{proposition}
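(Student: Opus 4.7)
The plan is to establish the sub-Gaussian MGF bound~\eqref{eq:momentgenfn} with $\varphi_{\mu,\nu,p}=\Delta^{2p}/2$ via a bounded-differences (McDiarmid-type) argument, treating the $n$ pairs $(x_i,y_i)\in\setX^2$, drawn \iid from $\mu\otimes\nu$, as the independent coordinates. The moment claim $\mu,\nu\in\calP_p(\setX)$ is immediate: fixing any reference point $x_0\in\setX$, the finite-diameter hypothesis yields $\|x\|\leq\|x_0\|+\Delta$ for every $x\in\setX$, hence $\int\|x\|^p\dd\mu(x)\leq(\|x_0\|+\Delta)^p<\pinf$, and likewise for $\nu$.

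For the MGF bound, fix $\theta\in\Sphere$ and exploit the combinatorial characterization of the one-dimensional Wasserstein distance equivalent to~\eqref{eq:wasserstein_1D_discrete},
\begin{equation*}
F(x_1,\dots,x_n,y_1,\dots,y_n)\doteq\wdp{\thss\mu_n}{\thss\nu_n}=\min_{\sigma\in S_n}\frac{1}{n}\sum_{i=1}^n\bigl|\ps{\theta}{x_i-y_{\sigma(i)}}\bigr|^p,
\end{equation*}
with $S_n$ the symmetric group on $\{1,\dots,n\}$. Perturbing one pair $(x_k,y_k)$ into an arbitrary $(x_k',y_k')\in\setX^2$, and denoting by $\sigma^{\star}$ an optimal permutation for the unperturbed configuration, only the two summands indexed by $i=k$ and $i=(\sigma^{\star})^{-1}(k)$ are affected. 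Each of these summands lies in $[0,\Delta^p]$ by Cauchy--Schwarz using $\|\theta\|=1$ and $\diam(\setX)=\Delta$. Evaluating the perturbed objective at $\sigma^{\star}$, and conversely evaluating the unperturbed objective at a permutation optimal for the perturbed configuration, yields the bounded-differences inequality $|F^{\mathrm{new}}-F|\leq 2\Delta^p/n$ in each of the $n$ pair-valued coordinates.

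I would then conclude by the standard bounded-differences MGF lemma (the Azuma--Hoeffding martingale inequality underlying McDiarmid): for every $\lambda>0$,
\begin{equation*}
\E\bigl[\exp\bigl(\lambda(F-\E F)\bigr)\bigr]\leq\exp\!\left(\frac{\lambda^2}{8}\sum_{i=1}^n\bigl(2\Delta^p/n\bigr)^2\right)=\exp\!\left(\frac{\lambda^2\Delta^{2p}}{2n}\right),
\end{equation*}
which is exactly~\eqref{eq:momentgenfn} with $\varphi_{\mu,\nu,p}=\Delta^{2p}/2$. The main obstacle is the combinatorial nature of $\wdp{\thss\mu_n}{\thss\nu_n}$: as a minimum over permutations, it is not \emph{a priori} a nice Lipschitz function of each coordinate, since perturbing a single pair may globally reshuffle the optimal coupling. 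The standard resolution is the ``evaluate the new objective at the old optimizer'' trick used above, which localizes the sensitivity analysis to at most two summands controlled by $\diam(\setX)$; the remainder is textbook bounded-differences machinery, mirroring~\citep[Proposition~20]{weedbach2019}.
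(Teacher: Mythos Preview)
Your proposal is correct and follows essentially the same route as the paper: establish the bounded-differences property for $\wdp{\thss\mu_n}{\thss\nu_n}$ with constant $c_i=2\Delta^p/n$ per pair via the ``evaluate the perturbed objective at the other configuration's optimal permutation'' trick, then invoke the MGF form of McDiarmid's inequality to obtain $\varphi_{\mu,\nu,p}=\Delta^{2p}/2$. The paper organizes this as a separate lemma for generic one-dimensional empirical measures on a set of diameter $\Delta$ and then specializes to the projected measures, but the substance of the argument is identical to yours.
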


Next, we adapt the proof of \citep[Lemma B.3]{manole2020minimax} to compute the explicit form of $\psi_{\mu, \nu, p}$ in this setting.

\begin{proposition} \label{prop:splcpx_compact}
    Let $\mu, \nu \in \calP(\setX)$, where $\setX \subset \R^d$ has a finite  diameter $\Delta$. 
    Let $p \in [1, \pinf)$. Then, there exists a constant $C$ such that, $\psi_{\mu, \nu,p}(n) = C p \Delta^p n^{-1/2}$.
\end{proposition}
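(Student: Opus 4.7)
\textbf{Proof plan for Proposition \ref{prop:splcpx_compact}.} The plan is to reduce the question about $\SW_p^p$ to a one-dimensional question about $\wdp{\cdot}{\cdot}$ on projected measures, which then reduces to a $\wdone{\cdot}{\cdot}$-type estimate using a Lipschitz trick, after which I apply the classical one-dimensional empirical convergence rate.

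First, I would apply Fubini's theorem together with the triangle inequality in $L^1$ to commute the absolute value, the expectation $\E$ over the samples, and the integral over $\Sphere$:
\begin{equation*}
    \E \big| \SW_p^p(\mu_n,\nu_n;\rho) - \SW_p^p(\mu,\nu;\rho) \big|
    \leq \int_{\Sphere} \E \big| \wdp{\thss\mu_n}{\thss\nu_n} - \wdp{\thss\mu}{\thss\nu} \big| \dd\rho(\theta) \eqsp.
\end{equation*}
This reduces the problem to a uniform-in-$\theta$ bound on the one-dimensional integrand. Fix $\theta \in \Sphere$. Since $\|\theta\|=1$ and $\setX$ has diameter $\Delta$, Cauchy--Schwarz yields that $\thss\mu$, $\thss\nu$, $\thss\mu_n$, $\thss\nu_n$ are all supported on an interval of length at most $\Delta$. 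In particular their quantile functions satisfy $|F_{\thss\mu}^{-1}(t) - F_{\thss\nu}^{-1}(t)| \leq \Delta$ for every $t\in[0,1]$, and the analogous bound holds for the empirical versions.

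Next I would use the elementary inequality $|a^p - b^p| \leq p \max(a,b)^{p-1} |a-b|$ for $a,b\geq 0$, combined with the reverse triangle inequality on the integrand of the quantile representation \eqref{eq:wasserstein_1D_continuous}, to get
\begin{equation*}
    \big| \wdp{\thss\mu_n}{\thss\nu_n} - \wdp{\thss\mu}{\thss\nu} \big| \leq p \Delta^{p-1} \big( \wdone{\thss\mu_n}{\thss\mu} + \wdone{\thss\nu_n}{\thss\nu} \big) \eqsp.
\end{equation*}
This is the key step and likely the only subtle one: it replaces a difference of $p$-th powers of Wasserstein distances by a sum of $\wdone$ distances between each true measure and its empirical counterpart, with the $\Delta^{p-1}$ prefactor coming from the bounded support.

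Finally I would invoke the classical one-dimensional estimate $\E[\wdone{\eta_n}{\eta}] \leq \Delta / (2\sqrt{n})$, valid for any $\eta$ supported on an interval of length $\Delta$, which follows from the c.d.f. representation $\wdone{\mu}{\nu} = \int |F_\mu - F_\nu|$ and the variance bound $\Var(F_{\eta_n}(x)) \leq 1/(4n)$. Applying this to $\eta = \thss\mu$ and $\eta = \thss\nu$, combining with the previous display, and integrating against $\rho$ (a probability measure) absorbs $\theta$ and yields the claimed bound with an absolute constant $C$ independent of $\mu,\nu,p,\Delta,\theta$. The main potential obstacle is verifying cleanly that the Lipschitz-in-$p$-th-power step holds pointwise in $t$ under the quantile integral; once that is done, the rest is a direct chain of inequalities matching the structure of \citep[Lemma B.3]{manole2020minimax}.
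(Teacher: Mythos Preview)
Your proposal is correct and follows the same three-step skeleton as the paper's proof: (i) pass the absolute value and expectation inside the integral over $\Sphere$ to reduce to a one-dimensional bound uniform in $\theta$; (ii) a Lipschitz-type estimate converting $|\wdp{\cdot}{\cdot}-\wdp{\cdot}{\cdot}|$ into $p\Delta^{p-1}\big(\wdone{\thss\mu_n}{\thss\mu}+\wdone{\thss\nu_n}{\thss\nu}\big)$; (iii) a one-dimensional empirical $\wdone$ rate.

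Where you differ is in the implementation of steps (ii) and (iii). For (ii), the paper carries out a first-order Taylor expansion of $(x,y)\mapsto|x-y|^p$ with intermediate points $\tilde F_{\mu_n}^{-1},\tilde F_{\nu_n}^{-1}$ and then bounds $\sup_t|\tilde F_{\mu_n}^{-1}-\tilde F_{\nu_n}^{-1}|^{p-1}\leq\Delta^{p-1}$ via a four-case check; your pointwise inequality $|a^p-b^p|\leq p\max(a,b)^{p-1}|a-b|$ together with the reverse and ordinary triangle inequalities reaches the same bound more directly, without intermediate points or case analysis. For (iii), the paper invokes \cite[Theorem~1]{fournier2015rate} as a black box, yielding a non-explicit constant $C'$; your CDF-representation plus $\Var(F_{\eta_n}(x))\leq 1/(4n)$ argument is self-contained and produces the explicit bound $\E[\wdone{\eta_n}{\eta}]\leq\Delta/(2\sqrt{n})$, hence an explicit $C=1$ in the final statement. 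Both routes are valid; yours is more elementary and sharper in the constant, at the cost of being specific to the bounded one-dimensional setting (which is exactly what is needed here).
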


By combining \Cref{prop:momgenfn_compact,prop:splcpx_compact}, we refine \Cref{thm:generic_bound} to distributions supported on a bounded domain: the resulting bound is given in \Cref{secapp: finalbound bounded}.

\subsection{Application to Sub-Gaussian Measures} \label{sec:a2_subg}

Next, we apply \Cref{thm:generic_bound} to distributions with unbounded supports. To handle this case, we assume specific constraints on the moments on $\mu, \nu$, then derive $\varphi_{\mu, \nu, p}$ by using generalizations of McDiarmid's inequalities. 
More precisely, we assume that $\mu, \nu$ are sub-Gaussian distributions.

\begin{definition}[Sub-Gaussian distribution] \label{def:subg}
    Let $\mu \in \calP(\R^d)$ and $\sigma > 0$. $\mu$ is a \emph{sub-Gaussian distribution} with \emph{variance proxy} $\sigma^2$ if the following holds: for any $\theta \in \Sphere$, for $\lambda \in \R$, $\int_\R \exp(\lambda t) \dd(\thss \mu)(t) \leq \exp(\lambda^2 \sigma^2 / 2)$.
\end{definition}

The next proposition results from applying the generalized McDiarmid's inequality for unbounded spaces with finite \emph{sub-Gaussian diameter} \citep{kontorovicha14} (\Cref{appendix:momgen_subg}). 

\begin{proposition} \label{prop:momgenfn_subg}
    Let $\mu, \nu \in \calP(\R^d)$ such that $\mu, \nu$ are sub-Gaussian with variance proxy $\sigma^2, \tau^2$ respectively. Then, $\mu, \nu \in \calP_1(\R^d)$ and $\varphi_{\mu, \nu, 1} = \sigma^2 + \tau^2$.
\end{proposition}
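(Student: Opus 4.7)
The plan is to apply the generalized McDiarmid inequality of Kontorovich (2014) for unbounded metric probability spaces with finite sub-Gaussian diameter to the function
\[
f_\theta(x_1,\ldots,x_n,y_1,\ldots,y_n) \doteq \mathrm{W}_1(\thss \mu_n, \thss \nu_n),
\]
viewed as a function of $2n$ independent arguments where $x_i \sim \mu$ and $y_i \sim \nu$. This fits naturally: sub-Gaussian projections cannot be controlled by a bounded diameter as in \Cref{prop:momgenfn_compact}, but they do give rise to a finite sub-Gaussian diameter with respect to the pseudo-metric $d_\theta(a,b) = |\ps{\theta}{a-b}|$ on $\R^d$.

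\textbf{Preliminary: finiteness of the first moment.} I would first observe that for any $\theta \in \Sphere$, $\thss\mu$ being sub-Gaussian with variance proxy $\sigma^2$ implies that $\ps{\theta}{X}$ has finite moments of every order, hence $\E_{X\sim\mu}[|\ps{\theta}{X}|] < \pinf$. Specializing $\theta$ to each canonical basis vector of $\R^d$ and using equivalence of norms in finite dimension yields $\E_{X\sim\mu}\|X\| < \pinf$, i.e.\ $\mu \in \calP_1(\R^d)$, and similarly for $\nu$.

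\textbf{Bounded-differences-type Lipschitz property.} Fix $\theta \in \Sphere$ and consider replacing $x_i$ by some $x_i' \in \R^d$, keeping all other coordinates fixed; let $\tilde\mu_n$ denote the resulting empirical measure. Transporting the mass $1/n$ from $\ps{\theta}{x_i}$ to $\ps{\theta}{x_i'}$ is a feasible transport plan between $\thss\mu_n$ and $\thss\tilde\mu_n$, so $\mathrm{W}_1(\thss\mu_n, \thss\tilde\mu_n) \leq |\ps{\theta}{x_i-x_i'}|/n$. The reverse triangle inequality for $\mathrm{W}_1$ then gives
\[
\bigl|f_\theta(\ldots,x_i,\ldots) - f_\theta(\ldots,x_i',\ldots)\bigr| \leq \tfrac{1}{n} d_\theta(x_i,x_i'),
\]
and the analogous bound holds when a single $y_j$ is replaced. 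Thus $f_\theta$ is $1/n$-Lipschitz in each coordinate with respect to the pseudo-metric $d_\theta$.

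\textbf{Applying Kontorovich's inequality.} Under the pseudo-metric $d_\theta$, the sub-Gaussian diameter of $(\R^d,\mu)$ is controlled by $\sigma$: for two independent copies $X,X' \sim \mu$, the increment $\ps{\theta}{X - X'}$ is sub-Gaussian (sum of two independent sub-Gaussians with proxy $\sigma^2$), whence $d_\theta(X,X') = |\ps{\theta}{X-X'}|$ concentrates like a sub-Gaussian with proxy of order $\sigma^2$. Similarly for $(\R^d,\nu)$ with proxy of order $\tau^2$. Kontorovich's generalized McDiarmid inequality, applied to $f_\theta$ with Lipschitz constants $c_i = 1/n$ for all $2n$ arguments, then yields a moment-generating bound of the form
\[
\E\bigl[\exp\bigl(\lambda (f_\theta - \E f_\theta)\bigr)\bigr] \leq \exp\!\Bigl(\lambda^2 \bigl(n\cdot\tfrac{1}{n^2}\sigma^2 + n\cdot\tfrac{1}{n^2}\tau^2\bigr)\Bigr) = \exp\!\Bigl(\tfrac{\lambda^2(\sigma^2+\tau^2)}{n}\Bigr),
\]
which is precisely \eqref{eq:momentgenfn} with $\varphi_{\mu,\nu,1} = \sigma^2 + \tau^2$. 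Since the bound holds for any $\theta \in \Sphere$, the proof is complete.

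\textbf{Main obstacle.} The delicate part is lining up the constants exactly as stated: Kontorovich's inequality is formulated in terms of a sub-Gaussian diameter for $d_\theta(X,X')$, whereas Definition~\ref{def:subg} gives sub-Gaussianity of $\ps{\theta}{X}$ itself. I expect to spend some effort tracking factors of $2$ arising from (i) the difference of independent sub-Gaussians, (ii) passage from $\ps{\theta}{X-X'}$ to $|\ps{\theta}{X-X'}|$, and (iii) the specific normalization Kontorovich uses. Everything else (finite moments, Lipschitz bound, reduction to a single $\theta$) is routine.
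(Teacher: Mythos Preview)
Your approach is correct and essentially the same as the paper's: both apply Kontorovich's generalized McDiarmid inequality for metric probability spaces with finite sub-Gaussian diameter to the map $(\text{samples})\mapsto \mathrm{W}_1$. The only differences are bookkeeping: the paper first projects and works on $\R^2$ with the genuine metric $\eta\bigl((u,v),(u',v')\bigr)=|u-u'|+|v-v'|$, pairs each $(\ps{\theta}{x_i},\ps{\theta}{y_i})$ into a single sample from the one measure $\thss\mu\times\thss\nu$, and rescales to $f=n\,\mathrm{W}_1$ so that $f$ is $1$-Lipschitz for the $L_1$ product metric $\eta^n$. This pairing makes the i.i.d.\ hypothesis of Kontorovich's theorem hold verbatim (your $2n$-argument formulation with two different laws needs the easy heterogeneous extension) and sidesteps the fact that your $d_\theta$ is only a pseudo-metric on $\R^d$. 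Your anticipated factor-of-$2$ tracking is real; the paper records $\Deltasg(\R^2)=\sqrt{2(\sigma^2+\tau^2)}$, which together with the factor $n$ in Kontorovich's bound and the rescaling $f=n\,\mathrm{W}_1$ yields exactly $\varphi_{\mu,\nu,1}=\sigma^2+\tau^2$.
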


The last ingredient to specialize \Cref{thm:generic_bound} is to derive $\psi_{\mu, \nu, p}$ for $\mu, \nu$ satisfying either \Cref{def:subg}. To this end, we leverage the rate recently established in \citep[Theorem 2]{manole2020minimax}, which shows that $\psi_{\mu, \nu, p}$ scales as $n^{-1/2}\log(n)$ if $\mu, \nu$ are sub-Gaussian distributions. Our final bound is obtained by plugging \Cref{prop:momgenfn_subg} and the explicit formula of $\psi_{\mu, \nu, 1}$ in \Cref{thm:generic_bound}. We present this result and its detailed proof in \Cref{secapp: final bound unbounded}.

\subsection{Bound for Measures with Bernstein moment conditions} \label{subsec:bernstein}

We study a more general class of distributions: we consider the \emph{Bernstein-type moment condition} below, which is milder than sub-Gaussian distributions.

\begin{definition}[Bernstein condition] \label{def:bernstein_cond}
    Let $\mu \in \calP(\R^d)$ and $\sigma^2, b > 0$. $\mu$ is said to satisfy the \emph{$(\sigma^2, b)$-Bernstein condition} if for any $k \in \nset$, $k \geq 2$, for any $\theta \in \Sphere$, $
    \int_{\R} | t |^k \dd(\thss \mu)(t) \leq \sigma^2 k! b^{k-2} / 2$.
\end{definition}

\Cref{def:subg} is strictly stronger than \Cref{def:bernstein_cond}: if $\mu \in \calP(\R^d)$ verifies the $(\sigma^2, b)$-Bernstein condition, then $\mu$ belongs to the class of heavy-tailed distributions called \emph{sub-exponential distributions} \citep{embrechts2013modelling}, which contains sub-Gaussian distributions. Hence, the class of sub-Gaussian distributions is smaller than the class of distributions characterized by \Cref{def:bernstein_cond}. 

Consider $\mu, \nu$ satisfying \Cref{def:bernstein_cond}. First, we leverage \citep[Theorem 2]{manole2020minimax} in that setting again, to show that $\psi_{\mu, \nu, p}$ scales as $n^{-1/2} \log(n)$ (\Cref{secapp: final bound unbounded}). Then, we apply the Bernstein-type McDiarmid's inequality given in \citep[Theorem 5.1]{lei2020} to establish \Cref{prop:momgenfn_bernstein}.

\begin{proposition} \label{prop:momgenfn_bernstein} 
    Let $\mu, \nu \in \calP(\R^d)$ be two distributions satisfying the Bernstein condition with parameters $(\sigma^2, b)$ and $(\tau^2, c)$ respectively. Let $\sigma^2_\star = \max(\sigma^2, \tau^2)$, $b_\star = \max(b, c)$. Then, $\mu, \nu \in \calP_1(\R^d)$ and, for any $\lambda \in \R_+^*$ s.t. $\lambda < n/(2b_\star)$, 
    \begin{align}
        &\E \left[ \exp \left( \lambda \big\{ \mathrm{W}_1(\thss \mu_n, \thss \nu_n) - \E[\mathrm{W}_1(\thss \mu_n, \thss \nu_n)]
        \big\} \right) \right] \\
        &\leq \exp(\lambda^2 \varphi_{\mu, \nu, 1}(\lambda, n)n^{-1}) \eqsp, \label{eq:momentgenfn_bernstein}
    \end{align}
    where $\varphi_{\mu, \nu, 1}(\lambda, n) = 2 \sigma_\star^2 n^{-1} (1 - 2 b_\star \lambda n^{-1})^{-1}$.
\end{proposition}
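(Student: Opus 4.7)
The plan is to apply the Bernstein-type McDiarmid inequality of \citep[Theorem 5.1]{lei2020} to the function $f(x_1,\ldots,x_n,y_1,\ldots,y_n) \doteq \mathrm{W}_1(\thss\mu_n,\thss\nu_n)$, viewed as a function of the $2n$ \iid~inputs. Before doing so, I would first confirm that $\mu,\nu \in \calP_1(\R^d)$: applying \Cref{def:bernstein_cond} with $k=2$ and $\theta = e_i$ (a canonical basis vector, which lies in $\Sphere$) shows each coordinate has a finite second moment, hence by Jensen's inequality each coordinate has finite first moment, and therefore $\E_{x\sim\mu}\|x\| \leq \sum_{i=1}^d \E_{x\sim\mu}|x_i| < \pinf$, and similarly for $\nu$.

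Next, I would establish a pointwise bounded-differences property. Fix $i \in \{1,\ldots,n\}$ and let $x_i'$ be a replacement sample, yielding empirical measure $\mu_n^{(i)}$. By the triangle inequality for $\mathrm{W}_1$ and the elementary coupling that matches $x_i$ to $x_i'$ and leaves all other atoms fixed,
\begin{equation*}
    \bigl|\mathrm{W}_1(\thss\mu_n,\thss\nu_n) - \mathrm{W}_1(\thss\mu_n^{(i)},\thss\nu_n)\bigr| \leq \mathrm{W}_1(\thss\mu_n,\thss\mu_n^{(i)}) \leq \tfrac{1}{n}\bigl|\ps{\theta}{x_i - x_i'}\bigr|,
\end{equation*}
and symmetrically for a perturbation in $y_j$. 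Then, using $|a-b|^k \leq 2^{k-1}(|a|^k+|b|^k)$ together with the Bernstein condition, for independent $x_i,x_i'\sim\mu$,
\begin{equation*}
    \E\bigl[|\ps{\theta}{x_i-x_i'}|^k\bigr] \leq 2^{k-1}\cdot 2 \cdot \tfrac{\sigma^2 k! b^{k-2}}{2} \leq \tfrac{(2\sigma)^2 k! (2b)^{k-2}}{2},
\end{equation*}
so the increment driven by replacing $x_i$ satisfies an $(\tilde\sigma^2,\tilde b)$-Bernstein condition with $\tilde\sigma^2 = 4\sigma^2$, $\tilde b = 2b$ (with the $1/n$ factor from the coupling bound to be absorbed later); the analogous statement holds for the $y_j$'s with $(\tau^2,c)$.

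With these ingredients, the Bernstein-type McDiarmid result of \cite{lei2020} delivers an MGF bound of the exact form claimed: the variance parameter collects the $2n$ increment variances, each proportional to $\sigma^2$ or $\tau^2$, yielding the numerator $2\sigma_\star^2$ after factoring out $n^{-1}$; while the sub-exponential part forces the restriction $\lambda < n/(2b_\star)$ and produces the denominator $(1-2b_\star \lambda/n)$. The main obstacle is the bookkeeping in the second step: one must carefully track the constants from (i) the inequality $|a-b|^k \leq 2^{k-1}(|a|^k+|b|^k)$, (ii) the absorption of the $1/n$ factor from the coupling bound into the Bernstein parameters used in \cite{lei2020}, and (iii) the passage from the per-coordinate parameters $(\sigma^2,b),(\tau^2,c)$ to the symmetric parameters $(\sigma_\star^2,b_\star)$, so as to recover \emph{exactly} $\varphi_{\mu,\nu,1}(\lambda,n) = 2\sigma_\star^2 n^{-1}(1-2b_\star\lambda/n)^{-1}$ with no stray constants.
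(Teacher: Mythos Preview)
Your proposal is correct and follows essentially the same route as the paper: show $\mu,\nu\in\calP_1(\R^d)$ via the $k=2$ Bernstein moment bound along coordinate directions, establish a bounded-differences estimate for $\mathrm{W}_1$ using the triangle inequality and the trivial coupling, control the $k$-th moments of the increments via $|a-b|^k\le 2^{k-1}(|a|^k+|b|^k)$ and the Bernstein condition, then invoke \citep[Theorem~5.1]{lei2020}. The only cosmetic difference is that the paper groups the inputs as $n$ i.i.d.\ pairs $w_i=(u_i,v_i)\sim\thss\mu\times\thss\nu$ rather than $2n$ separate variables; note that with your parametrization the $2n$ inputs are independent but not identically distributed (the $x_i$ come from $\mu$, the $y_j$ from $\nu$), so either check that Lei's result only requires independence, or adopt the paper's pairing to recover an i.i.d.\ sequence.
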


We emphasize the following difference between equations~\eqref{eq:momentgenfn_bernstein} and \eqref{eq:momentgenfn}: $\varphi_{\mu, \nu, 1}$ is a function of $\lambda \in \Lambda \subset \R_+$ and $n \in \nsets$, while in \Cref{thm:generic_bound}, $\varphi_{\mu, \nu, p}$ is assumed to be a constant. Nevertheless, the proof of \Cref{thm:generic_bound} can easily be adapted to derive a generic generalization bound assuming $\varphi_{\mu, \nu, p}$ depends on $(\lambda, n)$: we give the corresponding statement in \Cref{thm:extension_generic_bound}. Hence, by plugging \Cref{prop:momgenfn_bernstein} and \citep[Theorem 2]{manole2020minimax} in \Cref{thm:extension_generic_bound}, we derive the generalization bound for distributions under the Bernstein moment condition: see \Cref{secapp: final bound unbounded}.

Note that for $\mu, \nu$ satisfying \Cref{def:subg} or \Cref{def:bernstein_cond}, we derived $\varphi_{\mu, \nu, p}$ for $p = 1$ only: the generalized McDiarmid's inequalities used in the proofs of \Cref{prop:momgenfn_subg,prop:momgenfn_bernstein} can be applied if $\text{W}_p^p$ is Lipschitz \citep{kontorovicha14,lei2020}. This property is easily verified for $p = 1$, but not for $p > 1$. Hence, the derivation of $\varphi_{\mu, \nu, p}$ for $p > 1$ for such types of distributions with unbounded domains requires different proof techniques. We leave this problem for future work. 

\begin{algorithm}[t]
   \caption{PAC-SW: Adaptive SW via PAC-Bayes bound optimization.}
   \label{alg:PACSW-algorithm}
\begin{algorithmic}
    \STATE \textbf{Input:} dataset $\{(x_i,y_i)\}_{i=1}^n$, parameter $\lambda$, prior $\rho_0$, initialization $\rho^{(0)}$, number of iterations $T$, learning rate $\eta$
    \FOR{$t \gets 1$ to $T$}
    \STATE
    $\mathcal{L}(\rho^{(t-1)})\hspace{-1mm}= \swdp{\mu_n}{\nu_n}{\rho^{(t-1)}} - \kld{\rho^{(t-1)}}{\rho_0}/ \lambda $
    \STATE $\rho^{(t)}= \rho^{(t-1)} + \eta\nabla_{\rho} \mathcal{L}(\rho^{(t-1)})$
    \ENDFOR
    \STATE \textbf{return} $\rho^{(T)}$
\end{algorithmic}
\end{algorithm}

\section{Optimization of Generalization Bounds for Adaptive SW} \label{sec:applications}

We develop a principled methodology to learn a highly discriminative Sliced-Wasserstein distance, by optimizing our PAC-Bayesian generalization bounds. The idea consists in making the lower bounds of $\swdp{\mu}{\nu}{\rho}$ derived in \Cref{sec:pacbound} as tight as possible, in order to increase $\swdp{\mu}{\nu}{\rho}$ while attaining a small generalization gap. 

Given a training dataset $\{(x_i, y_i)\}_{i=1}^n$ and a prior $\rho_0 \in \calP(\Sphere)$, our objective is to find $\rho^\star(\mu_n, \nu_n)$ such that,
\begin{equation} \label{eq:rho_star}
    \rho^\star(\mu_n, \nu_n) = \underset{\rho \in \mathcal{F}}{\argsup}~\swdp{\mu_n}{\nu_n}{\rho} - \frac{\kld{\rho}{\rho_0}}{\lambda} \;
\end{equation}
with $\mathcal{F}$ a family of probability measures supported on $\Sphere$. The choice of $\mathcal{F}$ manages the complexity of solving \eqref{eq:rho_star}: it should allow simple optimization, while being flexible to make $\rho^\star(\mu_n, \nu_n)$ expressive enough.
We first propose to parameterize $\mathcal{F}$ as the class of \emph{von Mises-Fisher distributions}.
\begin{figure*}[t!]
    \centering
    \subfigure[{$\mu = \nu = \calU([0, 5]^d)$}]{
    \includegraphics[width=.235\linewidth]{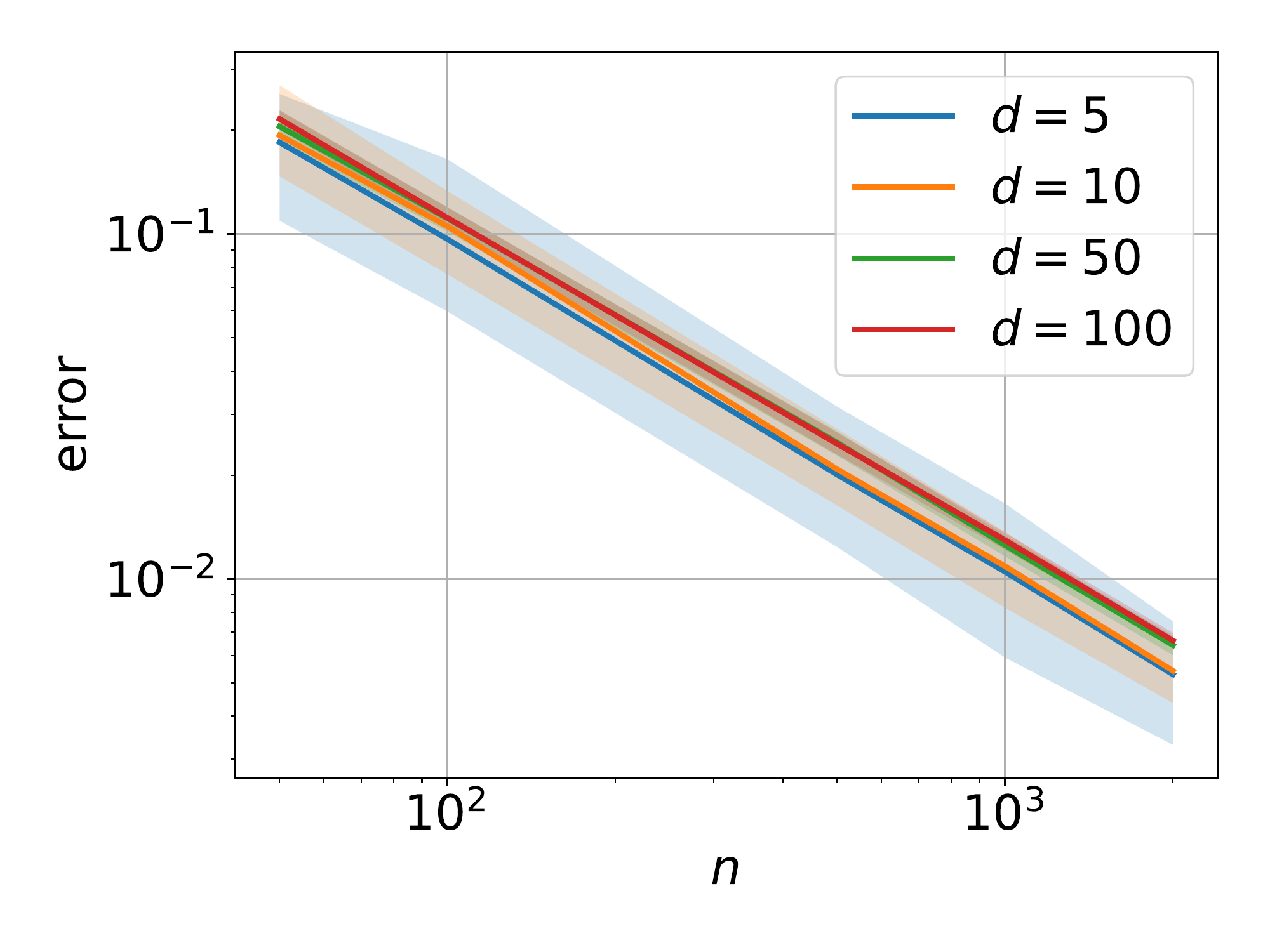}
    \includegraphics[width=.235\linewidth]{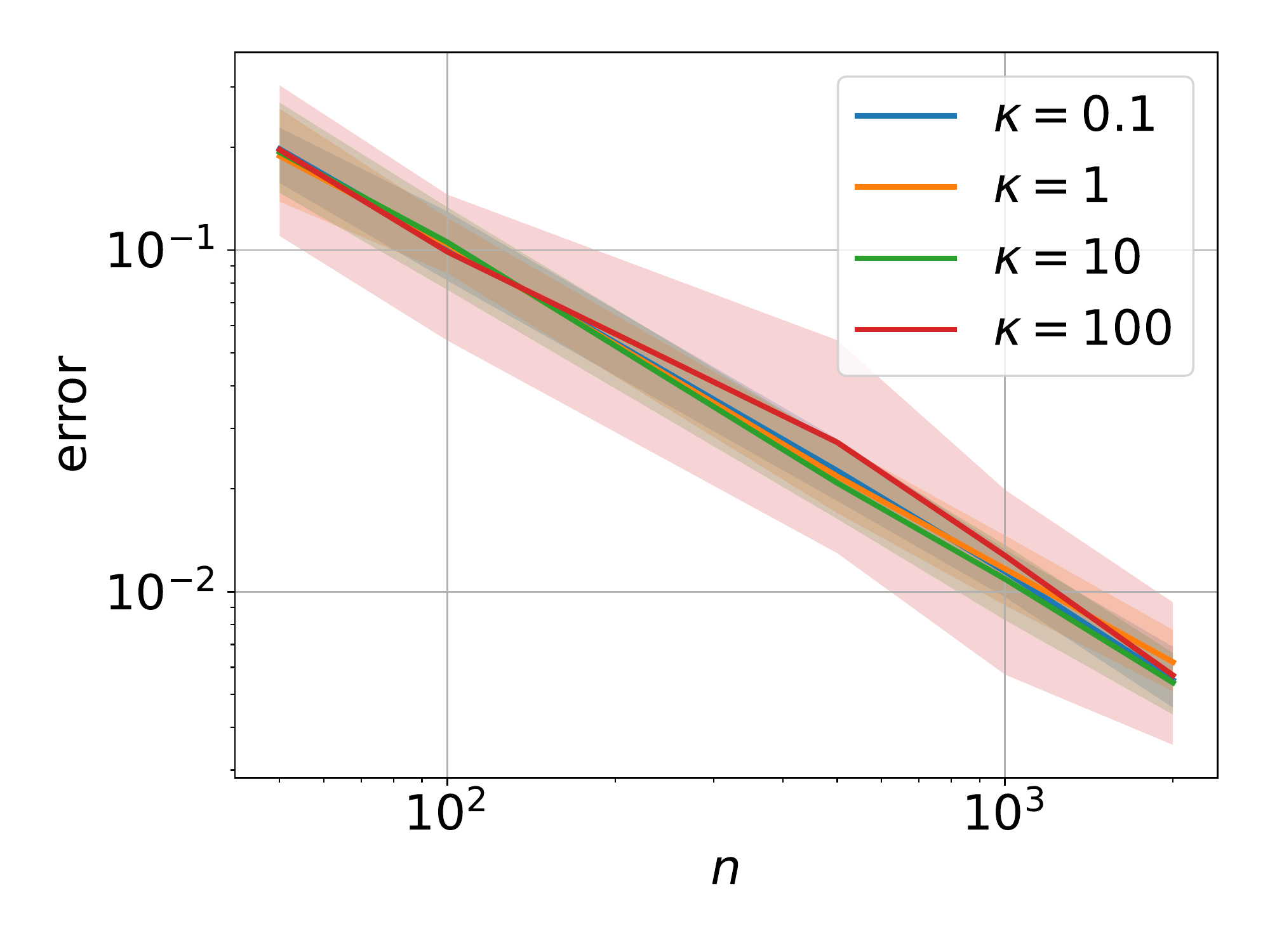}
    \label{subfig:uniform}}
    \subfigure[$\mu = \nu = \calN({\bf0}, \Sigma_d)$]{
    \includegraphics[width=.235\linewidth]{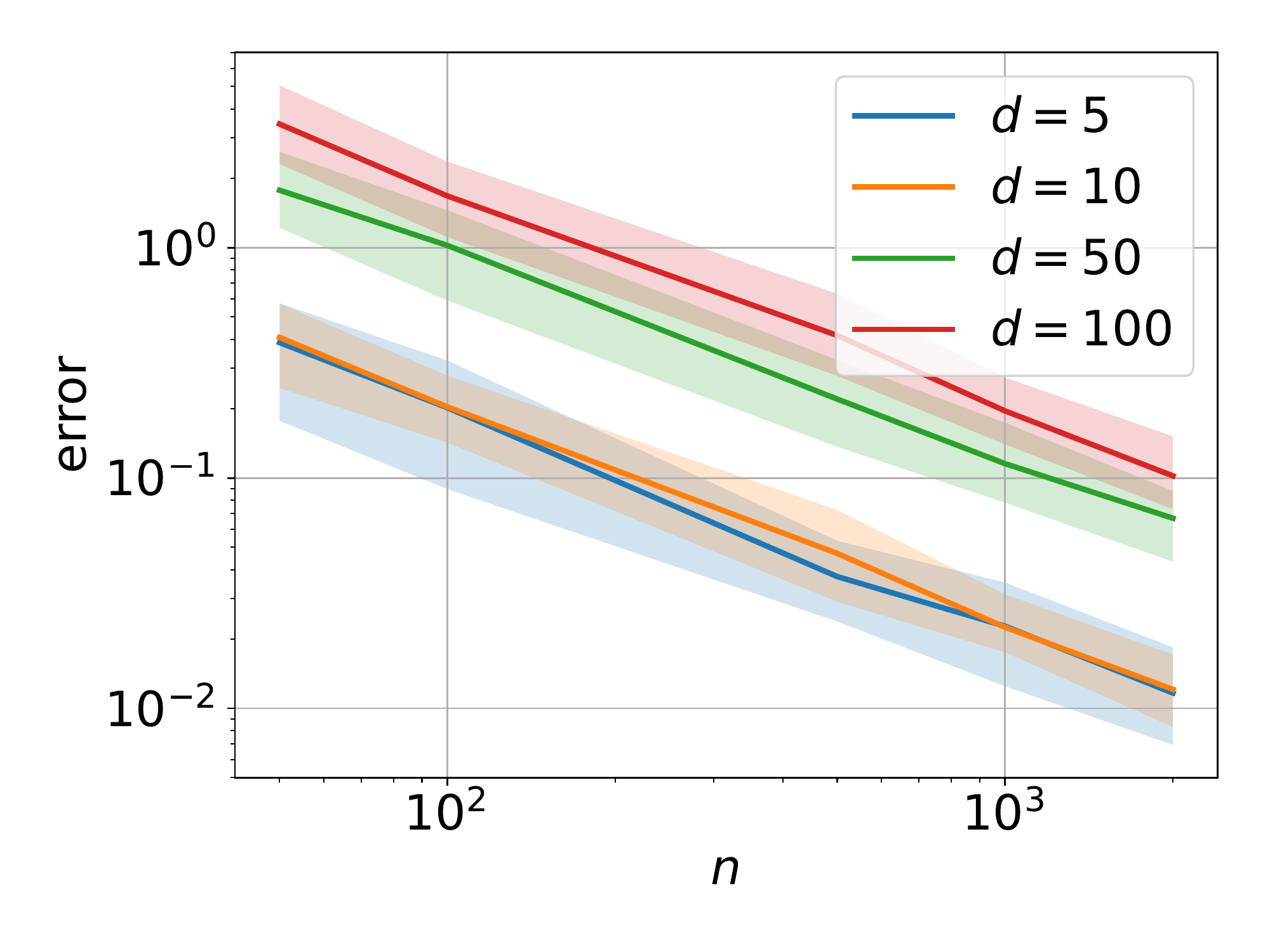}
    \includegraphics[width=.235\linewidth]{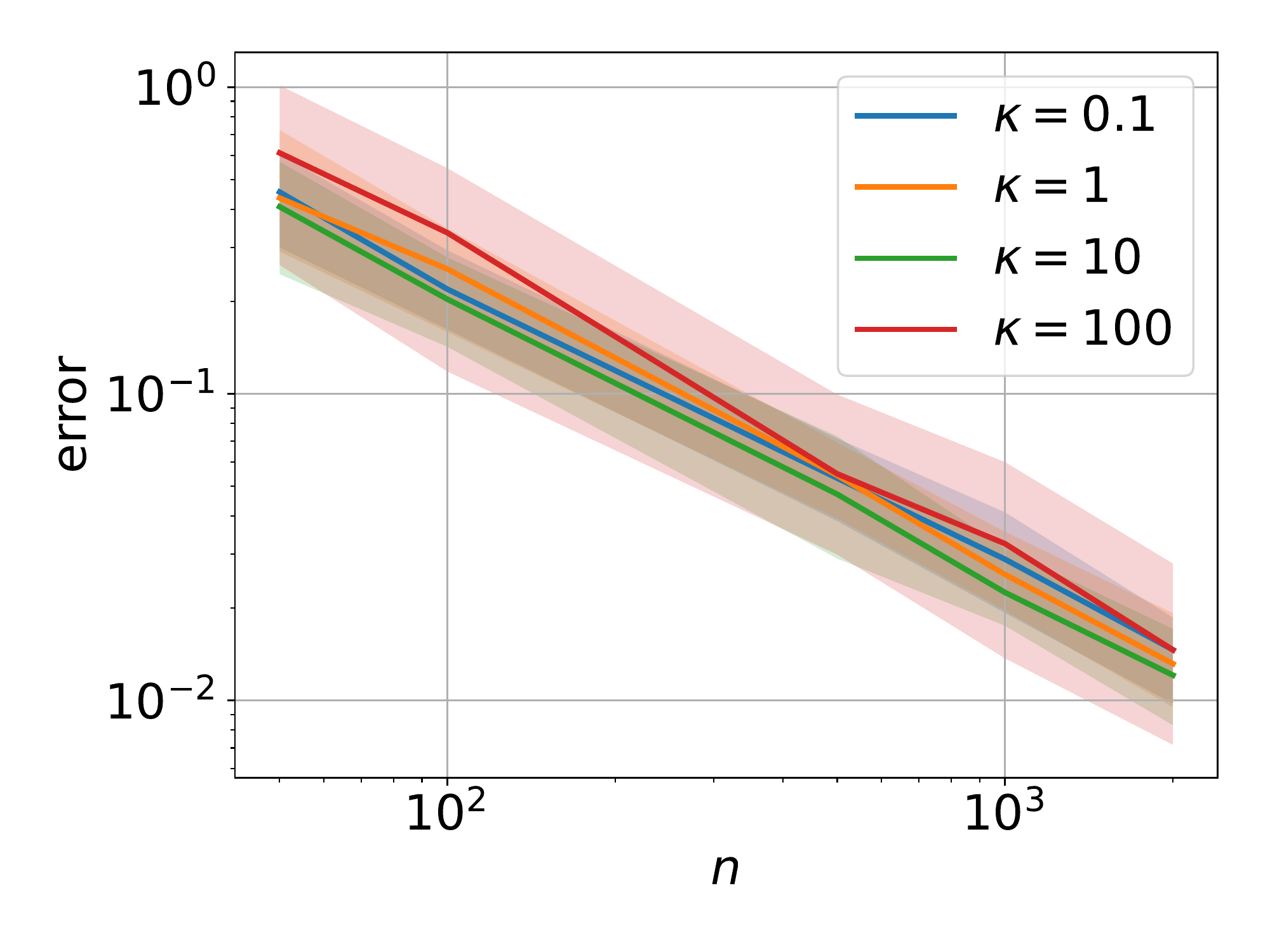}
    \label{subfig:gaussian}}
    \vspace{-3mm}
    \caption{$\swdp{\mu_n}{\nu_n}{\vMF(\mathrm{m}, \kappa)}$ vs. $n$. Results are averaged over 30 runs, on log-log scale, with 10th-90th percentiles.}
    \label{fig:illustration_bound}
    \vspace{-3mm}
\end{figure*}
\vspace{-1em}
\begin{definition} \label{def:vmf}
    The \emph{von Mises-Fisher distribution}  $\vMF(\mathrm{m}, \kappa)$ with \emph{mean direction} $\mathrm{m} \in \Sphere$ and \emph{concentration parameter} $\kappa \in \R^*_+$ is a distribution on $\Sphere$ whose density is defined for $\theta \in \Sphere$ by $\vMF(\theta ; \mathrm{m}, \kappa) = C_{d/2}(\kappa) \exp(\kappa \mathrm{m}^\top \theta)$, $C_{d/2}(\kappa) = \kappa^{d/2 - 1} / \{(2\pi)^{d/2} I_{d/2-1}(\kappa)\}$, with $I_{d/2-1}$ the \emph{modified Bessel function of the first kind at order $d/2-1$}. 
\end{definition}

Intuitively, the higher $\kappa$, the more concentrated $\vMF(\mathrm{m}, \kappa)$ is around $\mathrm{m}$. Our objective becomes finding $(\mathrm{m}^\star, \kappa^\star)$ such that $\vMF(\mathrm{m}^\star, \kappa^\star)$ maximizes \eqref{eq:rho_star} over $\mathcal{F} = \{ \vMF(\mathrm{m}, \kappa), \mathrm{m} \in \Sphere, \kappa \in \R_+^*\}$. Von Mises-Fisher distributions have been successfully deployed in several 
machine learning problems to effectively model spherical data \citep{hasnat2017mises,kumar2018mises, scott2021}. Besides, one main advantage of using vMF is that both the KL divergence between $\rho = \vMF(\mathrm{m}, \kappa)$ and $\rho_0 = \calU(\Sphere)$ and its gradient with respect to $(\mathrm{m}, \kappa)$ admit an analytical formula \citep{davidson2018}.

While the vMF parameterization is practical, as it yields an analytical objective, it may suffer from a lack of expressivity (e.g., vMF distributions are unimodal). To handle more complicated data, we also consider the parameterization proposed in \cite{nguyen2021distributional}: we solve \eqref{eq:rho_star} over $\mathcal{F} = \{ \rho = f_\sharp \calU(\Sphere), f \text{ a neural network} \}$. In that case, the KL penalty term is intractable and we approximate it with the methodology in \citep{ghimire2021reliable} -- where approximation errors of the KL estimator are given in different scenarios.

We approximate the solution of \eqref{eq:rho_star} via gradient ascent: our methodology is depicted in \Cref{alg:PACSW-algorithm}, and specialized in \Cref{alg:vmf_sw} for the vMF parameterization.

\textbf{Tuning $\lambda$.}\; In classical PAC-Bayesian theory, $\lambda$ is usually set to $n^{1/2}$ so that all terms in the bound that depend on $\lambda$ converge at the same rate to 0, as $n$ grows to $\pinf$. Nevertheless, using $\lambda = n^{\alpha}$ with $\alpha \in (0, 1)$, $\alpha \neq 1/2$ can be more useful in some specific settings. For instance, a common issue when optimizing PAC-Bayesian bounds is that the objective can be dominated by the KL term \citep{cheriefabdellatif2022}. To overcome this, one can downweight the KL term by using $\alpha > 1/2$, or more sophisticated schemes \citep{blundell2015}. On the other hand, as shown in \Cref{sec:a1}, \ref{sec:a2_subg} and \ref{subsec:bernstein}, $\varphi_{\mu, \nu, p}$ depend on parameters related to the properties of $\mu, \nu$, which cannot be easily controlled in practice. Choosing $\lambda = n^{\alpha}$ with $\alpha < 1/2$ helps attenuate their influence on the objective \citep{haddouche2021pac}.

\section{Numerical Experiments} 
\label{sec:expes}

We conduct an empirical analysis to confirm our theoretical contributions and illustrate their consequences in practice, on both synthetic and real data. More details on our experimental setup are given in \Cref{appendix:additional_exp}, and the code is available at \href{https://github.com/rubenohana/PAC-Bayesian_Sliced-Wasserstein}{https://github.com/rubenohana/PAC-Bayesian\_Sliced-Wasserstein}.

\begin{figure}[t!]
    \centering
    \subfigure[$d=5$]{
    \includegraphics[width=.47\linewidth]{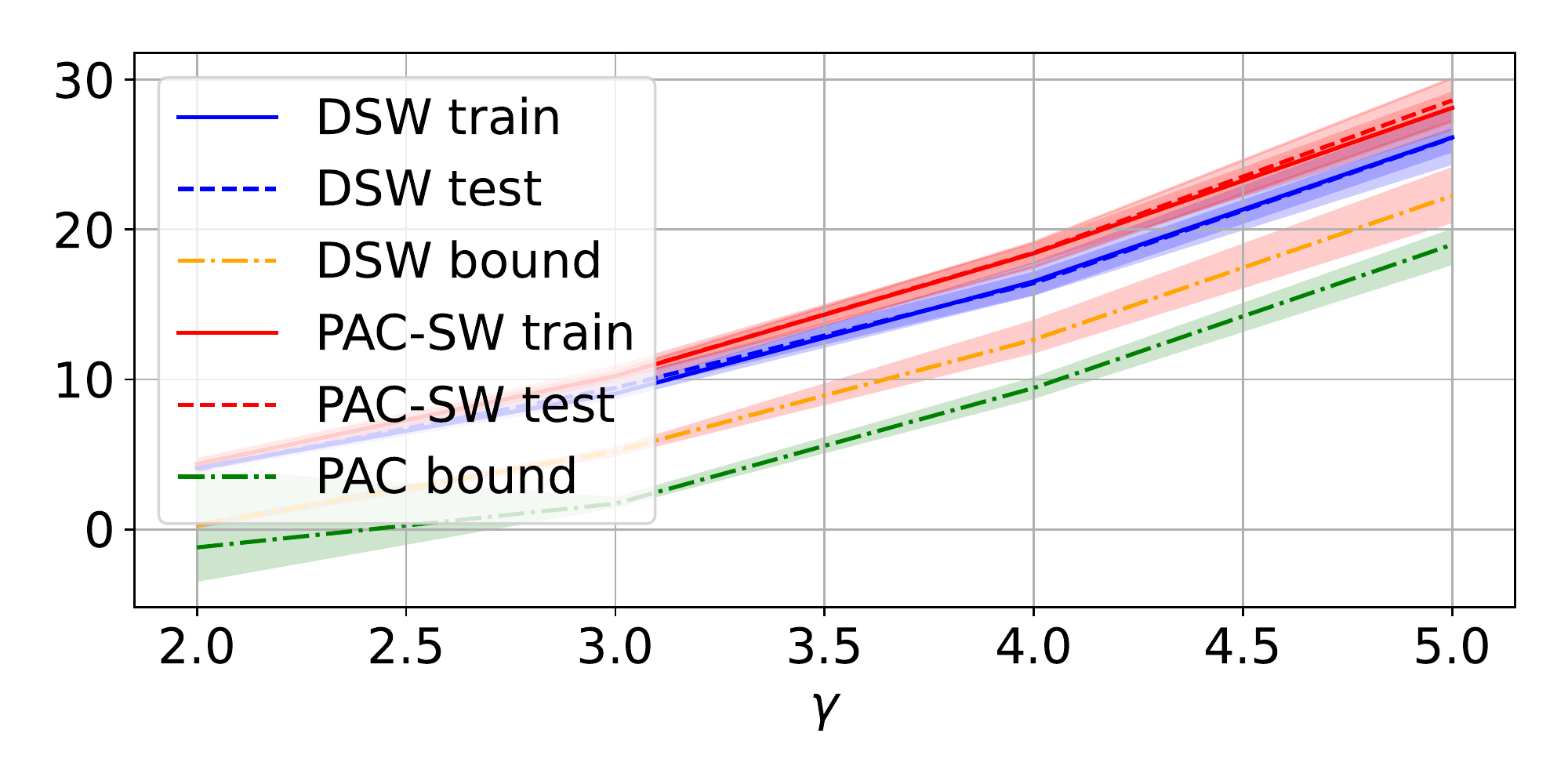}
    }
    \subfigure[$d=20$]{
    \includegraphics[width=.47\linewidth]{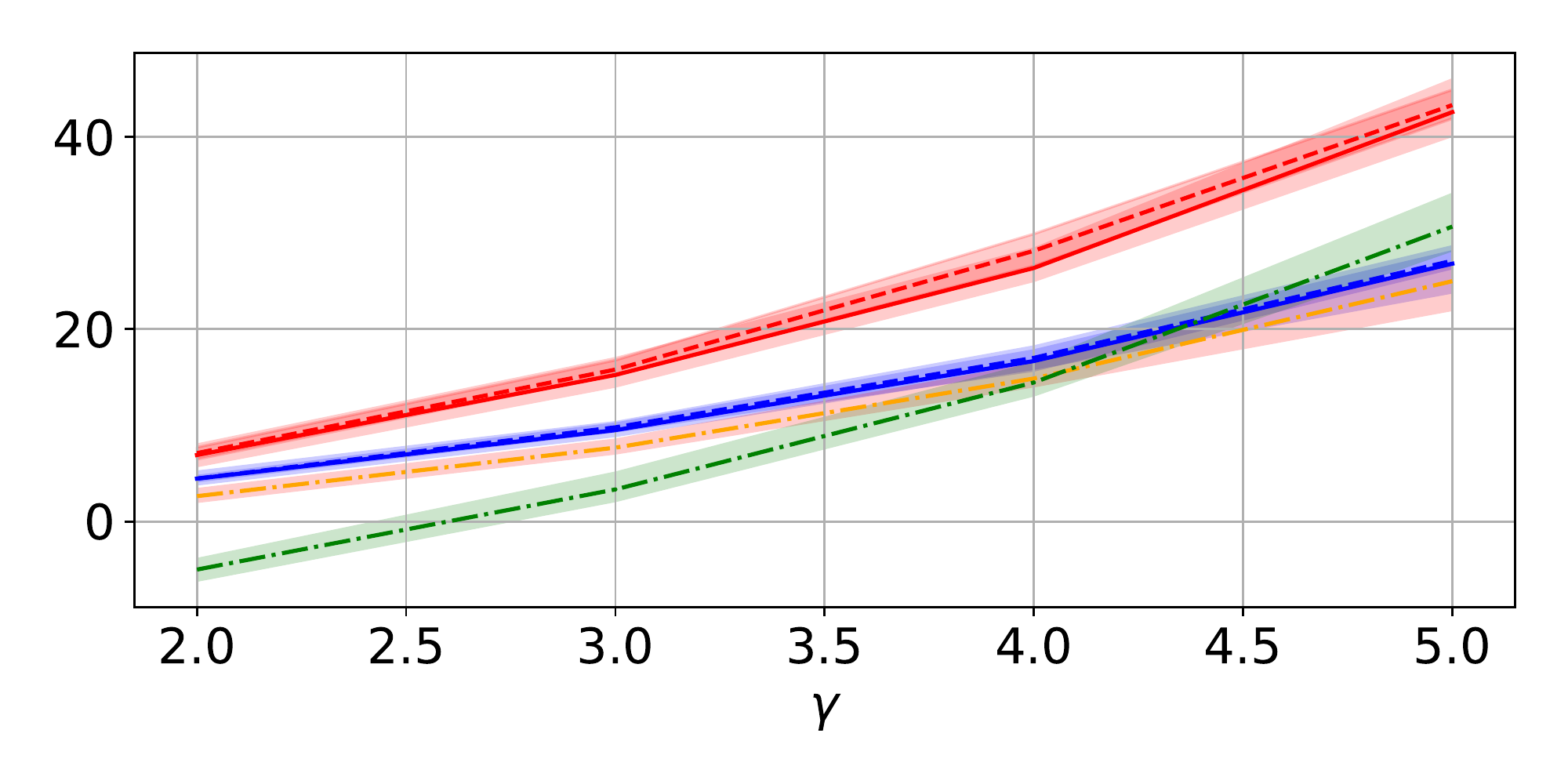}
    }
    \vspace{-3mm}
    \caption{PAC-SW and DSW between $\mu = \calN({\bf0}, \Sigma_d)$ and $\nu = \calN(\gamma{\bf1}, \Sigma_d)$. The $y$-axis shows the distances or the associated objective functions (see legend). Results are averaged over 10 runs, and shown with 10th-90th percentiles.}
    \label{fig:comparison_dsw}
    \vspace{-5mm}
\end{figure}

\textbf{Illustration of our bounds.}\; Our first set of experiments aims at empirically validating the rates of convergence in \Cref{sec:pacbound}. We sample two sets of $n$ \iid~samples from the same distribution $\mu \in \calP(\R^d)$. To illustrate our bound on both bounded and unbounded supports, we choose $\mu$ as a uniform or Gaussian distribution. We approximate $\swdp{\mu_n}{\nu_n}{\vMF(\mathrm{m}, \kappa)}$ with $\mathrm{m} \sim \calU(\Sphere)$ and $\kappa > 0$ by its Monte Carlo estimate \eqref{eq:swmc_estimate} over $1000$ projection directions. \Cref{fig:illustration_bound} plots the approximation error (which reduces to $\swdp{\mu_n}{\nu_n}{\vMF(\mathrm{m}, \kappa)}$, since the two datasets come from the same distribution) against $n$, for different $d$ and $\kappa$. We observe that the error decays to 0 as $n$ increases, and the convergence rate is slower as $d$ and $\kappa$ increase. This confirms our theoretical analysis: the higher $d$, the larger the diameter (resp., the sub-Gaussian diameter) when $\mu$ is uniform (resp., Gaussian), the larger $\varphi_{\mu, \nu, p}$ (\Cref{prop:momgenfn_compact,prop:momgenfn_subg}). Besides, the higher $\kappa$, the larger $\kld{\vMF(\mathrm{m}, \kappa)}{\calU(\Sphere)}$.

\begin{figure*}[t!]
    \centering
    \includegraphics[width = 0.95\linewidth]{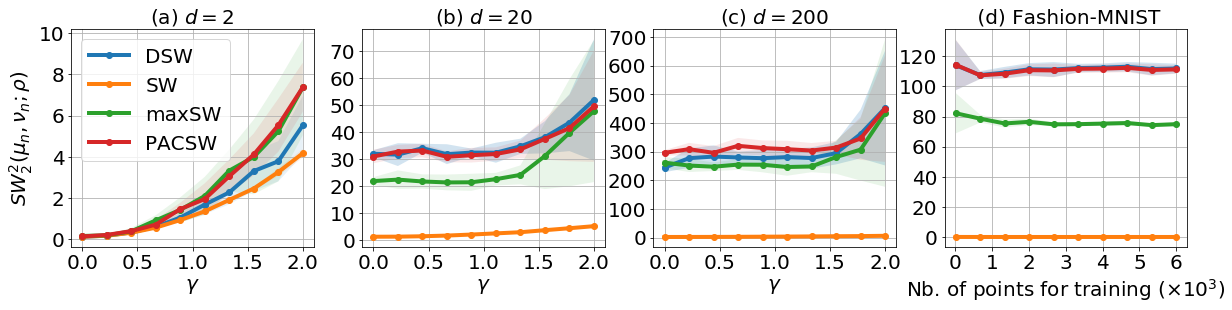}
    \vspace{-5mm}
    \caption{$\swdp{\mu_n}{\nu_n}{\rho}$ with (a-c) $\mu = \calN({\bf0}, \Sigma_d)$, $\nu = \calN(\gamma{\bf1}, \Sigma_d)$, $n = 1000$, against $\gamma$, (d) classes 4 and 5 of Fashion-MNIST, against $n$. $\rho$ is learned on the train set, and we report values on the test set.}
    \label{fig: Gaussian results diff cov}
    \vspace{-5mm}
\end{figure*}
\textbf{Generalization ability of PAC-SW.}\; Next, we study the generalization properties of PAC-SW, \ie~whether the adaptive SW computed by \Cref{alg:PACSW-algorithm} is discriminative, even on unseen data. We compare $\mu = \calN({\bf0}, \Sigma_d)$ and $\nu = \calN(\gamma {\bf1}, \Sigma_d)$, with $\gamma > 0$, $\Sigma_d \in \R^{d\times d}$ symmetric positive semi-definite set at random, and ${\bf0}$ (resp., ${\bf1}$) the vector whose components are all equal to 0 (resp., 1). The higher $\gamma$, the more dissimilar $\mu$ and $\nu$. We sample $n = 500$ samples from $\mu$ and $\nu$ and optimize $\rho^\star(\mu_n, \nu_n)$: the optimization is performed on the space of vMF distributions, using Adam \citep{kingma2014method} with its default parameters. To analyze the generalization properties of $\rho^\star(\mu_n, \nu_n)$, we sample $l = 2000$ test points from $\mu, \nu$ and compute 
$\swdp{\mu_{l}}{\nu_{l}}{\rho^\star(\mu_n, \nu_n)}$. We also compute the value of \eqref{eq:rho_star}, to evaluate the tightness of our bound. Results for different values of $d$ and $\gamma$ are reported in \Cref{fig:comparison_dsw}, and confirm the generalization ability of $\rho^\star(\mu_n, \nu_n)$.

\textbf{Comparison to existing instances of SW.}\; In our previous experiment, we also implement a variant of DSW, which consists in solving \citep[Definition 2]{nguyen2021distributional} based on our vMF parameterization.
\Cref{fig:comparison_dsw} shows that the gap between $\swdp{\mu_n}{\nu_n}{\rho^\star_{\text{DSW}}(\mu_n, \nu_n)}$ and $\swdp{\mu_m}{\nu_m}{\rho^\star_{\text{DSW}}(\mu_n, \nu_n)}$ is small, hence $\rho^\star_{\text{DSW}}(\mu_n, \nu_n)$ generalizes well on that setup. DSW bound in \Cref{fig:comparison_dsw} corresponds to the associated objective function of \citep[Definition 2]{nguyen2021distributional}.

Next, we compare the generalization properties of PAC-SW and DSW, with $\rho$ parameterized as a neural network. We also evaluate max-SW and SW (\ie, $\swdp{\cdot}{\cdot}{\calU(\Sphere}$). We compute the Monte Carlo estimate with $m = 200$ and the learning rate $\eta$ is taken as the best (\ie, yielding the higher distance) out of  $[10^{-3}, 10^{-2}, 10^{-1},1]$. Each run is averaged 10 times with standard deviations in shaded areas. On Figure \ref{fig: Gaussian results diff cov}(a-c), we measure the distance between two Gaussians, as in \Cref{fig:comparison_dsw}. We observe that PAC-SW is always amongst the most discriminative distances, and since we evaluate distances on unseen data, this implies it has better generalization properties. On \Cref{fig: Gaussian results diff cov}(d), we consider a more complicated dataset: we measure the distance between 2 highly dissimilar classes of the Fashion-MNIST dataset \citep{xiao2017fashion} (classes 4 (\textit{coats}) and 5 (\textit{sandals})) for different number of training points. PAC-SW and DSW return higher values than max-SW and SW, illustrating they are able to better discriminate, even on test data.

Note that max-SW and DSW share a common feature: they learn a new distribution $\rho(\mu_n, \nu_n)$ every time they are called on new $(\mu_n, \nu_n)$, \ie~they embed an optimization step. From here on, when we will refer to the generalization ability of max-DSW and DSW, it must be understood that a distribution $\rho^{\star}$ is learned from one sample pair $(\mu_n,\nu_n)$ according to their respective induction principle, and $\rho^{\star}$ is used on test data to measure the generalization ability.

\textbf{Generalization for generative modeling.}\; In our previous experiments, we observed that DSW can generalize as well as PAC-SW. This encourages us to further explore the advantages of a high generalization ability on a more complicated setup. 
We consider a generative modeling task on MNIST data \citep{deng2012mnist}, and we train a deep neural network that uses DSW as a loss, in the flavor of \citep{deshpande2018generative,nguyen2021distributional}. Usually, the distribution $\rho$ is learned at each iteration, when the user receive new data. We conjecture that if the learned distribution generalizes well to unseen datasets, then gradients obtained from the distance between minibatches would still provide sufficient information to learn the generative model. As a consequence, we evaluate the robustness and generalization ability of the learned distribution using DSW updated only every $10$ or $50$ minibatches  (denoted by $-10$ or $-50$ resp.). To train the model, we followed the same approach (architecture and optimizer) as the one described in \cite{nguyen2021distributional}. For each minibatch of size $512$, the distribution $\rho$ is learned by optimizing $100$ projections over $100$ iterations and the generative model is trained over $400$ epochs. We also report results of a generative model trained with max-SW.

\begin{figure}
    \centering
\includegraphics[width=0.8\linewidth]{./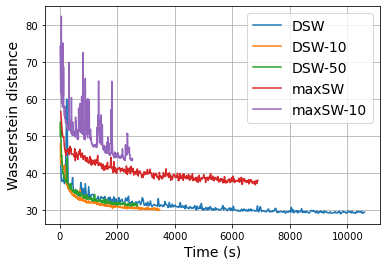}    
\vspace{-3mm}
\caption{Evolution of the Wasserstein distance between a set of generated MNIST digits and the true MNIST test set with respect to training time.}\label{fig:genmodevolution}
\vspace{-6mm}
\end{figure}

Figure \ref{fig:genmodevolution} shows the evolution of the Wasserstein distance (WD) between generated data and the test set with respect to training time (measured after each epoch), for each distance and different update rate of the distribution $\rho$.
We can observe that
classical DSW yields a WD of $29$ after $\sim10^4$s. When learning $\rho$ every $10$ minibatch (DSW-10), we achieve similar a WD value with half the running time. When further reducing the
frequency update of $\rho$ (DSW-50), we converge faster but with a loss in quality of generation (WD $\sim32$). While using max-SW as a loss yields a reasonable performance, computing $\rho^\star_{\text{maxSW}}$ every $10$ minibatches leads to a very unstable learning and worst performances. Results for the PAC-SW loss and examples of generated digits can be found in \Cref{secapp: generated MNIST}.

\section{Conclusion}

We introduced a specific notion of generalization for adaptive SW, related to discriminative power, and leveraged the PAC-Bayesian framework to derive generalization bounds. We then developed a principled methodology to learn $\rho$ from the observed data so as $\swdp{\cdot}{\cdot}{\rho}$ is discriminative with high probability, thus, generalizes well. Our work, which presents the first connection between PAC-Bayes and SW, paves the way to interesting research directions. First, we will study possible refinements of our bounds, using other PAC-Bayes bounds than Catoni's. Then, we plan to further analyze why DSW generalizes well in our experiments, e.g. by investigating a potential connection between the optimization problem in \citep{nguyen2021distributional} and ours. Finally, we would like to reduce the computational complexity of PAC-SW when $\rho$ is parameterized as a neural network, since it suffers from slow execution times mainly because of the approximation of the KL term with \citep{ghimire2021reliable}.

\bibliographystyle{ICML_template/icml2023}
{\small
\bibliography{main_bib}
}
\appendix

\setcounter{equation}{0}
\setcounter{figure}{0}
\setcounter{table}{0}
\renewcommand{\thesection}{A\arabic{section}}
\renewcommand{\theequation}{A\arabic{equation}}
\renewcommand{\thefigure}{A\arabic{figure}}
\renewcommand{\thedefinition}{A\arabic{definition}}
\renewcommand{\thetheorem}{A\arabic{theorem}}
\renewcommand{\thelemma}{A\arabic{lemma}}
\renewcommand{\theproposition}{A\arabic{proposition}}
\renewcommand{\thecorollary}{A\arabic{corollary}}
\renewcommand{\thealgorithm}{A\arabic{algorithm}} 



%

%

\onecolumn


\section{Preliminaries}

\subsection{Metric Properties of Sliced-Wasserstein Distances} \label{appendix:metric_sw}

Previous work have shown that for specific instances of $\rho \in \calP(\Sphere)$, $\swd{\cdot}{\cdot}{\rho} : \calP_p(\R^d) \times \calP_p(\R^d) \to \R_+$ is a metric, as it satisfies all metric axioms (positivity, symmetry, triangle inequality, identity of indiscernibles) \citep{Bonnotte2013,kolouri2019generalized,nguyen2021distributional,weed2022}. However, to the best of our knowledge, the metric properties of $\swd{\cdot}{\cdot}{\rho}$ for \emph{any} $\rho \in \calP(\Sphere)$ have not been established.

By adapting the proof techniques in \citep{Bonnotte2013,kolouri2019generalized}, and due to the metric properties of the Wasserstein distance, one can show that symmetry, positivity and triangle inequality hold for any $\rho \in \calP(\Sphere)$, and that for any $\mu \in \calP_p(\R^d)$, $\swd{\mu}{\mu}{\rho} = 0$.

However, the reverse implication of the identity of indiscernibles, \ie
\begin{equation} \label{eq:indisc}
    \forall \mu,\nu \in \calP_p(\R^d),\; \swd{\mu}{\nu}{\rho} = 0 \text{\; implies \;} \mu = \nu \,,
\end{equation} 
does not hold for any $\rho \in \calP(\Sphere)$. For example, consider $\mu, \nu \in \calP_p(\mathsf{X})$ with $\mathsf{X} \subset \R^d$, and $\mu$ different from $\nu$. Suppose that $\rho \in \calP(\Theta)$ with $\Theta \subset \Sphere$ such that $\forall (\theta, x) \in \Theta \times \mathsf{X}, \ps{\theta}{x} = 0$. In that case, for any $\theta \sim \rho$, $\thss \mu = \thss \nu = \updelta_{\{0\}}$, and since $\text{W}_p$ is a metric, $\text{W}_p(\thss\mu,\thss\nu) = 0$. Therefore, $\swdp{\mu}{\nu}{\rho} = \int_{\Theta} \wdp{\thss\mu}{\thss\nu} \dd \rho(\theta) = 0$, but $\mu \neq \nu$, so \eqref{eq:indisc} is not satisfied.

We conclude that for any $\rho \in \calP(\Sphere)$, $\swd{\cdot}{\cdot}{\rho}$ is a \emph{pseudo-metric}, and if \eqref{eq:indisc} is satisfied, then it is a metric.

\subsection{Generalization Bounds for SW} \label{subsec:genbound_maxsw}

We precise our argument in \Cref{sec:intro}, 
which states that bounds on the generalization gap for SW distances can be established using existing results for max-SW.

Let $\rho \in \calP(\Sphere)$. By applying the triangle inequality for $\swd{\cdot}{\cdot}{\rho}$, then by the definition of max-SW, we obtain,
\begin{align}
    \E | \swd{\mu_n}{\nu_n}{\rho} - \swd{\mu}{\nu}{\rho} | &\leq \E[\swd{\mu_n}{\mu}{\rho}] + \E[\swd{\nu_n}{\nu}{\rho}] \label{eq:sum_max_sw_0} \\
    &\leq \E[\text{maxSW}(\mu_n, \mu)] + \E[\text{maxSW}(\nu_n, \nu)] \,, \label{eq:sum_max_sw}
\end{align}
where $\E$ is taken with respect to $\{x_i\}_{i=1}^n, \{y_i\}_{i=1}^n$ \iid~from $\mu, \nu$ respectively.
We can then bound from above \eqref{eq:sum_max_sw}, using the convergence rates established in \citep[Section 3.2]{lin2021} or \citep[Theorem 1]{weed2022}. 
These rates vary depending on the properties of $\mu, \nu$: for instance, \citep[Theorem 3.5]{lin2021} holds if $\mu, \nu$ satisfy the Bernstein condition.

Nevertheless, we argue that the generalization bounds resulting from eq.\eqref{eq:sum_max_sw_0}-\eqref{eq:sum_max_sw} are not tight for an arbitrary $\rho \in \calP(\Sphere)$. For instance, since we bound (\ref{eq:sum_max_sw}) with \citep{lin2021,weed2022}, we obtain convergence rates that linearly depend on $d$ for any $\rho$, due to the properties of maximum SW. However, if we consider $\rho = \calU(\Sphere)$, it is known that $ \E | \swd{\mu_n}{\nu_n}{\rho} - \swd{\mu}{\nu}{\rho} |$ converges to 0 at a dimension-free rate \citep{nadjahi2020statistical}.

Another important drawback of such bounds is that the impact of $\rho$ on the convergence rates is unclear. In \Cref{subsec:proof_genbound}, we will further explain why our generalization bounds derived from PAC-Bayesian theory are more flexible and informative for arbitrary $\rho$.

\section{Postponed Proofs for \Cref{sec:pacbound}} \label{appendix:postponed_proofs}


\subsection{Proof of \Cref{thm:generic_bound}}\label{subsec:proof_genbound}

\Cref{thm:generic_bound} is obtained by adapting the proof techniques of Catoni's PAC-Bayesian bound \citep{catoni03pacbayesian}.  
First, we recall Donsker and Varadhan’s variational formula, which plays a central role in the PAC-Bayesian framework. 

\begin{lemma}[Donsker and Varadhan’s variational formula \cite{donsker1975variational}] \label{lem:donskerv}
Let $\Theta$ be a set equipped with a $\sigma$-algebra and $\pi \in \calP(\Theta)$. For any measurable, bounded function $h:\Theta\rightarrow\R$,
\begin{equation}
\log\mathbb{E}_{\theta\sim\pi}\big[\exp(h(\theta))\big] = \underset{\rho \in \calP(\Theta)}{\sup} \big[\E_{\theta\sim\rho}[h(\theta)]-\kld{\rho}{\pi}\big]
\end{equation}
\end{lemma}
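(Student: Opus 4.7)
The plan is to prove the variational identity by exhibiting an explicit maximizer, the \emph{Gibbs distribution}. Let $Z \doteq \E_{\theta \sim \pi}[\exp(h(\theta))]$, which lies in $(0, +\infty)$ since $h$ is bounded and measurable. I introduce $\rho^\star \in \calP(\Theta)$ defined by
$$\frac{d\rho^\star}{d\pi}(\theta) = \frac{\exp(h(\theta))}{Z}.$$
Boundedness of $h$ makes the density bounded above and bounded away from zero, so $\rho^\star \ll \pi$ and $\pi \ll \rho^\star$.

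Next I would reduce to the absolutely continuous case. If $\rho \in \calP(\Theta)$ is not absolutely continuous with respect to $\pi$, then $\kld{\rho}{\pi} = +\infty$, so the inequality $\E_{\theta \sim \rho}[h(\theta)] - \kld{\rho}{\pi} \leq \log Z$ holds trivially (using once more that $h$ is bounded, hence $\E_{\theta \sim \rho}[h(\theta)]$ is finite). It therefore suffices to consider $\rho \ll \pi$, for which the chain rule for Radon–Nikodym derivatives gives $\rho \ll \rho^\star$ with
$$\log \frac{d\rho}{d\rho^\star}(\theta) = \log \frac{d\rho}{d\pi}(\theta) - h(\theta) + \log Z.$$
Integrating both sides against $\rho$ (all terms being integrable thanks to boundedness of $h$) yields the central identity
$$\kld{\rho}{\rho^\star} = \kld{\rho}{\pi} - \E_{\theta \sim \rho}[h(\theta)] + \log Z,$$
or equivalently
$$\E_{\theta \sim \rho}[h(\theta)] - \kld{\rho}{\pi} = \log Z - \kld{\rho}{\rho^\star}.$$

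The conclusion is then immediate from Gibbs' inequality, i.e. the non-negativity of the KL divergence: the right-hand side is at most $\log Z$, with equality when $\rho = \rho^\star$. Taking the supremum over $\rho \in \calP(\Theta)$ gives
$$\sup_{\rho \in \calP(\Theta)} \big[ \E_{\theta \sim \rho}[h(\theta)] - \kld{\rho}{\pi} \big] = \log Z = \log \E_{\theta \sim \pi}[\exp(h(\theta))],$$
and the supremum is attained by the Gibbs distribution $\rho^\star$.

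The only real care is bookkeeping: verifying that $\rho^\star$ is well defined as a probability measure (requiring $Z < +\infty$), justifying the chain rule $\log(d\rho/d\rho^\star) = \log(d\rho/d\pi) - \log(d\rho^\star/d\pi)$ on the support of $\rho$, and confirming that all integrals are finite so the algebraic rearrangement is legitimate. All of these follow cleanly from $h$ being bounded and measurable, so no technical obstacle should remain once the Gibbs distribution has been introduced.
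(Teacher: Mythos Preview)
Your proof is correct and is the standard argument via the Gibbs distribution. The paper does not actually prove this lemma; it merely states it as a classical result and cites \cite{donsker1975variational}, so there is no proof in the paper to compare against.
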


\begin{proof}[Proof of \Cref{thm:generic_bound}]
    Let $p \in [1, \pinf)$ and $\mu, \nu \in \calP_p(\R^d)$. Assume there exists $\varphi_{\mu, \nu, p}$ such that for any $\theta \in \Sphere$ and $\lambda > 0$, 
    \begin{equation} \label{supp:eq:momentgenfn}
        \E_{\mu, \nu} \left[ \exp \left( \lambda \Big\{ \wdp{\thss \mu_n}{\thss \nu_n} - \E_{\mu, \nu}[\wdp{\thss \mu_n}{\thss \nu_n}] \Big\} \right) \right] \leq \exp(\lambda^2 \varphi_{\mu, \nu, p} n^{-1}) \eqsp.
    \end{equation}
    
    Let $\rho_0 \in \calP(\Sphere)$. By taking the expectation of \eqref{supp:eq:momentgenfn} with respect to $\rho_0$, then using Fubini's theorem to interchange the expectation over $\rho_0$ and the one over $\mu, \nu$, we obtain
    \begin{equation} \label{supp:eq:fubini}
        \E_{\mu, \nu} \E_{\theta \sim \rho_0} \left[ \exp\big( \lambda \big\{ \wdp{\thss \mu_n}{\thss \nu_n} - \E_{\mu, \nu}[\wdp{\thss \mu_n}{\thss \nu_n}] \big\} \big) \right] \leq \exp(\lambda^2 \varphi_{\mu, \nu, p}n^{-1}) \eqsp.
    \end{equation}
    
    By definition of the Wasserstein distance between empirical, univariate distributions of \eqref{eq:wasserstein_1D_discrete}
    , one can prove that $\theta \mapsto \lambda \big\{ \wdp{\thss \mu_n}{\thss \nu_n} - \E_{\mu, \nu}[\wdp{\thss \mu_n}{\thss \nu_n}] \big\}$ is a bounded real-valued function on $\Sphere$. Therefore, we can apply \Cref{lem:donskerv} to rewrite \eqref{supp:eq:fubini} as follows.
    \begin{align}
        &\E_{\mu, \nu} \Big[ \exp\Big(\sup_{\rho \in \calP(\Sphere)} \big[ \E_{\theta\sim\rho} \big[ \lambda \big\{ \wdp{\thss \mu_n}{\thss \nu_n} - \E_{\mu, \nu}[\wdp{\thss \mu_n}{\thss \nu_n}] \big\}\big] - \kld{\rho}{\rho_0} \big] \Big) \Big] \nonumber \\
        &\qquad \leq \exp(\lambda^2 \varphi_{\mu, \nu, p}n^{-1}) \eqsp,
    \end{align}
    which, using the linearity of the expectation along with the definition of SW \eqref{eq:def_sw}
    , is equivalent to
    \begin{align}
     \E_{\mu, \nu} \Big[ \exp\Big(\sup_{\rho \in \calP(\Sphere)}  \big[ \lambda \big\{ \swdp{\mu_n}{\nu_n}{\rho} - \E_{\mu, \nu}[\swdp{\mu_n}{\nu_n}{\rho}] \big\} - \kld{\rho}{\rho_0} \big]\Big) \Big] \leq \exp(\lambda^2 \varphi_{\mu, \nu, p}n^{-1})\eqsp,\nonumber\\
     \intertext{or,}
    \label{supp:eq:fubini_bis}
        \E_{\mu, \nu} \Big[ \exp\Big(\sup_{\rho \in \calP(\Sphere)}  \big[ \lambda \big\{ \swdp{\mu_n}{\nu_n}{\rho} - \E_{\mu, \nu}[\swdp{\mu_n}{\nu_n}{\rho}] \big\} - \kld{\rho}{\rho_0} \big] - \lambda^2 \varphi_{\mu, \nu, p}n^{-1} \Big) \Big] \leq 1.
    \end{align} 
    
    Let $s > 0$. By the Chernoff bound $\left(\mathbb{P}(X>a) =\mathbb{P}(e^{s.X}\geq e^{s.a}) \leq \E[e^{t.X}]e^{-t.a}\right)$
    \begin{align}
        &\mathbb{P}_{\mu,\nu}\Big(\sup_{\rho \in \calP(\Sphere)}  \big[ \lambda \big\{ \swdp{\mu_n}{\nu_n}{\rho} - \E_{\mu, \nu}[\swdp{\mu_n}{\nu_n}{\rho}] \big\} - \kld{\rho}{\rho_0} \big] - \lambda^2 \varphi_{\mu, \nu, p}n^{-1} > s \Big) \\
        &\leq \E_{\mu,\nu}\Big[ \exp\Big( \sup_{\rho \in \calP(\Sphere)}  \big[ \lambda \big\{ \swdp{\mu_n}{\nu_n}{\rho} - \E_{\mu, \nu}[\swdp{\mu_n}{\nu_n}{\rho}] \big\} - \kld{\rho}{\rho_0} \big] - \lambda^2 \varphi_{\mu, \nu, p}n^{-1} \Big) \Big] \exp(-s) \\
        &\leq 1\cdot \exp(-s) =  \exp(-s) \eqsp,
    \end{align}
    where the last inequality follows from \eqref{supp:eq:fubini_bis}. 
    
    Let $e^{-s} = \varepsilon$ such that $s = \log(1/\varepsilon)$. Then,
    \begin{equation} \label{supp:eq:proba}
        \mathbb{P}_{\mu,\nu}\Big(\exists \rho \in \calP(\Sphere),\; \lambda \big\{ \swdp{\mu_n}{\nu_n}{\rho} - \E_{\mu, \nu}[\swdp{\mu_n}{\nu_n}{\rho}] \big\} - \kld{\rho}{\rho_0} - \lambda^2 \varphi_{\mu, \nu, p}n^{-1} > \log(1/\varepsilon) \Big) \leq \varepsilon \eqsp.
    \end{equation} 
    Taking the complement of \eqref{supp:eq:proba} and rearranging the terms yields
    \begin{align} \label{supp:eq:comp_proba}
        &\mathbb{P}_{\mu,\nu}\Big(\forall \rho \in \calP(\Sphere),\; \swdp{\mu_n}{\nu_n}{\rho} < \E_{\mu, \nu}[\swdp{\mu_n}{\nu_n}{\rho}] + \lambda^{-1} \big\{ \kld{\rho}{\rho_0} +  \log(1/\varepsilon) \big\} + \lambda \varphi_{\mu, \nu, p}n^{-1} \Big) \nonumber \\ 
        &\quad \geq 1 - \varepsilon \eqsp.
    \end{align}
    
    Our final bound results from assuming there exists $\psi_{\mu, \nu, p}(n)$ such that,
    \begin{equation}
        \E_{\mu, \nu}| \swdp{\mu_n}{\nu_n}{\rho} - \swdp{\mu}{\nu}{\rho}| \leq \psi_{\mu, \nu, p}(n) \eqsp.
    \end{equation}
    
\end{proof}

\paragraph{Comparison with \Cref{subsec:genbound_maxsw}.} In our work, instead of bounding $\swdp{\cdot}{\cdot}{\rho}$ by maxSW, we apply PAC-Bayesian theory directly on $\swdp{\cdot}{\cdot}{\rho}$ for any $\rho$. As a result, our PAC-Bayes-inspired bounds are more flexible than bounds in \Cref{subsec:genbound_maxsw}, since their convergence rates adapt to the distribution $\rho$ (via the KL divergence). However, when $\rho$ is a Dirac measure, \Cref{thm:generic_bound} become vacuous because of the KL term, as with most PAC-Bayesian bounds. In such cases, which include maxSW, the bounds in \Cref{subsec:genbound_maxsw} are more informative.

As discussed in \Cref{subsec:bernstein}, in specific settings, $\varphi_{\mu, \nu, p}$ can be a function of $\lambda \in \R_+$ and $n \in \nsets$. In that case, a straightforward adaptation of the proof of \Cref{thm:generic_bound} yields \Cref{thm:extension_generic_bound}, which will be leveraged for distributions with Bernstein-type moment conditions (\Cref{def:bernstein_cond}).

\begin{theorem} \label{thm:extension_generic_bound}
    Let $p \in [1, \pinf)$ and $\mu, \nu \in \calP_p(\R^d)$. 
    Let $\Lambda \subset \R_+^*$ and assume there exists $\varphi_{\mu, \nu, p} : \Lambda \times n \to \R_+$, possibly depending on $\mu, \nu$ and $p$ such that: $\forall \lambda \in \Lambda$, $\forall \theta \in \Sphere$,
    \begin{equation}
        \E \left[ \exp \left( \lambda \big\{ \mathrm{W}_p^p(\thss \mu_n, \thss \nu_n) - \E[\mathrm{W}_p^p(\thss \mu_n, \thss \nu_n)]
        \big\} \right) \right] \leq \exp(\lambda^2 \varphi_{\mu, \nu, p}(\lambda, n) \,n^{-1}) \eqsp,
    \end{equation}
    where $\E$ is taken with respect to the support points of $\mu_n$ and $\nu_n$. Additionally, assume there exists $\psi_{\mu, \nu, p} : \nsets \to \R_+$, possibly depending on $\mu, \nu$ and $p$, such that, $\forall \rho \in \calP(\Sphere)$,
    \begin{equation}
        \E \big| \mathrm{SW}_p^p(\mu_n,\nu_n;\rho)- \mathrm{SW}_p^p(\mu,\nu;\rho)
        \big| \leq \psi_{\mu, \nu, p}(n) \,.
    \end{equation}
    Let $\rho_0 \in \calP(\Sphere)$. Then, for any $\delta \in (0, 1)$, the following holds with probability at least $1 - \delta$: $\forall \rho \in \calP(\Sphere)$,
    \begin{align}
    \mathrm{SW}_p^p(\mu,\nu;\rho) &\geq
    \mathrm{SW}_p^p(\mu_n,\nu_n;\rho) - \frac{\lambda}{n} \varphi_{\mu, \nu, p}(\lambda, n) \\
    &- \frac1{\lambda} \Big\{ \kld{\rho}{\rho_0} + \log\Big(\frac1{\delta}\Big)\Big\} - \psi_{\mu, \nu, p}(n) \,.
    \end{align}
\end{theorem}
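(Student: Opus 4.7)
The plan is to mimic the proof of \Cref{thm:generic_bound} line by line, treating $\varphi_{\mu,\nu,p}(\lambda,n)$ as a constant with respect to the randomness in $(\mu_n,\nu_n)$ and the draw of $\theta \sim \rho_0$. Since $\lambda \in \Lambda$ is fixed throughout the argument (it is a parameter of the bound, not an object being optimized), the extra dependence of $\varphi$ on $(\lambda,n)$ does not interact with any of the probabilistic or variational steps: it simply propagates as a deterministic term.

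First I would fix $\lambda \in \Lambda$ and $\rho_0 \in \calP(\Sphere)$. Starting from the moment-generating-function hypothesis, which holds pointwise in $\theta$, I would integrate both sides against $\rho_0$ and apply Fubini to exchange $\E_{\mu,\nu}$ and $\E_{\theta \sim \rho_0}$, producing
\begin{equation}
    \E_{\mu, \nu} \E_{\theta \sim \rho_0} \bigl[ \exp\bigl( \lambda \{ \wdp{\thss \mu_n}{\thss \nu_n} - \E_{\mu, \nu}[\wdp{\thss \mu_n}{\thss \nu_n}] \} \bigr) \bigr] \leq \exp(\lambda^2 \varphi_{\mu, \nu, p}(\lambda,n) n^{-1}) \eqsp.
\end{equation}
Next, observing as in the original proof that $\theta \mapsto \lambda\{ \wdp{\thss\mu_n}{\thss\nu_n} - \E_{\mu,\nu}[\wdp{\thss\mu_n}{\thss\nu_n}]\}$ is bounded and measurable, I would invoke Donsker--Varadhan (\Cref{lem:donskerv}) to rewrite the inner expectation as the supremum over $\rho \in \calP(\Sphere)$ of $\E_{\theta\sim\rho}[\cdot] - \kld{\rho}{\rho_0}$, and then use linearity of the expectation and the definition of $\swdp{\cdot}{\cdot}{\rho}$ \eqref{eq:def_sw} to obtain
\begin{equation}
    \E_{\mu,\nu}\Bigl[ \exp\Bigl(\sup_{\rho\in\calP(\Sphere)}\bigl[\lambda\{ \swdp{\mu_n}{\nu_n}{\rho} - \E_{\mu,\nu}[\swdp{\mu_n}{\nu_n}{\rho}]\} - \kld{\rho}{\rho_0}\bigr] - \lambda^2 \varphi_{\mu,\nu,p}(\lambda,n) n^{-1}\Bigr)\Bigr] \leq 1.
\end{equation}

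Then I would apply the Chernoff bound verbatim as in the original proof, setting the deviation level $s = \log(1/\delta)$ to conclude that, with probability at least $1-\delta$ over the draw of $(\mu_n,\nu_n)$, for every $\rho \in \calP(\Sphere)$,
\begin{equation}
    \swdp{\mu_n}{\nu_n}{\rho} \leq \E_{\mu,\nu}[\swdp{\mu_n}{\nu_n}{\rho}] + \frac{\lambda}{n}\varphi_{\mu,\nu,p}(\lambda,n) + \frac1\lambda\bigl\{\kld{\rho}{\rho_0} + \log(1/\delta)\bigr\}.
\end{equation}
Finally, I would bound $\E_{\mu,\nu}[\swdp{\mu_n}{\nu_n}{\rho}]$ from above by $\swdp{\mu}{\nu}{\rho} + \psi_{\mu,\nu,p}(n)$ using the second hypothesis and the triangle inequality on $|\cdot|$, then rearrange to isolate $\swdp{\mu}{\nu}{\rho}$ on the left-hand side.

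I do not expect any genuine obstacle: every step of the original proof goes through, the only substantive modification being that the constant $\varphi_{\mu,\nu,p}$ is replaced by $\varphi_{\mu,\nu,p}(\lambda,n)$, which is still a fixed scalar once $\lambda$ and $n$ are fixed. The only minor care point is to restrict $\lambda$ to the admissible range $\Lambda$ on which the generalized MGF hypothesis is valid (for instance, $\lambda < n/(2b_\star)$ in the Bernstein application of \Cref{prop:momgenfn_bernstein}); this restriction must be carried explicitly into the statement, as is already done in the theorem.
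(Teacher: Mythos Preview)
Your proposal is correct and is exactly the approach the paper indicates: the paper does not write out a separate proof for \Cref{thm:extension_generic_bound} but simply remarks that ``the proof of \Cref{thm:generic_bound} can easily be adapted'' when $\varphi_{\mu,\nu,p}$ depends on $(\lambda,n)$, which is precisely the line-by-line adaptation you carry out.
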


\subsection{Proof of \Cref{prop:momgenfn_compact}}
\label{subsec:proof_momgenfn_compact}

To prove \Cref{prop:momgenfn_compact}, we leverage a concentration result that appears in the proof of McDiarmid's inequality (recalled in \Cref{thm:mcdiarmid}), and which relies on the \emph{bounded differences property}  (\Cref{def:bounded_diff}).

\begin{definition}[Bounded differences property] \label{def:bounded_diff}
    Let $\setX \subset \R^d$, $n \in \nsets$ and $c = \{c_i\}_{i=1}^n \in \R^n$. A mapping $f : \setX^n \to \R$ is said to satisfy the \emph{$c$-bounded differences property} if for $i \in \{1, \dots, n\}$, $\{x_i\}_{i=1}^n \in \setX^n$ and $x' \in \setX$, 
    \begin{equation}
        | f(x_1, \dots, x_n) - f(x_1, \dots, x_{i-1}, x', x_{i+1}, \dots, x_n) | \leq c_i \eqsp.
    \end{equation}
\end{definition}

\begin{theorem}[\citep{mcdiarmid1989method}] \label{thm:mcdiarmid}
    Let $(X_i)_{i=1}^n$ be a sequence of $n \in \nsets$ independent random variables with $X_i$ valued in $\setX \subset \R^d$ for $i \in \{1, \dots, n\}$. Let $c = \{c_i\}_{i=1}^n \in \R^n$ and $f : \setX^n \to \mathbb{R}$ satisfying the $c$-bounded differences property. Then, for any $\lambda > 0$,
    \begin{equation}
        \E \big[ \exp(\lambda \{ f - \E[f]\}) \big] \leq \exp(\lambda^2 \|c\|^2/ 8) \eqsp.
    \end{equation}
\end{theorem}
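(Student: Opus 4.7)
The plan is to establish $\varphi_{\mu,\nu,p} = \Delta^{2p}/2$ by a bounded-differences argument in the spirit of McDiarmid's inequality (\Cref{thm:mcdiarmid}), using a pairing trick that treats the observations as $n$ independent vector pairs rather than $2n$ independent scalars. The claim $\mu,\nu\in\calP_p(\setX)$ is immediate: boundedness of $\setX$ (diameter $\Delta<\pinf$) implies $\int\|x\|^p\dd\mu(x)<\pinf$ and likewise for $\nu$.

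Fix $\theta\in\Sphere$ and define $Z_i \doteq (X_i,Y_i)$ so that $\{Z_i\}_{i=1}^n$ are $n$ i.i.d.\ random variables on $\setX\times\setX$ (using independence of the $X$-sample from the $Y$-sample). Consider the map
\[
f(Z_1,\dots,Z_n) \doteq \wdp{\thss\mu_n}{\thss\nu_n}
= \frac1n\sum_{i=1}^n |u^\theta_{(i)}-w^\theta_{(i)}|^p,
\]
where $u^\theta_i=\ps{\theta}{X_i}$, $w^\theta_i=\ps{\theta}{Y_i}$ and $(i)$ denotes the $i$-th order statistic (cf.\ \eqref{eq:wasserstein_1D_discrete}). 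The core step is to show that $f$ satisfies the $c$-bounded differences property of \Cref{def:bounded_diff} with $c_i = 2\Delta^p/n$ for every $i\in\{1,\dots,n\}$.

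To get this, I would split the change into two one-coordinate updates and bound each by $\Delta^p/n$ via a suboptimal-coupling argument. Replacing $X_j$ by $X_j'$ (with all $Y_i$ fixed), I plug the original optimal monotone coupling of $(\thss\mu_n,\thss\nu_n)$ into the definition of $\text{W}_p^p$ for the perturbed pair: the cost differs from the old optimum by exactly $(1/n)\bigl[|\ps{\theta}{X_j'}-w^\star|^p - |\ps{\theta}{X_j}-w^\star|^p\bigr]$, where $w^\star$ is the partner of $X_j$ in the old coupling. Since $|\ps{\theta}{x}-\ps{\theta}{x'}|\le\|x-x'\|\le\Delta$ for any $x,x'\in\setX$, both quantities lie in $[0,\Delta^p]$, giving an upper bound $\Delta^p/n$ on the change of $\text{W}_p^p$; the reverse direction follows by swapping ``old'' and ``new''. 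Thus each one-coordinate change moves $f$ by at most $\Delta^p/n$, and the triangle inequality applied to the two successive substitutions $X_j\to X_j'$ then $Y_j\to Y_j'$ yields the claimed $c_i=2\Delta^p/n$.

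With this in hand, a direct application of \Cref{thm:mcdiarmid} to $f(Z_1,\dots,Z_n)$ gives
\[
\E\bigl[\exp(\lambda\{f-\E f\})\bigr]\le \exp\!\Bigl(\tfrac{\lambda^2}{8}\sum_{i=1}^n c_i^2\Bigr)
= \exp\!\Bigl(\tfrac{\lambda^2}{8}\cdot n\cdot\tfrac{4\Delta^{2p}}{n^2}\Bigr)
= \exp\!\Bigl(\tfrac{\lambda^2 \Delta^{2p}}{2n}\Bigr),
\]
which is precisely \eqref{eq:momentgenfn} with $\varphi_{\mu,\nu,p}=\Delta^{2p}/2$. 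The main obstacle is the bounded-differences estimate: one must resist the naive bound that allows the sorted order to shift by many positions (which would only yield $\Delta^p$), and instead use the coupling trick to confine the change to a single matched pair. The pairing of $(X_i,Y_i)$ into one variable is what produces the constant $1/2$ rather than $1/4$; the independence of the $X$- and $Y$-samples is what makes this grouping legitimate for McDiarmid's inequality.
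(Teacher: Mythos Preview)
Your proposal does not prove the stated theorem (\Cref{thm:mcdiarmid}, McDiarmid's inequality), which the paper merely cites from \cite{mcdiarmid1989method} without proof. What you actually establish is \Cref{prop:momgenfn_compact}, which \emph{applies} \Cref{thm:mcdiarmid}.

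Assuming \Cref{prop:momgenfn_compact} was the intended target, your argument is correct and essentially the same as the paper's. Both pair the observations as $Z_i=(X_i,Y_i)$ (the paper works with the projected pairs $(u_i,v_i)$ and then pushes forward), establish the $c$-bounded differences property with $c_i=2\Delta^p/n$ via a suboptimal-coupling bound, and invoke \Cref{thm:mcdiarmid} to get $\exp(\lambda^2\Delta^{2p}/(2n))$. The paper isolates the bounded-differences step as \Cref{lem:bounded_diff_wass} and perturbs both coordinates of the pair simultaneously, tracking the two affected terms in the sorted sum; you instead split into two single-coordinate substitutions ($X_j\to X_j'$, then $Y_j\to Y_j'$), bound each by $\Delta^p/n$ via the same ``old optimal coupling is feasible for the new problem'' trick, and combine by the triangle inequality. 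The two routes yield identical constants.
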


The proof of \Cref{prop:momgenfn_compact} 
consists in applying \Cref{thm:mcdiarmid} to a specific choice of $f$. To this end, we first show that the Wasserstein distance between univariate distributions satisfies the bounded differences property, assuming bounded supports. 

\begin{lemma} \label{lem:bounded_diff_wass}
Let $\setX \subset \R$ be a bounded set with diameter $\Delta = \sup_{(x,x') \in \setX^2} \|x-x'\| < \pinf$. Then, the mapping $f : (\setX^2)^n \to \R_+$ defined for $w_{1:n} \doteq \{(u_i, v_i)\}_{i=1}^n \in (\setX^2)^n$ as 
\begin{equation} \label{eq:f_bounded_diff}
    f(w_{1:n}) = \wdp{\tilde{\mu}_n}{\tilde{\nu}_n} \,
\end{equation}
where $\tilde{\mu}_n, \tilde{\nu}_n$ are the univariate empirical measures computed over $\{u_i\}_{i=1}^n, \{v_i\}_{i=1}^n$ respectively, satisfies the $c$-bounded differences property with $c_i = 2\Delta^p / n$ for $i \in \{1, \dots, n\}$. 

\end{lemma}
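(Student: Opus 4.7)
The plan is to use the primal characterization of the one-dimensional Wasserstein distance between equal-mass empirical measures: for every permutation $\sigma \in S_n$, $\wdp{\tilde{\mu}_n}{\tilde{\nu}_n} \leq (1/n)\sum_{i=1}^n |u_i - v_{\sigma(i)}|^p$, with equality attained at the sorting permutation that pairs $u_{(i)}$ with $v_{(i)}$ (which is exactly \eqref{eq:wasserstein_1D_discrete}). I would then bound the effect on $f$ of replacing a single coordinate $w_k = (u_k, v_k)$ by $(u', v') \in \setX^2$ by transferring an optimal permutation for the original data into a feasible (generally suboptimal) permutation for the perturbed data.

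Fix $k$, denote the perturbed tuple by $w'_{1:n}$, and let $\sigma^*$ be optimal for $w_{1:n}$, so that $f(w_{1:n}) = (1/n)\sum_i |u_i - v_{\sigma^*(i)}|^p$. Write $\ell = \sigma^*(k)$ and $m = (\sigma^*)^{-1}(k)$. If $m = k$, then $\sigma^*$ is still feasible for $w'_{1:n}$; only the $k$-th summand of the cost changes, yielding $f(w'_{1:n}) \leq f(w_{1:n}) + (|u' - v'|^p - |u_k - v_k|^p)/n \leq f(w_{1:n}) + \Delta^p/n$ by the diameter bound $|a - b| \leq \Delta$ on $\setX$. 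If $m \neq k$, I would define $\tilde{\sigma}$ by composing $\sigma^*$ with the transposition of $k$ and $m$ on the image side: $\tilde{\sigma}(k) = k$, $\tilde{\sigma}(m) = \ell$, and $\tilde{\sigma}(i) = \sigma^*(i)$ for $i \notin \{k, m\}$. Evaluating the primal cost of $\tilde{\sigma}$ on the perturbed data, the summands with $i \notin \{k, m\}$ are identical to those of $\sigma^*$ on $w_{1:n}$, while the $k$-th and $m$-th summands become $|u' - v'|^p$ and $|u_m - v_\ell|^p$ in place of $|u_k - v_\ell|^p$ and $|u_m - v_k|^p$. Bounding the two new (nonnegative) summands by $\Delta^p$ and dropping the two subtracted (nonnegative) terms from above yields $f(w'_{1:n}) \leq f(w_{1:n}) + 2\Delta^p/n$.

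Applying the same argument in the reverse direction, starting from an optimal permutation for $w'_{1:n}$ and rewiring it into a feasible permutation for $w_{1:n}$, gives $f(w_{1:n}) \leq f(w'_{1:n}) + 2\Delta^p/n$. Combining both directions establishes $|f(w_{1:n}) - f(w'_{1:n})| \leq 2\Delta^p/n$, which is the claimed $c$-bounded differences property with $c_i = 2\Delta^p/n$ for each $i$. The main subtlety is the case $m \neq k$: one cannot retain $\sigma^*$ unchanged, since the coupling $u_m \mapsto v_k$ no longer makes sense once $v_k$ has been replaced by $v'$; a minimal rewiring via a single transposition is required, and the two displaced summands are precisely what produce the factor $2$ in the final constant.
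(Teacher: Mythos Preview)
Your proof is correct and rests on the same core idea as the paper's: upper-bound one of the two Wasserstein distances by the cost of a non-optimal coupling, then observe that at most two summands of the permutation formula are affected, each bounded by $\Delta^p$. The paper's execution goes in the reverse direction---it takes the sorting permutations $\sigma',\tau'$ optimal for the \emph{perturbed} data and applies them verbatim to the \emph{original} data, so that $\wdp{\tilde{\mu}_n}{\tilde{\nu}_n}-\wdp{\tilde{\mu}'_n}{\tilde{\nu}'_n}\le\frac{1}{n}\sum_j\bigl(|u_{\sigma'(j)}-v_{\tau'(j)}|^p-|u'_{\sigma'(j)}-v'_{\tau'(j)}|^p\bigr)$, a sum in which only the indices $j=\sigma'^{-1}(i)$ and $j=\tau'^{-1}(i)$ survive. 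Your rewiring via a transposition is unnecessary: any permutation is always a feasible coupling, so you could have applied $\sigma^*$ directly to the perturbed data and still found only the terms at $k$ and $m$ changing. The closing remark that ``the coupling $u_m\mapsto v_k$ no longer makes sense once $v_k$ has been replaced by $v'$'' is a minor misconception---the coupling $u_m\mapsto v'$ is perfectly valid, just with a different cost---but it does no damage to your argument, which remains sound as written.
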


\begin{proof}
    For clarity purposes, we start by introducing some notations. Let $n \in \nsets$ and $w_{1:n} \doteq \{(u_j, v_j)\}_{j=1}^n \in (\setX^2)^n$. Denote by $\tilde{\mu}_n, \tilde{\nu}_n$ the empirical distributions supported over $(u_j)_{j=1}^n, (v_j)_{j=1}^n \in \setX^n$ respectively. Let $(u', v') \in \setX^2$ and $i \in \{1, \dots, n\}$. Denote by $\tilde{\mu}'_n$ the empirical distribution supported on $(u'_j)_{j=1}^n$ where $u'_j = u'$ if $j = i$, $u'_j = u_j$ otherwise, and by $\tilde{\nu}'_n$ the empirical distribution over $(v'_j)_{j=1}^n$ where $v'_j = v'$ if $j = i$, $v'_j = v_j$ otherwise.
    
    By definition of the Wasserstein distance between univariate distributions \eqref{eq:wasserstein_1D_discrete},
    \begin{align}
        \wdp{\tilde{\mu}_n}{\tilde{\nu}_n} - \wdp{\tilde{\mu}'_n}{\tilde{\nu}'_n} &= \frac1n \sum_{j=1}^n | u_{\sigma(j)} - v_{\tau(j)} |^p - \frac1n \sum_{j=1}^n | u'_{\sigma'(j)} - v'_{\tau'(j)} |^p
    \end{align}
    where $\sigma : \{1, \dots, n\} \to \{1, \dots, n\}$ (respectively, $\sigma' : \{1, \dots, n\} \to \{1, \dots, n\}$) is the permutation s.t. for $j \in \{1, \dots, n\}$, $u_{\sigma(j)}$ (resp., $u'_{\sigma'(j)}$) is the $j$-th smallest value of $(u_j)_{j=1}^n$ (resp., $(u'_j)_{j=1}^n$). Let $\tau : \{1, \dots, n\} \to \{1, \dots, n\}$ (respectively, $\tau' : \{1, \dots, n\} \to \{1, \dots, n\}$) s.t. for $j \in \{1, \dots, n\}$, $v_{\tau(j)}$ (resp., $v'_{\tau'(j)}$) is the $j$-th smallest value of $(v_j)_{j=1}^n$ (resp., $(v'_j)_{j=1}^n$).

    Therefore,
    \begin{align}
        \wdp{\tilde{\mu}_n}{\tilde{\nu}_n} - \wdp{\tilde{\mu}'_n}{\tilde{\nu}'_n} &\leq \frac1n \sum_{j=1}^n | u_{\sigma'(j)} - v_{\tau'(j)} |^p - \frac1n \sum_{j=1}^n | u'_{\sigma'(j)} - v'_{\tau'(j)} |^p \\
        &= \frac1n \left( |u_i - v_{\tau' \circ \sigma'^{-1}(i)}|^p - |u' - v'_{\tau' \circ \sigma'^{-1}(i)}|^p + |u_{\sigma' \circ \tau'^{-1}(i)} - v_i|^p - |u'_{\sigma' \circ \tau'^{-1}(i)} - v'|^p \right) \\
        &\leq \frac{2\Delta^p}n 
    \end{align}
    We can use the same arguments to prove that $\wdp{\tilde{\mu}'_n}{\tilde{\nu}'_n} - \wdp{\tilde{\mu}_n}{\tilde{\nu}_n} \leq 2\Delta^p / n$. We conclude that,
    \begin{equation} \label{eq:bound_diff_oneside}
        \left| \wdp{\tilde{\mu}_n}{\tilde{\nu}_n} - \wdp{\tilde{\mu}'_n}{\tilde{\nu}'_n} \right| \leq \frac{2\Delta^p}n \eqsp.
    \end{equation}
    

\end{proof}

\begin{remark}
 \Cref{lem:bounded_diff_wass} is an extension of \citep[Proposition 20]{weedbach2019}, which establishes a concentration bound for $\wdp{\mu}{\mu_n}$ around its expectation on any finite-dimensional compact space by exploiting McDiarmid's inequality along with the Kantorovich duality. We thus use similar arguments to prove \Cref{prop:momgenfn_compact}
 , except that we leverage the closed-form expression of the one-dimensional Wasserstein distance instead of the dual formulation since we compare univariate (projected) distributions. 
\end{remark}
 
\begin{proof}[Proof of \Cref{prop:momgenfn_compact}]
Let $\mu, \nu \in \calP(\setX)$ where $\setX \subset \R^d$ has a finite diameter $\Delta$. Let $\theta \in \Sphere$. Then, $\thss\mu, \thss\nu$ are both supported on a bounded domain $\setX_{\theta} \subset \R$ whose diameter is denoted by $\Delta_\theta$ and satisfies $\Delta_\theta \leq \Delta$. Consider the mapping $f$ defined as in \eqref{eq:f_bounded_diff}. 
Given \Cref{lem:bounded_diff_wass}, we can apply \Cref{thm:mcdiarmid} to bound the moment-generating function of $f - \E f$: for any $\lambda > 0$,
\begin{align}
    \E \big[ \exp(\lambda \{ f - \E[f]\}) \big] &\leq \exp(\lambda^2 \sum_{i=1}^n (2\Delta_\theta^p / n)^2 / 8) \\
    &\leq \exp(\lambda^2 \Delta_\theta^{2p} / (2n)) \leq \exp(\lambda^2 \Delta^{2p} / (2n)) \eqsp,
\end{align}
where the expectation is computed over $n$ samples $w_{1:n} \doteq \{(u_i, v_i)\}_{i=1}^n \in (\setX_\theta^2)^n$ \iid~from $\thss \mu \times \thss \nu$. We conclude by using the property of push-forward measures, which gives
\begin{align} \label{eq:exp_pushforward_reformulation}
    \E_{w_{1:n} \sim (\thss \mu \times \thss \nu)^n}\big[ \exp(\lambda \{ f(w_{1:n}) - \E[f(w_{1:n})]\}) \big] = \E_{z_{1:n} \sim (\mu \times \nu)^n} \big[ \exp(\lambda \{ f(\theta^*(z'_{1:n})) - \E[f(\theta^*(z'_{1:n}))]\}) \big]
\end{align}
where for $z_{1:n} \doteq \{(x_i, y_i)\}_{i=1}^n \in (\setX^2)^n$, $\theta^*(z_{1:n}) \doteq \{(\ps{\theta}{x_i}, \ps{\theta}{y_i})\}_{i=1}^n \in (\setX_\theta^2)^n$.

\end{proof}

\subsection{Proof of \Cref{prop:splcpx_compact}}
\label{subsec:proof_splcpx_compact}

Recent work have bounded $\E|\swd{\mu_n}{\nu_n}{\rho} - \swd{\mu}{\nu}{\rho}|$ or $\E|\swd{\mu}{\mu_n}{\rho}|$ for specific choices of $\rho \in \calP(\Sphere)$ \citep{nadjahi2020statistical,manole2020minimax,nguyen2021distributional,lin2021}. These results do not exactly correspond to what \Cref{thm:generic_bound}
requires, \ie~a bound on $\E|\swdp{\mu_n}{\nu_n}{\rho} - \swdp{\mu}{\nu}{\rho}|$. We bound the latter quantity in \Cref{prop:splcpx_compact}, by specifying the proof techniques in \citep{manole2020minimax} for distributions with bounded supports, then generalizing a result in \citep{nadjahi2020statistical}.

\begin{lemma} \label{lem:splcpx_wass1d}
    Let $\setX \subset \R$ be a bounded set whose diameter is denoted by $\Delta < \pinf$. Let $\mu, \nu \in \calP(\setX)$ and denote by $\mu_n, \nu_n$ the empirical distributions supported over $n \in \nsets$ samples i.i.d. from $\mu, \nu$ respectively. Let $p \in [1, \pinf)$. Then, there exists a constant $C$ such that,
    \begin{equation}
        \E\big| \wdp{\mu_n}{\nu_n} - \wdp{\mu}{\nu} \big|
        \leq C p \Delta^p n^{-1/2} \eqsp.
    \end{equation}
\end{lemma}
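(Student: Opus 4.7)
The plan is to reduce the lemma to the classical one-dimensional sample-complexity of $\wdone{\cdot}{\cdot}$ between a measure and its empirical counterpart, by exploiting the bounded-support hypothesis to linearize the $p$-th power. Using the closed-form expression \eqref{eq:wasserstein_1D_continuous}, I would first write
\[
\big|\wdp{\mu_n}{\nu_n} - \wdp{\mu}{\nu}\big|
\leq \int_0^1 \big||F_{\mu_n}^{-1}(t)-F_{\nu_n}^{-1}(t)|^p - |F_\mu^{-1}(t)-F_\nu^{-1}(t)|^p\big| \dd t.
\]
Since $\mu$ and $\nu$ are supported on $\setX$ of diameter $\Delta$, the same holds for $\mu_n,\nu_n$, and every inverse c.d.f. value lies in a set of diameter $\Delta$. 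Hence the two absolute differences under the integral are in $[0,\Delta]$, and the mean-value inequality $|a^p-b^p|\leq p\Delta^{p-1}|a-b|$ (valid for $a,b\in[0,\Delta]$ and $p\geq 1$, from $x\mapsto x^p$ being $p\Delta^{p-1}$-Lipschitz on $[0,\Delta]$), followed by the reverse triangle inequality on the inner absolute values, yields
\[
\big|\wdp{\mu_n}{\nu_n} - \wdp{\mu}{\nu}\big|
\leq p\,\Delta^{p-1}\big(\wdone{\mu_n}{\mu} + \wdone{\nu_n}{\nu}\big),
\]
using $\int_0^1 |F_{\mu_n}^{-1}(t)-F_\mu^{-1}(t)|\dd t = \wdone{\mu_n}{\mu}$ and similarly for $\nu$.

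Taking expectations then reduces the lemma to the classical univariate bound $\E[\wdone{\mu}{\mu_n}] \leq \Delta/(2\sqrt n)$ for $\mu$ supported on a set of diameter $\Delta$. I would either cite this fact or re-derive it in one line: write $\wdone{\mu}{\mu_n} = \int_\R |F_\mu(x)-F_{\mu_n}(x)|\dd x$, restrict the integration domain to $\setX$ by boundedness, and apply Jensen's inequality together with the pointwise variance bound $\Var(F_{\mu_n}(x)) = F_\mu(x)(1-F_\mu(x))/n \leq 1/(4n)$. Combining the two estimates gives
\[
\E\big|\wdp{\mu_n}{\nu_n} - \wdp{\mu}{\nu}\big| \leq p\,\Delta^p\, n^{-1/2},
\]
which is the claim with $C=1$ (up to an absolute constant if one prefers sharper variants of the univariate rate).

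The main subtlety, more than a true obstacle, is the linearization step: the constant $p\,\Delta^{p-1}$ is what allows one to upgrade a bound on the first-order Wasserstein distance to a bound on $\wdp{\cdot}{\cdot}$. This is precisely where the bounded-support hypothesis is essential -- if $\Delta=\pinf$, the Lipschitz constant of $x\mapsto x^p$ diverges and the argument collapses. This is why in the sub-Gaussian and Bernstein settings of \Cref{sec:a2_subg,subsec:bernstein} the paper instead invokes \citep[Theorem 2]{manole2020minimax}, which handles such tail behaviors directly.
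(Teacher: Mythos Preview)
Your argument is correct and follows essentially the same route as the paper: both reduce to the key pointwise inequality
\[
\big|\wdp{\mu_n}{\nu_n}-\wdp{\mu}{\nu}\big|\leq p\,\Delta^{p-1}\big(\wdone{\mu_n}{\mu}+\wdone{\nu_n}{\nu}\big),
\]
and then invoke the univariate $\wdone{\cdot}{\cdot}$ sample complexity. The only differences are cosmetic. First, the paper obtains the linearization via a first-order Taylor expansion of $(x,y)\mapsto|x-y|^p$ with Lagrange remainder, whereas you use the Lipschitz bound $|a^p-b^p|\leq p\Delta^{p-1}|a-b|$ together with the reverse triangle inequality; your version is slightly cleaner since it sidesteps the non-differentiability of $|x-y|^p$ at $x=y$. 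Second, the paper cites \citep[Theorem~1]{fournier2015rate} for $\E[\wdone{\mu}{\mu_n}]\lesssim \Delta n^{-1/2}$, while you derive it directly from the CDF representation and the pointwise variance bound, which yields the explicit constant $C=1$. Both executions lead to the same conclusion.
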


\begin{proof}
    \Cref{lem:splcpx_wass1d} is obtained by adapting the techniques used in the proof of \citep[Lemma 6]{manole2020minimax}, then applying \citep[Theorem 1]{fournier2015rate}. We provide the detailed proof for completeness.

    Starting from the definition of $\wdp{\mu_n}{\nu_n}$ \eqref{eq:wasserstein_1D_discrete}, then using a Taylor expansion of $(x,y) \mapsto |x-y|^p$ around $(x,y) = (F_{\mu}^{-1}(t), F_{\nu}^{-1}(t))$, we obtain 
    \begin{align}
        \wdp{\mu_n}{\nu_n} &= \int_0^1 \big| F_{\mu_n}^{-1}(t) - F_{\nu_n}^{-1}(t) \big|^p \dd t \nonumber \\
        &= \int_0^1 \big| F_{\mu}^{-1}(t) - F_{\nu}^{-1}(t) \big|^p \dd t \label{eq:taylor1} \\
        &\; + \int_0^1 p~\text{sgn}\big( \tilde{F}_{\mu_n}^{-1}(t) - \tilde{F}_{\nu_n}^{-1}(t) \big) \big| \tilde{F}_{\mu_n}^{-1}(t) - \tilde{F}_{\nu_n}^{-1}(t) \big|^{p-1} \big\{ (F_{\mu_n}^{-1}(t) - F_{\mu}^{-1}(t)) - (F_{\nu_n}^{-1}(t) - F_{\nu}^{-1}(t)) \big\} \dd t \label{eq:taylor2}
    \end{align}
    where $\text{sgn}(\cdot)$ denotes the sign function, $\tilde{F}_{\mu_n}^{-1}(t)$ a real number between $F_{\mu_n}^{-1}(t)$ and $F_{\mu}^{-1}(t)$, and $\tilde{F}_{\nu_n}^{-1}(t)$ a real number between $F_{\nu_n}^{-1}(t)$ and $F_{\nu}^{-1}(t)$. 
    
    By definition, \eqref{eq:taylor1} is exactly $\wdp{\mu}{\nu}$ and we obtain
    \begin{align}
        &| \wdp{\mu_n}{\nu_n} - \wdp{\mu}{\nu} | \nonumber \\
        &= \Big| \int_0^1 p~\text{sgn}\big( \tilde{F}_{\mu_n}^{-1}(t) - \tilde{F}_{\nu_n}^{-1}(t) \big) \big| \tilde{F}_{\mu_n}^{-1}(t) - \tilde{F}_{\nu_n}^{-1}(t) \big|^{p-1} \big\{ (F_{\mu_n}^{-1}(t) - F_{\mu}^{-1}(t)) - (F_{\nu_n}^{-1}(t) - F_{\nu}^{-1}(t)) \big\} \dd t \Big| \nonumber \\
        &\leq p \int_0^1 \big| \tilde{F}_{\mu_n}^{-1}(t) - \tilde{F}_{\nu_n}^{-1}(t) \big|^{p-1} \Big\{ \big| F_{\mu_n}^{-1}(t) - F_{\mu}^{-1}(t) \big| + \big| F_{\nu_n}^{-1}(t) - F_{\nu}^{-1}(t) \big| \Big\} \dd t \label{eq:triangleineq} \\
        &\leq p \sup_{t \in (0, 1)} \big| \tilde{F}_{\mu_n}^{-1}(t) - \tilde{F}_{\nu_n}^{-1}(t) \big|^{p-1} \Big\{ \wdone{\mu_n}{\mu} + \wdone{\nu_n}{\nu} \Big\} \eqsp, \label{eq:sum_wdone}
    \end{align}
    where \eqref{eq:triangleineq} follows from the triangle inequality and \eqref{eq:sum_wdone} results from the definition of the Wasserstein distance of order 1 between univariate distributions. 

    We then bound $\sup_{t \in (0, 1)} \big| \tilde{F}_{\mu_n}^{-1}(t) - \tilde{F}_{\nu_n}^{-1}(t) \big|^{p-1}$ from above. By the definition of $\tilde{F}_{\mu_n}^{-1}(t), \tilde{F}_{\nu_n}^{-1}(t)$ for $t \in (0,1)$, we distinguish the following four cases:
    \begin{enumerate}[label=(\roman*)]
    \itemsep0em 
        \item $\tilde{F}_{\mu_n}^{-1}(t) \leq F_{\mu_n}^{-1}(t)$, $\tilde{F}_{\nu_n}^{-1}(t) \leq F_{\nu_n}^{-1}(t)$
        \item $\tilde{F}_{\mu_n}^{-1}(t) \leq F_{\mu}^{-1}(t)$, $\tilde{F}_{\nu_n}^{-1}(t) \leq F_{\nu}^{-1}(t)$
        \item $\tilde{F}_{\mu_n}^{-1}(t) \leq F_{\mu_n}^{-1}(t)$, $\tilde{F}_{\nu_n}^{-1}(t) \leq F_{\nu}^{-1}(t)$
        \item $\tilde{F}_{\mu_n}^{-1}(t) \leq F_{\mu}^{-1}(t)$, $\tilde{F}_{\nu_n}^{-1}(t) \leq F_{\nu_n}^{-1}(t)$
    \end{enumerate}
    Hence, using the definition of quantile functions and the fact that the supports of $\mu, \nu$ are assumed to be bounded, we obtain
    \begin{equation}
        \sup_{t\in(0, 1)} \big| \tilde{F}_{\mu_n}^{-1}(t) - \tilde{F}_{\nu_n}^{-1}(t) \big|^{p-1} \leq \Delta^{p-1} \eqsp. \label{eq:bound_sup}
    \end{equation}
    We conclude that,
    \begin{align}
        | \wdp{\mu_n}{\nu_n} - \wdp{\mu}{\nu} | \leq p \Delta^{p-1} \Big\{ \wdone{\mu_n}{\mu} + \wdone{\nu_n}{\nu} \Big\} \eqsp. 
    \end{align}
    and by linearity of the expectation,
    \begin{align}
        \E | \wdp{\mu_n}{\nu_n} - \wdp{\mu}{\nu} | \leq p \Delta^{p-1} \Big\{ \E[\wdone{\mu_n}{\mu}] + \E[\wdone{\nu_n}{\nu}] \Big\} \eqsp. \label{eq:exp_wass_diff}
    \end{align}
    
    Our final result follows from applying \citep[Theorem 1]{fournier2015rate}. Since $\mu, \nu \in \calP(\setX)$ where $\setX \subset \R$ is a bounded set with finite diameter $\Delta < \infty$, then for any $q \geq 1$, the moment of $\mu$ (or $\nu$) of order $q$ is bounded by $\Delta^q$. Therefore, the application of \citep[Theorem 1]{fournier2015rate} yields,
    \begin{align}
        \E [ \wdone{\mu_n}{\mu} ] &\leq C' \Delta n^{-1/2} \eqsp, \;\; \E [ \wdone{\nu_n}{\nu} ] \leq C' \Delta n^{-1/2} \eqsp. \label{eq:fournier}
    \end{align}
    where $C'$ is a constant. We conclude by plugging \eqref{eq:fournier} in \eqref{eq:exp_wass_diff}.
    
    \end{proof}

\begin{proof}[Proof of \Cref{prop:splcpx_compact}]
    Let $\theta \in \Sphere$. Since we assume that $\mu, \nu \in \calP(\setX)$ where $\setX \subset \R^d$ is a bounded subset with finite diameter $\Delta$, one can easily prove that $\thss \mu, \thss\nu$ are supported on a bounded domain with diameter $\Delta_\theta \leq \Delta < \pinf$. Therefore, by \Cref{lem:splcpx_wass1d}, there exists a constant $C$ such that,
    \begin{align} \label{eq:wass_proj_complexity}
        \E\big| \wdp{\thss \mu_n}{\thss \nu_n} - \wdp{\thss \mu}{\thss \nu} \big| \leq C p \Delta^p n^{-1/2} \eqsp. 
    \end{align}
    
    Next, we adapt the proof techniques in \citep[Theorem 4]{nadjahi2020statistical} to establish the following inequality: for \emph{any} $\rho \in \calP(\Sphere)$,
    \begin{align} \label{eq:sw_cpx}
        \E | \swdp{\mu_n}{\nu_n}{\rho} - \swdp{\mu}{\nu}{\rho} | &\leq \int_{\Sphere} \E | \wdp{\thss \mu_n}{\thss \nu_n} - \wdp{\thss \mu}{\thss \nu} | \dd \rho(\theta) \eqsp.
    \end{align}
    Hence, by plugging \eqref{eq:wass_proj_complexity} in \eqref{eq:sw_cpx}, we obtain
    \begin{align}
        \E | \swdp{\mu_n}{\nu_n}{\rho} - \swdp{\mu}{\nu}{\rho} | &\leq C p \Delta^p n^{-1/2} \eqsp. 
    \end{align}
\end{proof}

\subsection{Final Bound for Bounded Supports}\label{secapp: finalbound bounded}

By incorporating \Cref{prop:momgenfn_compact,prop:splcpx_compact} in \Cref{thm:generic_bound}, 
we obtain the following result. \Cref{cor:bound_bounded} corresponds to a specialization of our generic bound when considering distributions with bounded supports.

\begin{corollary} \label{cor:bound_bounded}
    Let $p \in [1, \pinf)$ and assume a bounded diameter $\Delta$. Let $\rho_0 \in \calP(\Sphere)$ and $\delta > 0$. Then, with probability at least $1 - \delta$, for all $\rho \in \calP(\Sphere)$ and $\lambda > 0$, there exists a constant $C$ such that,
    \begin{align} 
        \swdp{\mu_n}{\nu_n}{\rho} \leq \swdp{\mu}{\nu}{\rho} + \left\{\kld{\rho}{\rho_0} + \log(1/\delta)\right\} \lambda^{-1} + \lambda \Delta^{2p}(2n)^{-1} + C p \Delta^p n^{-1/2} 
    \end{align}
\end{corollary}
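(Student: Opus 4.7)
The corollary is a direct specialization of the generic bound in \Cref{thm:generic_bound} to the bounded-support setting, so my plan is essentially to assemble pieces that are already in place rather than to prove anything new from scratch. The plan is first to verify the two hypotheses of \Cref{thm:generic_bound} under the assumption that $\mu,\nu\in\calP(\setX)$ with $\setX\subset\R^d$ of finite diameter $\Delta$, namely the exponential moment bound \eqref{eq:momentgenfn} and the sample-complexity bound \eqref{eq:sample_cpx_sw}. Since $\setX$ has finite diameter, $\mu,\nu\in\calP_p(\setX)$ trivially (every moment is bounded by a power of $\Delta$), so the integrability prerequisite of \Cref{thm:generic_bound} is automatic.

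Next, I would invoke \Cref{prop:momgenfn_compact} to identify the constant in \eqref{eq:momentgenfn} as $\varphi_{\mu,\nu,p}=\Delta^{2p}/2$, and \Cref{prop:splcpx_compact} to identify the rate in \eqref{eq:sample_cpx_sw} as $\psi_{\mu,\nu,p}(n)=Cp\Delta^p n^{-1/2}$ for some absolute constant $C$. Both propositions are proved precisely under the same bounded-diameter assumption, so they apply without modification.

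Having done this, applying \Cref{thm:generic_bound} yields, with probability at least $1-\delta$, for all $\rho\in\calP(\Sphere)$ and $\lambda>0$,
\begin{align*}
\swdp{\mu}{\nu}{\rho} \;\geq\; \swdp{\mu_n}{\nu_n}{\rho} \;-\; \frac{\lambda}{n}\cdot\frac{\Delta^{2p}}{2} \;-\; \frac{1}{\lambda}\bigl\{\kld{\rho}{\rho_0}+\log(1/\delta)\bigr\} \;-\; Cp\Delta^p n^{-1/2}.
\end{align*}
The final step is simply to move $\swdp{\mu_n}{\nu_n}{\rho}$ to the left and the remaining terms to the right, recovering the stated inequality $\swdp{\mu_n}{\nu_n}{\rho}\leq \swdp{\mu}{\nu}{\rho} + \{\kld{\rho}{\rho_0}+\log(1/\delta)\}\lambda^{-1} + \lambda\Delta^{2p}(2n)^{-1} + Cp\Delta^p n^{-1/2}$.

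There is no substantive obstacle here: the only thing to be slightly careful about is that the probabilistic statement in \Cref{thm:generic_bound} quantifies ``for all $\rho$'' inside the probability (uniform PAC-Bayes guarantee), whereas $\lambda$ must be fixed in advance of drawing the sample; the statement of \Cref{cor:bound_bounded} should therefore be read with the understanding that $\lambda>0$ is a user-chosen parameter, not optimized in a data-dependent manner. If one wants to make the bound uniform in $\lambda$ as well, a standard union-bound over a grid of values of $\lambda$ would be required, but this is beyond the stated claim.
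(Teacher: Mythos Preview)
Your proposal is correct and matches the paper's own argument exactly: the paper simply states that \Cref{cor:bound_bounded} is obtained ``by incorporating \Cref{prop:momgenfn_compact,prop:splcpx_compact} in \Cref{thm:generic_bound},'' which is precisely the substitution-and-rearrangement you describe. Your remark about the quantifier order on $\lambda$ is a fair observation on the statement itself, but it does not alter the proof.
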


\subsection{Proof of \Cref{prop:momgenfn_subg}} \label{appendix:momgen_subg}

When the supports of the distributions are not bounded, \Cref{lem:bounded_diff_wass} does not hold true, thus preventing the use of McDiarmid's inequality. Hence, to compute $\varphi_{\mu, \nu, p}$, we may use extensions of McDiarmid's inequality which replace the finite-diameter constraint by conditions on the moments of the distributions.

In particular, \Cref{prop:momgenfn_subg}
follows from applying \citep[Theorem 1]{kontorovicha14}, a concentration result based on the notion of \emph{sub-Gaussian diameter}. 

\begin{definition}[Sub-Gaussian diameter \citep{kontorovicha14}] \label{def:subg_diameter} Let $\eta$ be a distance function and $(\setX, \eta, \mu)$ be the associated metric probability space. Consider a sequence of $n \in \nsets$ independent random variables $(X_i)_{i=1}^n$ with $X_i$ distributed from $\mu$ for $i \in \{1, \dots, n\}$. Let $\Xi(\setX)$ be the random variable defined by
\begin{equation}
    \Xi(\setX) = \varepsilon \eta(X, X') \eqsp, 
\end{equation}
where $X, X'$ are two independent realizations from $\mu$ and $\varepsilon$ is a random variable valued in $\{-1, 1\}$ s.t. $p(\varepsilon = 1) = 1/2$ and $\varepsilon$ is independent from $X, X'$. Additionally, suppose there exists $\sigma > 0$ s.t. for $\lambda \in \R$, $\E_\mu [\exp(\lambda X)] \leq \exp(\sigma^2 \lambda^2 / 2)$. The \emph{sub-Gaussian diameter} of $(\setX, \eta, \mu)$, denoted by $\Deltasg(\setX)$, is defined as $\Deltasg(\setX) = \sigma\big(\Xi(\setX)\big)$.
\end{definition}

Note that $\Deltasg \leq \Delta$ \citep[Lemma 1]{kontorovicha14}. Since a set with infinite diameter may have a finite sub-Gaussian diameter, \Cref{thm:kontorovich} relaxes the conditions of \Cref{thm:mcdiarmid}.

\begin{theorem}[Theorem 1 \citep{kontorovicha14}] \label{thm:kontorovich}
    Let $\setX \subset \R^d$ and $\eta : \setX \times \setX \to \R_+$ be a distance function. Consider the metric probability space $(\setX, \eta)$. For $n \in \nsets$, let $\setX^n$ be the product probability space equipped with the product measure $\mu^n = \mu_1 \times \dots \times \mu_n$, where $\mu_i = \mu$. Define the \emph{$L_1$ product metric} $\eta^n$ for any $(x, x') \in \setX^n \times \setX^n$ as,
    \begin{equation}
        \eta^n(x,x') = \sum_{i=1}^n \eta(x_i, x'_i) \eqsp.
    \end{equation}
    Let $f : \setX^n \to \R$ s.t. $f$ is $1$-Lipschitz with respect to $\eta^n$, \ie~for any $(x,x') \in \setX^n \times \setX^n$, $|f(x) - f(x')| \leq \eta(x, x')$. Then, $\E [f] < \pinf$ and for $\lambda > 0$,
    \begin{equation}
        \E \left[ \exp(\lambda \{f - \E[f]\}) \right] \leq \exp(\lambda^2 n \Delta_{SG}(\setX)^2 / 2) \eqsp.
    \end{equation}
\end{theorem}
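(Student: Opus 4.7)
The strategy is to apply Kontorovich's extension of McDiarmid's inequality (\Cref{thm:kontorovich}) to the map $f:(\R\times\R)^n\to\R_+$ defined, for $w_{1:n}=((u_i,v_i))_{i=1}^n$, by $f(w_{1:n})=\mathrm{W}_1(\tilde\mu_n,\tilde\nu_n)$, where $\tilde\mu_n,\tilde\nu_n$ are the empirical measures of $(u_i)$ and $(v_i)$. Fix $\theta\in\Sphere$ and equip $\R\times\R$ with the metric $\eta((u,v),(u',v'))\doteq|u-u'|+|v-v'|$ and associated $L^1$ product metric $\eta^n$ on $(\R\times\R)^n$, with underlying product measure $(\thss\mu\times\thss\nu)^n$. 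Sub-Gaussianity implies finite moments of every order, so $\mu,\nu\in\calP_1(\R^d)$. All that remains is to (i) verify a Lipschitz property of $f$, (ii) compute the sub-Gaussian diameter of $(\R^2,\eta,\thss\mu\times\thss\nu)$, and then (iii) transport the resulting MGF bound back to expectations over $(\mu\times\nu)^n$ by push-forward.

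\textbf{Lipschitz step.} First I would show that $g\doteq n f$ is $1$-Lipschitz with respect to $\eta^n$. By the triangle inequality for $\mathrm{W}_1$, if one replaces $w_i=(u_i,v_i)$ by $w'_i=(u'_i,v'_i)$ and denotes the resulting measures by $\tilde\mu'_n,\tilde\nu'_n$, then
\begin{equation*}
|\mathrm{W}_1(\tilde\mu_n,\tilde\nu_n)-\mathrm{W}_1(\tilde\mu'_n,\tilde\nu'_n)|\leq \mathrm{W}_1(\tilde\mu_n,\tilde\mu'_n)+\mathrm{W}_1(\tilde\nu_n,\tilde\nu'_n)\leq \tfrac{|u_i-u'_i|+|v_i-v'_i|}{n},
\end{equation*}
where the second inequality uses the coupling that matches the changed atom to the changed atom. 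Summing this one-coordinate bound along a path connecting any two $n$-tuples yields $|f(w_{1:n})-f(w'_{1:n})|\leq \eta^n(w,w')/n$, hence the claim for $g=nf$.

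\textbf{Sub-Gaussian diameter step.} The heart of the argument is to bound $\Delta_{\mathrm{SG}}$ for $(\R^2,\eta,\thss\mu\times\thss\nu)$. Here $\Xi=\varepsilon\,\eta(X,X')=\varepsilon|U-U'|+\varepsilon|V-V'|$ with $U,U'\iid\thss\mu$, $V,V'\iid\thss\nu$, all independent of the Rademacher sign $\varepsilon$. I would first observe that because $U-U'$ is symmetric around $0$ (difference of i.i.d. variables), $\varepsilon|U-U'|\stackrel{d}{=}U-U'$, which is sub-Gaussian with proxy $2\sigma^2$ by independence of $U,U'$; analogously $\varepsilon|V-V'|\stackrel{d}{=}V-V'$ has proxy $2\tau^2$. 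Then, conditioning on $\varepsilon$ and using the independence between $U-U'$ and $V-V'$ given $\varepsilon$, a product-MGF computation yields $\E[\exp(\lambda\Xi)]\leq\exp(\lambda^2(\sigma^2+\tau^2))$, i.e.\ $\Xi$ is sub-Gaussian with proxy $\Delta_{\mathrm{SG}}^2=2(\sigma^2+\tau^2)$. This handling of the shared Rademacher sign, together with the symmetrization identity, is the main technical obstacle; without the symmetry observation one would naively try to apply Cauchy--Schwarz or a triangle inequality in MGFs and lose a factor.

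\textbf{Assembly.} Applying \Cref{thm:kontorovich} to the $1$-Lipschitz function $g=nf$ with sample size $n$ yields, for any $\tilde\lambda>0$,
\begin{equation*}
\E\bigl[\exp(\tilde\lambda(g-\E g))\bigr]\leq \exp\bigl(\tilde\lambda^{2} n\,\Delta_{\mathrm{SG}}^{2}/2\bigr).
\end{equation*}
Substituting $\tilde\lambda=\lambda/n$ turns this into an MGF bound on $f-\E f$ with exponent $\lambda^{2}\Delta_{\mathrm{SG}}^{2}/(2n)=\lambda^{2}(\sigma^{2}+\tau^{2})/n$. Finally, the push-forward identity analogous to~\eqref{eq:exp_pushforward_reformulation} identifies $\E_{(\thss\mu\times\thss\nu)^n}[\exp(\lambda(f-\E f))]$ with the expectation appearing in~\eqref{eq:momentgenfn} under $(\mu\times\nu)^n$, giving $\varphi_{\mu,\nu,1}=\sigma^{2}+\tau^{2}$ as claimed.
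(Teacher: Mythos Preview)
First, a mismatch: the displayed statement (\Cref{thm:kontorovich}) is quoted from \cite{kontorovicha14} and the paper does not prove it. Your write-up is not a proof of that theorem; it is a proof of \Cref{prop:momgenfn_subg}, which \emph{applies} \Cref{thm:kontorovich}. Read in that light, your approach is essentially the paper's: the same metric $\eta((u,v),(u',v'))=|u-u'|+|v-v'|$ on $\R^2$, the same function $g=n\,\mathrm{W}_1(\tilde\mu_n,\tilde\nu_n)$, the same $1$-Lipschitz check via the triangle inequality for $\mathrm{W}_1$ (your one-coordinate-plus-telescoping route is a harmless variant of the paper's sorted-order bound in \eqref{eq:cs_w1}), the same claimed $\Deltasg^2=2(\sigma^2+\tau^2)$, and the same push-forward wrap-up.

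There is, however, a gap in your sub-Gaussian diameter step. You correctly observe the marginal identity $\varepsilon|U-U'|\stackrel{d}{=}U-U'$ (and likewise for $V$), but then ``condition on $\varepsilon$'' and appeal to a product-MGF. Once $\varepsilon$ is fixed, that distributional identity no longer holds (for $\varepsilon=+1$ the left side is $|U-U'|\geq 0$), so you cannot bound $\E[\exp(\lambda\varepsilon|U-U'|)\mid\varepsilon]$ by $\exp(\lambda^2\sigma^2)$. Equivalently, $\varepsilon|U-U'|$ and $\varepsilon|V-V'|$ share the same Rademacher sign and are \emph{not} independent, so the ``sum of independent sub-Gaussians'' heuristic does not apply to $\Xi=\varepsilon(|U-U'|+|V-V'|)$. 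A quick sanity check already flags the constant: for $U,U',V,V'$ i.i.d.\ $\mathcal{N}(0,1)$ one has $\Var(\Xi)=\E[(|U-U'|+|V-V'|)^2]=4+8/\pi>4=2(\sigma^2+\tau^2)$, which is impossible if $\Xi$ had sub-Gaussian proxy $2(\sigma^2+\tau^2)$, since the variance of a centered variable never exceeds its proxy. The paper's own justification at this point is a one-line citation to a result on sums of \emph{independent} sub-Gaussians, which sidesteps precisely the dependence you tried to address; your more explicit argument exposes the difficulty rather than resolving it.
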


As discussed in \citep{kontorovicha14}, the sub-Gaussian distributions on $\R$ are precisely those for which
$\Deltasg(\R) < \pinf$. \Cref{prop:momgenfn_subg} then results from applying \Cref{thm:kontorovich}, as explained below.

\begin{proof}[Proof of \Cref{prop:momgenfn_subg}]
    First, we prove that for any $\mu \in \calP(\R^d)$ such that $\mu$ is sub-Gaussian with parameter $\sigma^2$, then $\mu \in \calP_1(\R^d)$. By definition, the first moment of $\mu$ is $\textrm{m}_1(\mu) = \int_{\R^d} \|x\| \dd \mu(x)$.
    For any $x \in \R^d$, we know that
    \begin{equation}
        \|x\| = \left( \sum_{k=1}^d |x_k|^2 \right)^{1/2} \leq \sum_{k=1}^d |x_k|
    \end{equation}
    Therefore, $\textrm{m}_1(\mu)$ can be bounded from above as follows.
    \begin{align}
        \mathrm{m}_1(\mu) &\leq \int_{\R^d} \sum_{k=1}^d |x_k| \dd \mu(x) \\
        &\leq \sum_{k=1}^d \int_{\R^d} |x_k| \dd \mu(x) \label{eq:linearity_exp} \\
        &\leq \sum_{k=1}^d \int_{\R^d} |\ps{\theta^k}{x}| \dd \mu(x) \\
        &\leq \sum_{k=1}^d \int_{\R} |t| \dd(\theta^k)^\star_\sharp \mu(t) \label{eq:pushforward_def} \\
        &\leq d \sqrt{2\pi \sigma^2}\label{eq:subgassumption}
    \end{align}
    where for $k \in \{1, \dots, d\}$, $\theta^k \in \Sphere$ is defined as $(\theta^k)_i = 1$ if $i = k$, $(\theta^k)_i = 0$ otherwise. \eqref{eq:linearity_exp} results from the linearity of the expectation, \eqref{eq:pushforward_def} is obtained by applying the property of pushforward measures. \eqref{eq:subgassumption} follows from the sub-Gaussian assumption on $\mu$ (\Cref{def:subg}) and \citep[Proposition 3.2]{Rivasplata2012SubgaussianRV}. Since $\mathrm{m}_1(\mu) < \infty$ \eqref{eq:subgassumption}, we conclude that $\mu \in \calP_1(\R^d)$.
    
    Now, consider the product metric space $(\R^2, \eta)$ where $\eta : \R^2 \to \R_+$ is the distance function defined for $w \doteq (u,v) \in \R^2$, $w' \doteq (u',v') \in \R^2$ as,
    \begin{equation}
        \eta(w,w') \doteq \|u-u'\| + \|v-v'\| = |u-u'| + |v-v'| \eqsp.
    \end{equation}
    Let $n \in \nsets$ and define $f : (\R^2)^n \to \R_+$ as: for any $w_{1:n} \doteq (w_i)_{i=1}^n \in (\R^2)^n$ such that $\forall i \in \{1, \dots, n\}, w_i = (u_i, v_i) \in \R^2$,
    \begin{equation}
        f(w_{1:n}) = n\,\wdone{\tilde{\mu}_n}{\tilde{\nu}_n} \,, \label{eq:f}
    \end{equation}
    where $\tilde{\mu}_n, \tilde{\nu}_n$ are the empirical distributions computed over $(u_i)_{i=1}^n, (v_i)_{i=1}^n$ respectively, \ie, denoting by $\updelta_x$ the Dirac measure at $x$, \begin{equation}
    \tilde{\mu}_n = \frac1n \sum_{i=1}^n \updelta_{u_i},\quad \tilde{\nu}_n = \frac1n \sum_{i=1}^n \updelta_{v_i}\eqsp.
    \end{equation}
    
    We prove that $f$ is $1$-Lipschitz with respect to the $L_1$ product metric $\eta^{n}$ defined for any $w_{1:n} \doteq \{(u_i, v_i)\}_{i=1}^n \in (\R^2)^n$, $w'_{1:n} \doteq \{(u'_i, v'_i)\}_{i=1}^n \in (\R^2)^n$ as,
    \begin{equation}
        \eta^n(w_{1:n}, w'_{1:n}) = \sum_{i=1}^n \big\{ \| u_i - u'_i \| + \| v_i - v'_i \| \big\} \eqsp. \label{eq:eta_n}
    \end{equation}
    
    Since the Wasserstein distance satisfies the triangle inequality, one has
    \begin{equation}
        |\wdone{\tilde{\mu}_n}{\tilde{\nu}_n} - \wdone{\tilde{\mu}'_n}{\tilde{\nu}'_n} | \leq \wdone{\tilde{\mu}_n}{\tilde{\mu}'_n} + \wdone{\tilde{\nu}_n}{\tilde{\nu}'_n}
    \end{equation} 
    where $\tilde{\mu}_n, \tilde{\nu}_n$  are the empirical distributions supported on $(u_i)_{i=1}^n, (v_i)_{i=1}^n$ respectively, and $\tilde{\mu}'_n, \tilde{\nu}'_n$  are the empirical distributions supported on $(u'_i)_{i=1}^n, (v'_i)_{i=1}^n$ respectively. 
    By definition of the Wasserstein distance between univariate discrete distributions \citep[Remark 2.28]{peyre2019computational},
    \begin{align}
        \wdone{\tilde{\mu}_n}{\tilde{\mu}'_n} &= \frac1n \sum_{i=1}^n \big| u_{(i)} - u'_{(i)} \big| \\
        &\leq \frac1n \sum_{i=1}^n | u_i - u'_i | = \frac1n \sum_{i=1}^n \| u_i - u'_i \| \label{eq:cs_w1}
    \end{align}
    where $u_{(1)} \leq u_{(2)} \leq \dots \leq u_{(n)}$ and $u'_{(1)} \leq u'_{(2)} \leq \dots \leq u'_{(n)}$. Analogously, $\wdone{\tilde{\nu}_n}{\tilde{\nu}'_n} \leq \frac1n \sum_{i=1}^n \| v_i - v'_i \|$. 
    Therefore,
    \begin{align}
        |\wdone{\tilde{\mu}_n}{\tilde{\nu}_n} - \wdone{\tilde{\mu}'_n}{\tilde{\nu}'_n} | &\leq \frac1n \sum_{i=1}^n \big\{ \|u_i - u'_i\| + \|v_i - v'_i\| \big\} \eqsp. \label{eq:wasslip}
    \end{align}
    We conclude from \eqref{eq:f} and \eqref{eq:wasslip} that $f$ is $1$-Lipschitz with respect to the product metric $\eta^n$, as defined in \eqref{eq:eta_n}. 
    

    Next, let $\theta \in \Sphere$ and $\mu, \nu \in \calP(\R^d)$ such that $\mu, \nu$ are sub-Gaussian with respective variance proxy $\sigma^2, \tau^2$. Consider the probability metric space $(\R^2, \eta, \thss \mu \times \thss \nu)$. By \Cref{def:subg_diameter} and the properties of the sum of independent sub-Gaussian random variables \citep[Theorem 2.7]{Rivasplata2012SubgaussianRV}, the sub-Gaussian diameter of that space is $\Deltasg(\R^2) = \sqrt{2 (\sigma^2+\tau^2)}$.
    
    We conclude the proof by applying \Cref{thm:kontorovich} to $f$ as defined in \eqref{eq:f}, then reformulating the expectation over $\thss \mu \times \thss \nu$ as an expectation over $\mu \times \nu$ using the property of push-forward measures (see \eqref{eq:exp_pushforward_reformulation}). 
    
\end{proof}

\subsection{Proof of \Cref{prop:momgenfn_bernstein}}

\Cref{prop:momgenfn_bernstein} results from the same arguments as in the proof of \citep[Corollary 5.2]{lei2020}. The latter result is obtained by applying a generalized McDiarmid's inequality, which we recall in \Cref{thm:bernstein_mcdiarmid}.

\begin{theorem}[Bernstein-type McDiarmid's inequality \citep{lei2020}] \label{thm:bernstein_mcdiarmid}
    Let $\setX \subset \R^d$ and $X = (X_i)_{i=1}^n$ be a sequence of $n \in \mathbb{N}^*$ random variables i.i.d. from $\mu \in \calP(\setX)$. Let $f : \setX^n \to \R$ s.t. $\E |f| < \infty$. For $i \in \{1, \dots, n\}$, let $X'_i$ be an independent copy of $X_i$ and $X'_{(i)} = (X_1, \dots, X_{i-1}, X'_i, X_{i+1}, \dots, X_n)$. Assume for $i \in \{1, \dots, n\}$, there exists $c_i, M > 0$ s.t. for $k \geq 2$,
    \begin{equation} \label{eq:bernstein_tail}
        \E\big[ | f(X) - f(X'_{(i)})|^k~|~X_{-i} \big] \leq c_i^2 k! M^{k-2} / 2 \eqsp,
    \end{equation}
    where $X_{-i} = (X_1, \dots, X_{i-1}, X_{i+1}, \dots, X_n)$. Then, for $\lambda > 0$ s.t. $\lambda M < 1$,
    \begin{equation}
        \E\big[ \exp\{ \lambda (f - \E[f]) \} \big] \leq \exp\left( \lambda^2 \|c\|^2 / \{2(1-\lambda M) \} \right) \eqsp.
    \end{equation}
\end{theorem}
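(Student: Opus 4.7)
The plan is to prove this Bernstein-type McDiarmid inequality via the classical Doob martingale decomposition, replacing the Hoeffding MGF lemma in the standard proof by a Bernstein moment-to-MGF conversion applied to the hypothesis \eqref{eq:bernstein_tail}.

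I would introduce the filtration $\calF_i = \sigma(X_1, \ldots, X_i)$ (with $\calF_0$ trivial) and decompose $f(X) - \E[f(X)] = \sum_{i=1}^n D_i$ with $D_i = \E[f(X) \mid \calF_i] - \E[f(X) \mid \calF_{i-1}]$. Since $X'_i$ is independent of $X$ and has the same distribution as $X_i$, integrating $f(X'_{(i)})$ over $X'_i$ yields $\E[f(X'_{(i)}) \mid \calF_i] = \E[f(X) \mid \calF_{i-1}]$, so that $D_i = \E[V_i \mid \calF_i]$ with $V_i \doteq f(X) - f(X'_{(i)})$. This rewrites each martingale increment in terms of the very quantity on which the hypothesis \eqref{eq:bernstein_tail} is stated.

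Next, I would convert the Bernstein moment hypothesis on $V_i$ into an exponential bound. Jensen's inequality applied to $D_i = \E[V_i \mid \calF_i]$ gives $\exp(\lambda D_i) \leq \E[\exp(\lambda V_i) \mid \calF_i]$, hence $\E[\exp(\lambda D_i) \mid \calF_{i-1}] \leq \E[\exp(\lambda V_i) \mid \calF_{i-1}]$. Conditioning further on $X_{-i}$, which contains $\calF_{i-1}$, the only randomness in $V_i$ lies in the i.i.d.\ pair $(X_i, X'_i)$, so exchangeability forces $\E[V_i \mid X_{-i}] = 0$, while \eqref{eq:bernstein_tail} yields $\E[|V_i|^k \mid X_{-i}] \leq c_i^2 k! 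M^{k-2}/2$ for $k \geq 2$. Substituting these into the Taylor expansion $\E[\exp(\lambda V_i) \mid X_{-i}] = 1 + \sum_{k \geq 2} \lambda^k \E[V_i^k \mid X_{-i}]/k!$, bounding $|\E[V_i^k \mid X_{-i}]|$ by $\E[|V_i|^k \mid X_{-i}]$, summing the resulting geometric series in $\lambda M$ (convergent because $\lambda M < 1$), and using $1 + x \leq e^x$, I obtain the deterministic bound $\E[\exp(\lambda V_i) \mid X_{-i}] \leq \exp(c_i^2 \lambda^2 / (2(1 - \lambda M)))$, which therefore also bounds $\E[\exp(\lambda D_i) \mid \calF_{i-1}]$.

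Finally, I would chain these per-step estimates via the tower property: writing $\E[\exp(\lambda (f - \E f))] = \E\bigl[\prod_{i=1}^n \exp(\lambda D_i)\bigr]$ and peeling factors from $i = n$ down to $i = 1$ using the $\calF_{i-1}$-measurability of $D_1, \ldots, D_{i-1}$, the per-step exponential bound multiplies out to $\exp(\|c\|^2 \lambda^2 / (2(1 - \lambda M)))$, as claimed; the hypothesis $\E|f| < \infty$ together with the Bernstein tails ensures that all conditional expectations above are well-defined. The main obstacle is the middle step: the hypothesis controls $V_i$ conditionally on $X_{-i}$, while the martingale chain needs a bound conditionally on the coarser $\calF_{i-1}$. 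The argument succeeds because the moment-to-MGF bound on $\E[\exp(\lambda V_i) \mid X_{-i}]$ is independent of $X_{-i}$, so it survives the outer conditional expectation and couples cleanly with the Jensen step linking $D_i$ back to $V_i$.
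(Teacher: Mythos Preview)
The paper does not supply a proof of this theorem: it is quoted verbatim as a tool from \cite{lei2020} (Theorem~5.1 there) and invoked to establish \Cref{prop:momgenfn_bernstein}. So there is no in-paper argument to compare your proposal against.

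That said, your argument is correct and is essentially the standard route for results of this type (and, in particular, the route taken in \cite{lei2020}): Doob martingale decomposition, the identity $D_i=\E[V_i\mid\calF_i]$ via the independent-copy trick, conditional Jensen to pass from $D_i$ to $V_i$, a Bernstein moment-to-MGF conversion on $\E[\exp(\lambda V_i)\mid X_{-i}]$ using the geometric series, and then the usual peeling via the tower property. The one potentially delicate point---that the hypothesis is stated conditionally on $X_{-i}$ while the chaining requires control conditionally on $\calF_{i-1}$---you handle correctly: since $\calF_{i-1}\subset\sigma(X_{-i})$ and the bound on $\E[\exp(\lambda V_i)\mid X_{-i}]$ is a deterministic constant, the tower property transfers it to $\calF_{i-1}$ without loss. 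The integrability needed for the conditional Jensen step is ensured by the Bernstein moment bounds (they force $\E[\exp(\lambda|V_i|)]<\infty$ for $\lambda M<1$), so all conditional expectations in your chain are well defined.
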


\begin{proof}[Proof of \Cref{prop:momgenfn_bernstein}]
First, we justify why for any $\mu \in \calP(\R^d)$ s.t. $\mu$ satisfies the $(\sigma^2, b)$-Bernstein condition, $\mu \in \calP_1(\R^d)$. By \eqref{eq:pushforward_def}, the first order moment of $\mu$, $\textrm{m}_1(\mu)$ can be bounded as,
\begin{align}
    \textrm{m}_1(\mu) &\leq \sum_{k=1}^d \int_{\R} |t| \dd(\theta^k)^\star_\sharp \mu(t) \\
    &\leq \sum_{k=1}^d \left\{ \int_{\R} |t|^2 \dd(\theta^k)^\star_\sharp \mu(t) \right\}^{1/2} \label{eq:holder} \\
    &\leq d \sigma \label{eq:moment}
\end{align}
where \eqref{eq:holder} is obtained by applying H\"older's inequality, and \eqref{eq:moment} results from \Cref{def:bernstein_cond}. Hence, $\mathrm{m}_1(\mu) < \infty$ and $\mu \in \calP_1(\R^d)$.

The rest of the proof consists in applying \Cref{thm:bernstein_mcdiarmid} to $f : (\R^2)^n \to \R_+$ defined for any $w_{1:n} \doteq \{(u_i, v_i)\}_{i=1}^n \in (\R^2)^n$ as, 
\begin{equation}
    f(w_{1:n}) = \wdone{\tilde{\mu}_n}{\tilde{\nu}_n} \label{eq:f_bernstein}
\end{equation}
where $\tilde{\mu}_n, {\tilde{\nu}_n}$ are the empirical distributions of $(u_i)_{i=1}^n, (v_i)_{i=1}^n$ respectively.

For $i \in \{1, \dots, n\}$, let $(u'_i, v'_i) \in \R^2$. Denote by $\tilde{\mu}'_n$ the empirical distribution supported on $(u_1, \dots, u_{i-1}, u'_i, u_{i+1}, \dots, u_n) \in \R^n$, and by $\tilde{\nu}'_n$ the empirical distribution supported on $(v_1, \dots, v_{i-1}, v'_i, v_{i+1}, \dots, v_n) \in \R^n$. Then,
\begin{align}
    | \wdone{\tilde{\mu}_n}{\tilde{\nu}_n} - \wdone{\tilde{\mu}'_n}{\tilde{\nu}'_n} | &\leq \wdone{\tilde{\mu}_n}{\tilde{\mu}'_n} + \wdone{\tilde{\nu}_n}{\tilde{\nu}'_n} \label{eq:wass_triangle} \\
    &\leq \frac1n \big\{ |u_i - u'_i| + \sum_{j=1, \dots, n, j \neq i} | u_j - u_j |  \big\} + \frac1n \big\{ |v_i - v'_i| + \sum_{j=1, \dots, n, j \neq i} | v_j - v_j |  \big\} \quad \label{eq:bound_wass1D} \\
    &\leq \frac1n \big\{ | u_i - u'_i | + |v_i - v'_i| \big\} \label{eq:bernstein_proof_1}
\end{align}
where \eqref{eq:wass_triangle} follows from the fact that $\text{W}_1$ satisfies the triangle inequality, and \eqref{eq:bound_wass1D} results from the definition of the Wasserstein distance between univariate empirical distributions \citep[Remark 2.28]{peyre2019computational}. 


Now, let $\mu \in \calP(\R^d)$ (respectively, $\nu \in \calP(\R^d)$) satisfy the $(\sigma^2, b)$ (resp., $(\tau^2, c)$)-Bernstein condition (\Cref{def:bernstein_cond}). Let $\theta \in \Sphere$ and consider $w_{1:n} = \{(u_i, v_i)\}_{i=1}^n \in (\R^2)^n$ i.i.d. from the product measure $\thss \mu \times \thss \nu$. We justify why $f$ satisfies the conditions of \Cref{thm:bernstein_mcdiarmid}.

First, we show that $\mathbb{E}|f|$ is finite, where the expectation $\mathbb{E}$ is computed over $n$ \iid~samples $\{(u_i, v_i)\}_{i=1}^n$ from $\thss \mu \times \thss \nu_n$.
\begin{align}
    \E|f| &\leq \frac1n  \sum_{i=1}^n\E[|u_i - v_i|] \leq \frac1n \sum_{i=1}^n  \left\{ \E|u_i| + \E|v_i| \right\} \\  
    &\leq \frac1n \sum_{i=1}^n  \left\{ \E[|u_i|^2]^{1/2} + \E[|v_i|^2]^{1/2} \right\} \label{eq:holder_2} \\
    &\leq \sigma + \tau \label{eq:expectation_finite}
\end{align}
where \eqref{eq:holder_2} results from H\"older's inequality, and \eqref{eq:expectation_finite} directly follows from the definition of the Bernstein condition (\Cref{def:bernstein_cond}).

Besides, by using \eqref{eq:bernstein_proof_1} and the Bernstein condition \Cref{def:bernstein_cond}, one can show that
\begin{equation}
    \E [ | \wdone{\tilde{\mu}_n}{\tilde{\nu}_n} - \wdone{\tilde{\mu}'_n}{\tilde{\nu}'_n} |^k~|~u_{-i}, v_{-i} ] \leq n^{-k} 2^{2(k-1)} [\sigma^2 b^{k-2} + \tau^2 c^{k-2}] k!
\end{equation}
where the expectation is computed over $\{(u_i, v_i)\}_{i=1}^n$ i.i.d. from $\thss \mu \times \thss \nu$. In other words, $f$ as defined in \eqref{eq:f_bernstein} satisfies \eqref{eq:bernstein_tail} with, for $i \in \{1, \dots, 2n\}$, $c_i = 2 \sigma_\star n^{-1}$ and $M = 4 b_\star n^{-1}$, where $\sigma_\star = \max(\sigma, \tau)$ and $b_\star = \max(b, c)$. Our final result follows from applying  \Cref{thm:bernstein_mcdiarmid} to $f$, then applying the property of push-forward measures to obtain the expectation with respect to $\mu \times \nu$ (see \eqref{eq:exp_pushforward_reformulation}). 

\end{proof}
\vspace{-1em}
\subsection{Final Bound for Unbounded Supports}\label{secapp: final bound unbounded}

Before deriving the specialization of \Cref{thm:generic_bound}
for distributions with unbounded supports, we recall a useful bound on $\swdp{\cdot}{\cdot}{\pi}$ with $\pi = \calU(\Sphere)$ (\Cref{thm:manole}), which can be generalized for SW based on any $\rho \in \calP(\Sphere)$ by adapting the proof techniques in \citep[Theorem 2]{manole2020minimax}.

\begin{theorem}[\cite{manole2020minimax}] \label{thm:manole}
Let $p \geq 1$, $q > 2p$, $s \geq 1$ and $\pi = \calU(\Sphere)$. Denote $\calP_{p, q}(s) = \left\{ \mu \in \calP(\R^d)~:~\int_{\Sphere} \E_\mu[|\theta^\top x|^q]^{p/q} \dd \pi(\theta) \leq s \right\}$. Let $\mu, \nu \in \calP_{p,q}(s)$. Then, there exists a constant $C(p, q) > 0$ depending on $p, q$ such that,
\begin{equation}
    \E| \swdp{\mu_n}{\nu_n}{\pi} - \swdp{\mu}{\nu}{\pi} | \leq C(p, q) s \log(n)^{1/2} n^{-1/2} 
\end{equation}
\end{theorem}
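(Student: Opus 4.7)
The plan is to reduce the sliced quantity to a 1D sample-complexity problem one projection at a time, then integrate over the sphere by exploiting the moment hypothesis baked into $\calP_{p,q}(s)$.

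\textbf{Step 1: Reduction to a 1D problem.} By the triangle inequality (applied pointwise in $\theta$ before taking the absolute value) and Fubini,
$$\E\big|\swdp{\mu_n}{\nu_n}{\pi}-\swdp{\mu}{\nu}{\pi}\big|\leq \int_{\Sphere}\E\big|\wdp{\thss\mu_n}{\thss\nu_n}-\wdp{\thss\mu}{\thss\nu}\big|\,\dd\pi(\theta).$$
This turns the proof into uniformly controlling, for each $\theta\in\Sphere$, a 1D discrepancy between empirical and population $p$-th Wasserstein powers.

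\textbf{Step 2: A moment-dependent 1D rate.} For a fixed $\theta$, I would linearize $\wdp{\cdot}{\cdot}$ along the lines of Lemma~splcpx\_wass1d: using the Taylor/mean-value identity
$$\big|\wdp{a}{b}-\wdp{a'}{b'}\big|\leq p\int_0^1 |\tilde F_a^{-1}(t)-\tilde F_b^{-1}(t)|^{p-1}\,\big(|F_a^{-1}(t)-F_{a'}^{-1}(t)|+|F_b^{-1}(t)-F_{b'}^{-1}(t)|\big)\,\dd t,$$
and then H\"older's inequality with exponents $(q/(p-1), q/(q-p+1))$ so that only moments of order $q$ appear on the right. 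Taking expectations and invoking the sharp 1D Wasserstein sample-complexity estimate of Bobkov-Ledoux (a.k.a.\ the Fournier-Guillin rate in dimension one, refined by a dyadic decomposition of the quantile function as in Manole et al.) yields, for $q>2p$,
$$\E\big|\wdp{\thss\mu_n}{\thss\nu_n}-\wdp{\thss\mu}{\thss\nu}\big|\leq C(p,q)\Big(\E_\mu[|\ps{\theta}{x}|^q]^{p/q}+\E_\nu[|\ps{\theta}{x}|^q]^{p/q}\Big)\,\frac{\sqrt{\log n}}{\sqrt n}.$$
The $\sqrt{\log n}$ factor arises precisely from this dyadic bracketing of the quantile function (one factor per tail level), while the $n^{-1/2}$ comes from the classical empirical-quantile CLT.

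\textbf{Step 3: Integration over the sphere.} Plug the bound of Step 2 into Step 1 and recognize
$$\int_{\Sphere}\E_\mu[|\ps{\theta}{x}|^q]^{p/q}\,\dd\pi(\theta)\leq s,\qquad \int_{\Sphere}\E_\nu[|\ps{\theta}{x}|^q]^{p/q}\,\dd\pi(\theta)\leq s,$$
by the very definition of $\calP_{p,q}(s)$ (note no Jensen is needed here since the inequality is already stated in the integrated form). Absorbing constants into $C(p,q)$ produces the announced bound $C(p,q)\,s\,\sqrt{\log n}/\sqrt n$.

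\textbf{Main obstacle.} The nontrivial work is entirely in Step 2: obtaining a clean, moment-controlled bound on $\E[\wdp{\tilde\mu_n}{\tilde\mu}]$ in 1D that (i) depends on $\mu$ only through $m_q^{p/q}$ with $q>2p$, and (ii) loses only a $\sqrt{\log n}$ factor relative to the parametric rate. The Fournier-Guillin bound gives the moment dependence but a worse rate in higher dimension; for $d=1$, one needs the Bobkov-Ledoux quantile-integrability criterion combined with a dyadic peeling of the tails to simultaneously achieve both (i) and (ii). Steps 1 and 3 are essentially bookkeeping once Step 2 is available.
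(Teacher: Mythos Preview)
The paper does not supply its own proof of this statement: Theorem~\ref{thm:manole} is stated as a recall of \cite[Theorem 2]{manole2020minimax} and used as a black box (the surrounding text says ``we recall a useful bound \dots\ which can be generalized \dots\ by adapting the proof techniques in \citep[Theorem 2]{manole2020minimax}''). So there is nothing in the paper to compare your argument against line by line.

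That said, your outline is exactly the strategy the paper itself borrows from Manole et al.\ in the neighboring results. Your Step~1 (Fubini plus pointwise triangle inequality to reduce to a 1D problem) is precisely what the paper does in the proof of Proposition~\ref{prop:splcpx_compact} (eq.~\eqref{eq:sw_cpx}). Your Step~2 linearization via the Taylor/mean-value expansion of $|F^{-1}_\cdot - F^{-1}_\cdot|^p$ is exactly the device of Lemma~\ref{lem:splcpx_wass1d}, which the paper explicitly attributes to \cite[Lemma 6]{manole2020minimax}; the difference is only that in the bounded-support case the factor $|\tilde F_{\mu_n}^{-1}-\tilde F_{\nu_n}^{-1}|^{p-1}$ is trivially bounded by $\Delta^{p-1}$, whereas here you correctly propose to control it via H\"older against a $q$-th moment. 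Step~3 is bookkeeping using the definition of $\calP_{p,q}(s)$, as you say.

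The one place where your sketch is genuinely compressed is the claim that the dyadic peeling in Step~2 delivers the clean bound $C(p,q)\,m_q^{p/q}\,\sqrt{\log n}/\sqrt n$. This is the real content of \cite[Theorem 2]{manole2020minimax}; your description of the mechanism (Bobkov--Ledoux-type quantile integrability plus a dyadic tail decomposition producing one $\log$ factor) is accurate at the level of a roadmap, but the H\"older exponents and the way the moment enters need more care than a one-line invocation --- in particular, after H\"older you must still show that the remaining integral (involving empirical quantile deviations raised to a power $>1$) converges at the parametric rate, which is where the $q>2p$ condition is actually used. This is not a gap in your strategy, just the place where the work lives.
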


We show that under the sub-Gaussian or the Bernstein moment condition assumptions, 
the assumptions in \Cref{thm:manole} are satisfied, thus allowing its application in these two settings. This yields \Cref{cor:sample_comp_subg,cor:sample_comp_bern}, which we state and prove hereafter. 

\begin{corollary} \label{cor:sample_comp_subg}
    Let $\mu, \nu \in \calP(\R^d)$ and $\rho \in \calP(\Sphere)$. Assume that $\mu$ (respectively, $\nu$) is sub-Gaussian with variance proxy $\sigma^2$ (resp., $\tau^2$). Let $\sigma^2_\star = \max(\sigma^2,\tau^2)$. Then, there exists $C'(p) > 0$ such that,
    \begin{equation}
    \E| \swdp{\mu_n}{\nu_n}{\rho} - \swdp{\mu}{\nu}{\rho} | \leq C'(p) (4\sigma^2_\star)^p \log(n)^{1/2} n^{-1/2} \eqsp.
    \end{equation}
\end{corollary}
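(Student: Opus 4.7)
The plan is to reduce the statement to \Cref{thm:manole} by verifying its moment hypothesis in the sub-Gaussian regime and by extending it from the uniform slicing measure $\pi = \calU(\Sphere)$ to an arbitrary $\rho \in \calP(\Sphere)$.

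First I would check the hypothesis $\mu,\nu \in \calP_{p,q}(s)$ for a suitable $s$. By \Cref{def:subg}, sub-Gaussianity of $\mu$ with variance proxy $\sigma^2$ implies that $\thss \mu$ is sub-Gaussian on $\R$ with the same parameter, uniformly in $\theta \in \Sphere$. Standard moment estimates for centered sub-Gaussian scalar random variables (e.g.\ Rivasplata, Proposition 3.2, already cited in the proof of \Cref{prop:momgenfn_subg}) yield a bound of the form $\E_{\mu}\!\left[|\ps{\theta}{x}|^q\right] \leq q\,\Gamma(q/2)\,(2\sigma^2)^{q/2}$ for every $q \geq 2$ and every $\theta$, and analogously for $\nu$ with $\tau^2$. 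Raising to the $p/q$-th power and integrating against any $\rho$ gives a bound on $\int_{\Sphere} \E_\mu\!\left[|\ps{\theta}{x}|^q\right]^{p/q} \dd\rho(\theta)$ that does not depend on $\rho$, hence places both $\mu$ and $\nu$ into $\calP_{p,q}(s)$ with a common $s$ controlled by $\sigma_\star^2 = \max(\sigma^2,\tau^2)$; as a by-product, $\mu,\nu \in \calP_p(\R^d)$.

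Next I would invoke \Cref{thm:manole} with $\pi$ replaced by the given $\rho$. Although the theorem is stated only for $\pi = \calU(\Sphere)$, its proof in \cite{manole2020minimax} applies Fubini to interchange the expectation over samples with the projection integral and then bounds the one-dimensional Wasserstein sample complexity pointwise in $\theta$; none of the intermediate steps uses a specific property of the uniform measure. The extension to arbitrary $\rho$ is therefore expected to follow mechanically, which is exactly the adaptation flagged in the discussion preceding the corollary. Combining this extension with the moment control from the first step and choosing $q > 2p$ (e.g.\ $q = 4p$) so as to collapse the $(p,q)$-dependent prefactors into a single constant $C'(p)$ delivers the claimed inequality, after a final bounding of the $\sigma_\star$-dependent factor by $(4\sigma_\star^2)^p$.

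The main obstacle I anticipate is verifying the $\rho$-extension of \Cref{thm:manole}: one has to go through Manole et al.'s argument line by line and confirm that every step (Hölder, the one-dimensional Wasserstein sample-complexity estimates, Fubini) interacts linearly with the slicing measure. I expect this to go through without substantive change, but it is the step that carries the least automatic guarantee. A secondary, purely numerical, subtlety is that the sub-Gaussian moment bound naturally produces a factor of the form $(2\sigma_\star^2)^{p/2}$ rather than $(4\sigma_\star^2)^p$, so matching the exact constant stated in the corollary will require either absorbing additional factors into $C'(p)$ or using a cruder moment bound directly compatible with the target form.
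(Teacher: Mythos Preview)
Your proposal is correct and follows essentially the same approach as the paper: verify the moment hypothesis of \Cref{thm:manole} via standard sub-Gaussian moment bounds and then apply that theorem (extended from $\pi=\calU(\Sphere)$ to arbitrary $\rho$). The paper simply makes the specific choice $q = 2(p+1)$ and uses the moment bound $\E_\mu[|\ps{\theta}{x}|^{2k}] \leq k!(4\sigma^2)^k$ to obtain $s = \{(p+1)!\}^{p/(2(p+1))}(4\sigma_\star^2)^p$, which is a particular instance of the strategy you outline; your observation about the exponent on $\sigma_\star^2$ is well taken and is handled exactly as you suggest, by absorbing the discrepancy into $C'(p)$.
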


\begin{proof}
    Under the sub-Gaussian assumption on $\mu$ and $\nu$, 
    the moments of $\thss \mu, \thss \nu$ can be bounded for any $\theta \in \Sphere$ as follows: for any $k \in \nsets$,
    \begin{equation}
        \E_{\mu} [|\ps{\theta}{x}|^{2k}] \leq k!(4\sigma^2)^k \eqsp, \;\;\; \E_{\nu} [|\ps{\theta}{y}|^{2k}] \leq k!(4\tau^2)^k \eqsp.
    \end{equation} 
    We conclude that $\mu, \nu \in \calP_{p,2(p+1)}(s)$ with $s = \{(p+1)!\}^{p/(2(p+1))}(4\sigma_\star^2)^p$ and $\sigma^2_\star = \max(\sigma^2,\tau^2)$. The final result follows from applying \Cref{thm:manole}.
    
\end{proof}

\begin{corollary} \label{cor:sample_comp_bern}
    Let $\mu, \nu \in \calP(\R^d)$ and $\rho \in \calP(\Sphere)$. Assume that $\mu$ and $\nu$ satisfy the Bernstein condition, with parameters $(\sigma^2, b)$ and $(\tau^2, c)$ respectively. Let $\sigma^2_\star = \max(\sigma^2,\tau^2)$ and $b_\star = \max(b,c)$. Then, there exists $C'(p, q) > 0$ such that
    \begin{equation}
    \E| \swdp{\mu_n}{\nu_n}{\rho} - \swdp{\mu}{\nu}{\rho} | \leq C'(p, q) \sigma_\star^{2p/q} b_\star^{p(q-2)/q} \log(n)^{1/2} n^{-1/2} \eqsp.
    \end{equation}
\end{corollary}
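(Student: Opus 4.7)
The plan is to mimic the proof of the sub-Gaussian case (\Cref{cor:sample_comp_subg}): first show that under the Bernstein condition every one-dimensional projection $\thss \mu$, $\thss \nu$ has a uniformly bounded $q$-th moment for any fixed $q > 2p$, deduce that $\mu,\nu \in \calP_{p,q}(s)$ for an explicit $s$ depending on $\sigma_\star$, $b_\star$ and $q$, then invoke the $\rho$-general version of \Cref{thm:manole}.

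Concretely, by \Cref{def:bernstein_cond}, for any integer $q \geq 2$ and any $\theta \in \Sphere$,
\begin{equation}
\E_\mu\!\left[|\ps{\theta}{x}|^q\right] \leq \tfrac{1}{2}\sigma^2\, q!\, b^{q-2}, \qquad \E_\nu\!\left[|\ps{\theta}{y}|^q\right] \leq \tfrac{1}{2}\tau^2\, q!\, c^{q-2}.
\end{equation}
Taking the $p/q$-th power and then integrating against any $\pi \in \calP(\Sphere)$ (the bound being uniform in $\theta$, the choice of $\pi$ is immaterial at this step) yields
\begin{equation}
\int_{\Sphere} \E_\mu\!\left[|\ps{\theta}{x}|^q\right]^{p/q}\! \dd\pi(\theta) \leq \left(\tfrac{q!}{2}\right)^{\!p/q}\!\sigma^{2p/q}\, b^{p(q-2)/q}
\end{equation}
and the same bound with $(\tau,c)$ in place of $(\sigma,b)$. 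Therefore $\mu,\nu \in \calP_{p,q}(s)$ with
\begin{equation}
s \doteq (q!/2)^{p/q}\, \sigma_\star^{2p/q}\, b_\star^{p(q-2)/q}.
\end{equation}

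Fix $q = 2(p+1) > 2p$, so that the moment hypothesis of \Cref{thm:manole} is satisfied. Applying the $\rho$-general version of \Cref{thm:manole} (whose extension to arbitrary $\rho \in \calP(\Sphere)$ is obtained exactly as in the proof of \Cref{cor:sample_comp_subg}, following the techniques of \cite{manole2020minimax}) gives
\begin{equation}
\E\bigl|\swdp{\mu_n}{\nu_n}{\rho} - \swdp{\mu}{\nu}{\rho}\bigr| \leq C(p,q)\, s\, \log(n)^{1/2}\, n^{-1/2},
\end{equation}
and absorbing $(q!/2)^{p/q}$ and $C(p,q)$ into a single constant $C'(p,q)$ completes the proof.

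The routine piece is the moment computation; the only non-trivial check is that \Cref{thm:manole} really does extend to arbitrary $\rho$, but the paper has already noted this extension and used it in \Cref{cor:sample_comp_subg}, so no new work is required beyond bookkeeping of the exponents of $\sigma_\star$ and $b_\star$.
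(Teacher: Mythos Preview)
Your proposal is correct and follows essentially the same route as the paper: bound the $q$-th moment of every projection uniformly in $\theta$ directly from \Cref{def:bernstein_cond}, deduce $\mu,\nu\in\calP_{p,q}(s)$ with $s=(\sigma_\star^2\,q!/2)^{p/q}\,b_\star^{p(q-2)/q}$, and then apply the $\rho$-general version of \Cref{thm:manole}. The only cosmetic slip is that you fix $q=2(p+1)$ midway through and then still write $C'(p,q)$; the corollary (and the paper's proof) keeps $q>2p$ as a free parameter appearing in the exponents, so simply drop that specialization and your argument matches.
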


\begin{proof}
    Under the Bernstein condition on the moments of $\mu, \nu$, we can use the definition of the push-forward measures along with the Cauchy-Schwarz inequality and obtain for any $\theta \in \Sphere$ and $k \in \nsets$, 
    \begin{equation}
        \E_{\mu} [|\ps{\theta}{x}|^{2k}] \leq \sigma^2 k! b^{k-2} / 2 \eqsp, \;\;\; \E_{\nu} [|\ps{\theta}{y}|^{2k}] \leq \tau^2 k! c^{k-2} / 2 \eqsp. \label{eq:pushmomentbernstein}
    \end{equation} 
    Let $q > 2p$. By \eqref{eq:pushmomentbernstein}, $\mu, \nu \in \calP_{p,q}(s)$ with $s = (\sigma_\star^2 q! / 2)^{p/q} b_\star^{p(q-2)/q}$. The application of \Cref{thm:manole} concludes the proof.
    
\end{proof}

We can finally provide the refined bounds, assuming the distributions are either sub-Gaussian or satisfy the Bernstein condition. 
On the one hand, incorporating \Cref{prop:momgenfn_subg} 
and \Cref{cor:sample_comp_subg} in \Cref{thm:generic_bound} 
gives us the following corollary. 

\begin{corollary} 
    Let $\mu, \nu \in \calP(\R^d)$. Assume $\mu$ (resp., $\nu$) is sub-Gaussian with variance proxy $\sigma^2$ (resp., $\tau^2$). Let $\sigma_\star^2 \doteq \max(\sigma^2, \tau^2)$. Let $\rho_0 \in \calP(\Sphere)$ and $\delta > 0$. Then, with probability at least $1 - \delta$, for all $\rho \in \calP(\Sphere)$ and $\lambda > 0$, there exists $C > 0$ such that
    \begin{align} 
        \swdone{\mu_n}{\nu_n}{\rho} \leq \swdone{\mu}{\nu}{\rho} &+ \left\{\kld{\rho}{\rho_0} + \log(1/\delta)\right\} \lambda^{-1} \nonumber \\
        &+ \lambda (\sigma^2 + \tau^2) n^{-1} + C \sigma_\star^2 \log(n)^{1/2}n^{-1/2} \eqsp.
    \end{align}
\end{corollary}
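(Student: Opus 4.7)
The plan is to observe that this corollary follows from specializing the generic PAC-Bayesian bound \Cref{thm:generic_bound} to the sub-Gaussian setting with $p = 1$, and then substituting the explicit expressions for $\varphi_{\mu,\nu,1}$ and $\psi_{\mu,\nu,1}(n)$ provided by the previously established propositions and corollaries.

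First, I would verify the moment-generating function condition \eqref{eq:momentgenfn} required by \Cref{thm:generic_bound}. This is exactly the content of \Cref{prop:momgenfn_subg}: under the sub-Gaussian assumption on $\mu$ and $\nu$ with variance proxies $\sigma^2$ and $\tau^2$, one has, for every $\theta \in \Sphere$ and every $\lambda > 0$, $\E[\exp(\lambda\{\wdone{\thss \mu_n}{\thss \nu_n} - \E[\wdone{\thss \mu_n}{\thss \nu_n}]\})] \leq \exp(\lambda^2 (\sigma^2 + \tau^2) n^{-1})$, so the hypothesis holds with $\varphi_{\mu,\nu,1} = \sigma^2 + \tau^2$. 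Second, I would verify the sample complexity hypothesis \eqref{eq:sample_cpx_sw} by invoking \Cref{cor:sample_comp_subg}, which gives a constant $C > 0$ such that $\E|\swdone{\mu_n}{\nu_n}{\rho} - \swdone{\mu}{\nu}{\rho}| \leq C \sigma_\star^{2}\log(n)^{1/2} n^{-1/2}$ for every $\rho \in \calP(\Sphere)$. Thus \eqref{eq:sample_cpx_sw} holds with $\psi_{\mu,\nu,1}(n) = C \sigma_\star^{2} \log(n)^{1/2} n^{-1/2}$.

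Third, I would plug $p = 1$ together with these two choices of $\varphi$ and $\psi$ into \Cref{thm:generic_bound}. This yields, with probability at least $1 - \delta$ over the draw of $\mu_n, \nu_n$, the lower bound $\swdone{\mu}{\nu}{\rho} \geq \swdone{\mu_n}{\nu_n}{\rho} - \lambda(\sigma^2+\tau^2) n^{-1} - \lambda^{-1}\{\kld{\rho}{\rho_0} + \log(1/\delta)\} - C \sigma_\star^{2}\log(n)^{1/2} n^{-1/2}$, simultaneously for all $\rho \in \calP(\Sphere)$ and $\lambda > 0$. Rearranging this inequality into an upper bound on $\swdone{\mu_n}{\nu_n}{\rho}$ gives precisely the stated conclusion. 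There is essentially no analytical obstacle here: all the real work is encapsulated in \Cref{prop:momgenfn_subg} and \Cref{cor:sample_comp_subg}, so what remains is bookkeeping. The only subtle point worth flagging is the restriction to $p = 1$, which is dictated by the use of the generalized McDiarmid inequality of \citep{kontorovicha14} in the proof of \Cref{prop:momgenfn_subg}: the required Lipschitz property of $\mathrm{W}_p^p$ holds for $p = 1$ but not for $p > 1$, so the statement is correctly limited to the order-one case.
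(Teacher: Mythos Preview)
Your proposal is correct and follows exactly the paper's approach: the paper derives this corollary by incorporating \Cref{prop:momgenfn_subg} (giving $\varphi_{\mu,\nu,1}=\sigma^2+\tau^2$) and \Cref{cor:sample_comp_subg} (giving $\psi_{\mu,\nu,1}(n)=C\sigma_\star^2\log(n)^{1/2}n^{-1/2}$) into \Cref{thm:generic_bound}. Your additional remark on why the result is restricted to $p=1$ is accurate and mirrors the paper's own discussion at the end of \Cref{subsec:bernstein}.
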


On the other hand, we leverage \Cref{prop:momgenfn_bernstein}, \Cref{cor:sample_comp_bern} and \Cref{thm:extension_generic_bound} to derive the specified bound below.

\begin{corollary} 
    Let $\mu, \nu \in \calP(\R^d)$. Assume that $\mu$ and $\nu$ satisfy the Bernstein condition, with parameters $(\sigma^2, b)$ and $(\tau^2, c)$ respectively. Let $\sigma_\star^2 = \max(\sigma^2, \tau^2)$ and $b_\star = \max(b,c)$. Let $\rho_0 \in \calP(\Sphere)$ and $\delta > 0$. Then, with probability at least $1 - \delta$, for all $\rho \in \calP(\Sphere)$ and $\lambda > 0$ s.t. $\lambda < (2b_\star)^{-1} n$, for $q > 2$, there exists $C(q) > 0$ such that
    \begin{align} 
        \swdone{\mu_n}{\nu_n}{\rho} \leq \swdone{\mu}{\nu}{\rho} &+ \left\{\kld{\rho}{\rho_0} + \log(1/\delta)\right\} \lambda^{-1} \nonumber \\
        &+ 2 \lambda \sigma_\star^2 (1-2b_\star\lambda n^{-1})^{-1} n^{-2} + C(q) \sigma_\star^{2/q} b_\star^{(q-2)/q} \log(n)^{1/2}n^{-1/2} \eqsp.
    \end{align}    
\end{corollary}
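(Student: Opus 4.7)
The proof will essentially be a direct assembly of three already-established ingredients: the generalized Catoni-style bound of \Cref{thm:extension_generic_bound}, the Bernstein-type control on the moment-generating function given by \Cref{prop:momgenfn_bernstein}, and the Bernstein sample-complexity bound of \Cref{cor:sample_comp_bern}. My plan is to specialize \Cref{thm:extension_generic_bound} to $p = 1$ and substitute the explicit formulas for $\varphi_{\mu,\nu,1}(\lambda,n)$ and $\psi_{\mu,\nu,1}(n)$ obtained under the two Bernstein assumptions on $\mu$ and $\nu$.

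First, I would verify that the hypotheses of \Cref{thm:extension_generic_bound} hold with $p = 1$. The required bound on the moment-generating function of $\mathrm{W}_1(\thss\mu_n,\thss\nu_n) - \E[\mathrm{W}_1(\thss\mu_n,\thss\nu_n)]$ is supplied by \Cref{prop:momgenfn_bernstein}, which applies to any $\lambda$ in the admissible range $\Lambda = (0, n/(2b_\star))$ and yields $\varphi_{\mu,\nu,1}(\lambda,n) = 2\sigma_\star^2 n^{-1}(1 - 2b_\star\lambda n^{-1})^{-1}$. The required sample-complexity bound is precisely \Cref{cor:sample_comp_bern}, which for $p = 1$ and any $q > 2$ produces $\psi_{\mu,\nu,1}(n) = C(q)\,\sigma_\star^{2/q} b_\star^{(q-2)/q} \log(n)^{1/2} n^{-1/2}$.

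Next, I would plug both expressions into the conclusion of \Cref{thm:extension_generic_bound}. Computing the first penalty term gives $(\lambda/n)\,\varphi_{\mu,\nu,1}(\lambda,n) = 2\lambda\sigma_\star^2(1 - 2b_\star\lambda n^{-1})^{-1} n^{-2}$, which matches the third term of the stated bound. The KL divergence and confidence terms $\lambda^{-1}\{\kld{\rho}{\rho_0} + \log(1/\delta)\}$ carry over unchanged, and $\psi_{\mu,\nu,1}(n)$ becomes the final term. Finally, rearranging the lower bound on $\mathrm{SW}_1(\mu,\nu;\rho)$ into the equivalent upper bound on $\mathrm{SW}_1(\mu_n,\nu_n;\rho)$ yields the claimed inequality.

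Since all three ingredients are invoked as black boxes, I do not anticipate any substantive obstacle. The only minor bookkeeping is to confirm that the admissibility condition $\lambda < n/(2b_\star)$ inherited from \Cref{prop:momgenfn_bernstein} is preserved in the statement, which it is by construction, and that the probability-at-least-$1-\delta$ event from \Cref{thm:extension_generic_bound} holds uniformly over all $\rho \in \calP(\Sphere)$, which it does since \Cref{thm:extension_generic_bound} provides precisely such a uniform bound.
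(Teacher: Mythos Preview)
Your proposal is correct and follows exactly the same approach as the paper: the paper simply states that the corollary is obtained by combining \Cref{prop:momgenfn_bernstein}, \Cref{cor:sample_comp_bern}, and \Cref{thm:extension_generic_bound}, and you spell out precisely that assembly with the correct computation of $(\lambda/n)\varphi_{\mu,\nu,1}(\lambda,n)$ and the specialization of $\psi_{\mu,\nu,1}(n)$ to $p=1$.
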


\section{Additional Experimental Details} \label{appendix:additional_exp}

All our numerical experiments presented in \Cref{sec:expes}
can be reproduced using the code we provided in \href{https://github.com/rubenohana/PAC-Bayesian_Sliced-Wasserstein}{https://github.com/rubenohana/PAC-Bayesian\_Sliced-Wasserstein}.

\subsection{Details on the Algorithmic Procedure}

For clarity, we specify \Cref{alg:PACSW-algorithm} 
when the optimization is performed over the space of von Mises-Fisher distributions (\Cref{def:vmf}). 
The procedure is detailed in \Cref{alg:vmf_sw}.

\begin{algorithm}[ht!]
    \caption{PAC-Bayes bound optimization for vMF-based SW} \label{alg:vmf_sw}%
    \begin{itemize}[label={}, leftmargin=3.35mm, itemsep=0mm]
        \item \textbf{Input:} Datasets: $x_{1:n} = (x_i)_{i=1}^n$, $y_{1:n} = (y_i)_{i=1}^n$
        \item \hphantom{\textbf{Input:}} SW order, number of slices: $p \in [1, \pinf)$, $n_S \in \nsets$
        \item \hphantom{\textbf{Input:}} Bound parameter: $\lambda \in \R^*_+$
        \item \hphantom{\textbf{Input:}} Number of iterations, learning rate: $T \in \nsets$, $\eta \in (0,1)$
        \item \hphantom{\textbf{Input:}} Initialized parameters: $(\mathrm{m}^{(0)}, \kappa^{(0)}) \in \Sphere \times \R^*_+$  
    \end{itemize} 
    \textbf{Output:} Final parameters: $(\mathrm{m}^{(T)}, \kappa^{(T)})$
    \begin{algorithmic}
        \FOR{$t \gets 0$ to $T-1$}
            \STATE $\rho^{(t)} \gets \vMF(\mathrm{m}^{(t)}, \kappa^{(t)})$
            \FOR{$k \gets 1$ to $n_S$}
                \STATE $\theta^{(t)}_k \sim \rho^{(t)}$ \citep[Algorithm 1]{davidson2018}
            \ENDFOR
            \STATE $\rho^{(t)}_n \gets n_S^{-1} \sum_{k=1}^n \updelta_{\theta^{(t)}_k}$
            \STATE $\calL(x_{1:n}, y_{1:n}, \rho^{(t)}, \lambda) \gets \swdp{\mu_n}{\nu_n}{\rho_n^{(t)}} - \lambda^{-1} \kld{\rho^{(t)}}{\rho^{(0)}}$ 
            \STATE $\begin{bmatrix} \mathrm{m}^{(t+1)} \\ \kappa^{(t+1)} \end{bmatrix}\, \gets \, \begin{bmatrix} \mathrm{m}^{(t)} \\ \kappa^{(t)} \end{bmatrix}\, +\, \eta \begin{bmatrix} \nabla_{\mathrm{m}} \calL(x_{1:n}, y_{1:n}, \rho^{(t)}, \lambda) \\ \nabla_\kappa \calL(x_{1:n}, y_{1:n}, \rho^{(t)}, \lambda) \end{bmatrix}$ 
        \ENDFOR
        \STATE \textbf{Return} $(\mathrm{m}^{(T)}, \kappa^{(T)})$
    \end{algorithmic}
\end{algorithm}

\subsection{Additional Results}\label{secapp: generated MNIST}

\Cref{fig:digits} displays additional qualitative results for the generative modeling experiment.
We observe that the images generated by DSW have a better quality than the ones produced by maxSW, even if DSW is not optimized at every training iteration. 

\begin{figure}[t!]
    \centering
    ~\hfill
    \includegraphics[width=3cm]{./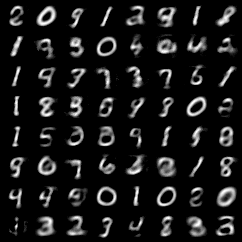}
    ~\hfill~
    \includegraphics[width=3cm]{./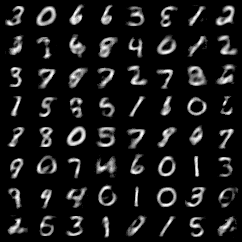}
    ~\hfill~
    \includegraphics[width=3cm]{./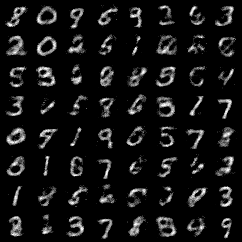}
    ~\hfill~
    \includegraphics[width=3cm]{./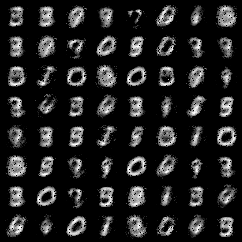}
    \hfill~
    \caption{Examples of generated MNIST digits. Left to right: DSW, DSW-10, maxSW, maxSW-10.}
    \label{fig:digits}
\end{figure}
On \Cref{fig:genmodelingPACSW} are shown the results obtained on the generative modeling experiment of \Cref{sec:expes} using the PAC-SW loss. PAC-SW can be competitive with DSW, but takes more time to execute as the computation of the KL cost is more costly than the regularization term of DSW. However, we observe that the distribution of slices that we learn generalizes well.

\begin{figure}[t!]
    \centering
    \includegraphics[width= 0.5\linewidth]{./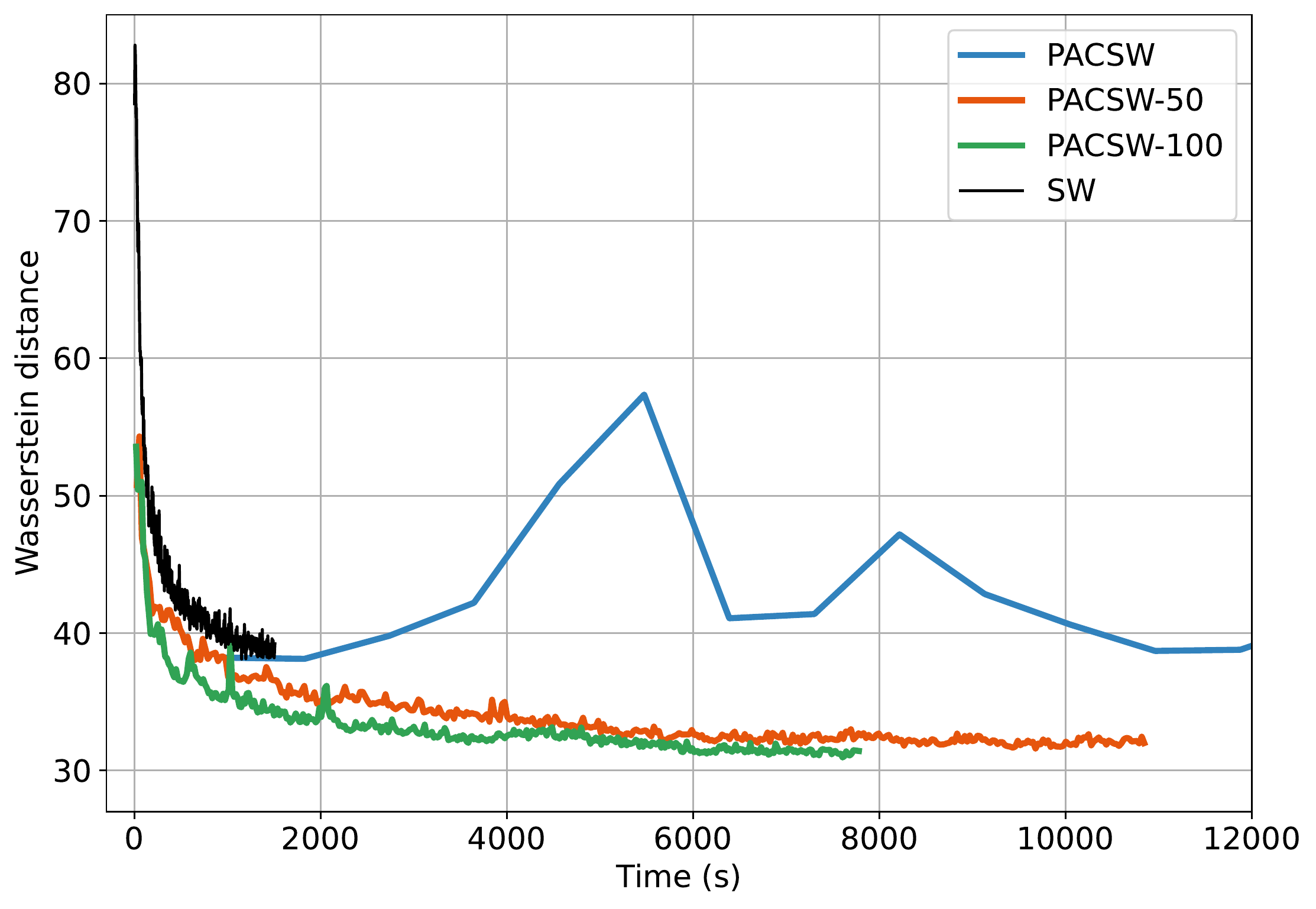}
    \caption{Generative modeling experiment when the slice distribution of PAC-SW is updated either at each iteration (PACSW), every 50 iterations (PACSW-50) or every 100 iterations (PACSW-100). Timing results of this experiment were obtained with a NVIDIA GPU A100 80 GB, compared to \Cref{fig:genmodevolution} which was on a NVIDIA V100.}
    \label{fig:genmodelingPACSW}
\end{figure}



\end{document}